\definecolor{svmdotblue}{RGB}{107,109,251}
\definecolor{svmdotgreen}{RGB}{0, 191, 191}
\definecolor{curveori}{RGB}{114,0,218}
\definecolor{curvemixup}{RGB}{0,176,240}
\definecolor{curvefactor}{RGB}{58,197,105}
\definecolor{curveunimix}{RGB}{255,0,0}
\useunder{\uline}{\ul}{}
\newtheorem{definition}{Definition}
\newtheorem{corollary}{Corollary}
\newtheorem{proof}{Proof}[section]
\newtheorem{basicsetting}{Setting}
\newtheorem{thm}{\bf Theorem}[section]
\newtheorem{lemma}{\bf Lemma}[section]
\newcommand{\cmark}{\ding{51}}%
\newcommand{\xmark}{\ding{55}}%
\newcommand{\topcaption}{%
    \setlength{\abovecaptionskip}{0pt}%
    \setlength{\belowcaptionskip}{0pt}%
    \caption}
\title{Towards Calibrated Model for Long-Tailed Visual Recognition from Prior Perspective}
\author{%
  Zhengzhuo Xu$^{1*}$, Zenghao Chai$^{1}$\thanks{Equal contribution.},~~Chun Yuan$^{1,2}$\thanks{Corresponding author.}\\
  $^{1}$Shenzhen International Graduate School, Tsinghua University\\
  $^{2}$Peng Cheng Laboratory\\
  \texttt{xzz20@mails.tsinghua.edu.cn},~~\texttt{zenghaochai@gmail.com},\\
  \texttt{yuanc@sz.tsinghua.edu.cn} \\
 }
\begin{document}
\maketitle
\begin{abstract}
    Real-world data universally confronts a severe class-imbalance problem and exhibits a \textit{long-tailed} distribution, i.e., most labels are associated with limited instances. The na\"ive models supervised by such datasets would prefer dominant labels, encounter a serious generalization challenge and become poorly calibrated. We propose two novel methods from the \textit{prior} perspective to alleviate this dilemma. First, we deduce a balance-oriented data augmentation named Uniform Mixup (UniMix) to promote \textit{mixup} in long-tailed scenarios, which adopts advanced mixing factor and sampler in favor of the minority. Second, motivated by the Bayesian theory, we figure out the Bayes Bias (Bayias), an inherent bias caused by the inconsistency of \textit{prior}, and compensate it as a modification on standard cross-entropy loss. We further prove that both the proposed methods ensure the classification \textit{calibration} theoretically and empirically. Extensive experiments verify that our strategies contribute to a better-calibrated model and their combination achieves state-of-the-art performance on CIFAR-LT, ImageNet-LT, and iNaturalist 2018.
\end{abstract}
\section{Introduction}\label{Sec.Introduction}

Balanced and large-scaled datasets \cite{Ijcv/ImageNet, Eccv/COCO} have promoted deep neural networks to achieve remarkable success in many visual tasks \cite{Cvpr/Bag-tricks-cls, Nips/Faster-RCNN, Iccv/Mask-RCNN}. However, real-world data typically exhibits a \textit{long-tailed} (LT) distribution \cite{Ijcv/Long-tailed-dataset, Cvpr/OLTR, Cvpr/INaturalist, Cvpr/LVIS}, and collecting a minority category (\textbf{tail}) sample always leads to more occurrences of common classes (\textbf{head}) \cite{Nips/Causal-LT, Corr/DevilinTail}, resulting in most labels associated with limited instances. The paucity of samples may cause insufficient feature learning on the tail classes \cite{Cvpr/BBN,Cvpr/CB,Cvpr/M2m,Cvpr/OLTR}, and such data imbalance will bias the model towards dominant labels \cite{Cvpr/EQL,Nips/Causal-LT,Cvpr/GroupSoftmax}. Hence, the generalization of minority categories is an enormous challenge.

The intuitive approaches such as directly over-sampling the tail \cite{Jair/SMOTE, Nn/OverOrUnderSample, Iccv/Gene-SMOTE, Eccv/Over-samlping1, Icml/Over-sampling2} or under-sampling the head \cite{Icic/Board_SMOTE, Nn/OverOrUnderSample, Tkde/Under-sampling1} will cause serious robustness problems. \textit{mixup} \cite{Iclr/mixup} and its extensions \cite{Icml/Manifold_Mixup, Iccv/CutMix, Eccv/remix} are effective feature improvement methods and contribute to a well-calibrated model in balanced datasets \cite{Nips/On_Mixup_Training, Corr/CalibrationForMixup}, i.e., \textit{the predicted confidence indicates actual accuracy likelihood} \cite{Icml/Calibration-NN, Nips/On_Mixup_Training}. However, \textit{mixup} is inadequately calibrated in an imbalanced LT scenario (Fig.\ref{Fig.confacc}). In this paper, we raise a conception called \textit{$\xi$-Aug} to analyze \textit{mixup} and figure out that it tends to generate more head-head pairs, resulting in unsatisfactory generalization of the tail. Therefore, we propose Uniform Mixup (UniMix), which adopts a tail-favored \textit{mixing factor} related to label \textit{prior} and a \textit{inverse sampling strategy} to encourage more head-tail pairs occurrence for better generalization and \textit{calibration}.

Previous works adjust the logits \textit{weight} \cite{Tnn/CSCE, Cvpr/CB, Cvpr/EQL, Corr/CDT} or \textit{margin} \cite{Nips/LDAM, Aaai/logit_adjustment} on standard \textit{Softmax} cross-entropy (CE) loss to tackle the bias towards dominant labels. We analyze the inconstancy of label \textit{prior}, which varies in LT train set and balanced test set, and pinpoint an inherent bias named Bayes Bias (Bayias). Based on the Bayesian theory, the \textit{posterior} is proportional to \textit{prior} times \textit{likelihood}. Hence, it's necessary to adjust the \textit{posterior} on train set by compensating different \textit{prior} for each class, which can serve as an additional \textit{margin} on CE. We further demonstrate that the Bayias-compensated CE ensures classification \textit{calibration} and propose a unified learning manner to combine Bayias with UniMix towards a better-calibrated model (see in Fig.\ref{Fig.confacc}). Furthermore, we suggest that bad calibrated approaches are counterproductive with each other, which provides a heuristic way to analyze the combined results of different feature improvement and loss modification methods (see in Tab.\ref{Tab.Ablation}).

In summary, our contributions are: 1) We raise the concept of \textit{$\xi$-Aug} to theoretically explain the reason of \textit{mixup}'s miscalibration in LT scenarios and propose Unimix (Sec.\ref{Sec.UniMix}) composed of novel mixing and sampling strategies to construct a more class-balanced virtual dataset. 2) We propose the Bayias (Sec.\ref{sec:bayias}) to compensate the bias incurred by different label \textit{prior}, which can be unified with UniMix by a training manner for better classification \textit{calibration}. 3) We conduct sufficient experiments to demonstrate that our method trains a well-calibrated model and achieves state-of-the-art results on CIFAR-10-LT, CIFAR-100-LT, ImageNet-LT, and iNaturalist 2018.

\sidecaptionvpos{figure}{c}
\begin{SCfigure*}[10][t]
\caption[width=0.65\textwidth]{Joint density plots of accuracy vs. confidence to measure the \textit{calibration} of classifiers on CIFAR-100-LT-100 during training. A well-calibrated classifier's density will lay around the red dot line $y=x$, indicating prediction score reflects the actual likelihood of accuracy. \textit{mixup} manages to regularize classifier on balanced datasets. However, both \textit{mixup} and its extensions tend to be overconfident in LT scenarios. Our UniMix reconstructs a more balanced dataset and Bayias-compensated CE erases \textit{prior} bias to ensure better \textit{calibration}. Without loss of accuracy, either of proposed methods trains the same classifier more calibrated and their combination achieves the best. How to measure \textit{calibration} and more visualization results are available in Appendix \ref{Apdx:caliexpadd}.}
\includegraphics[width=0.35\linewidth]{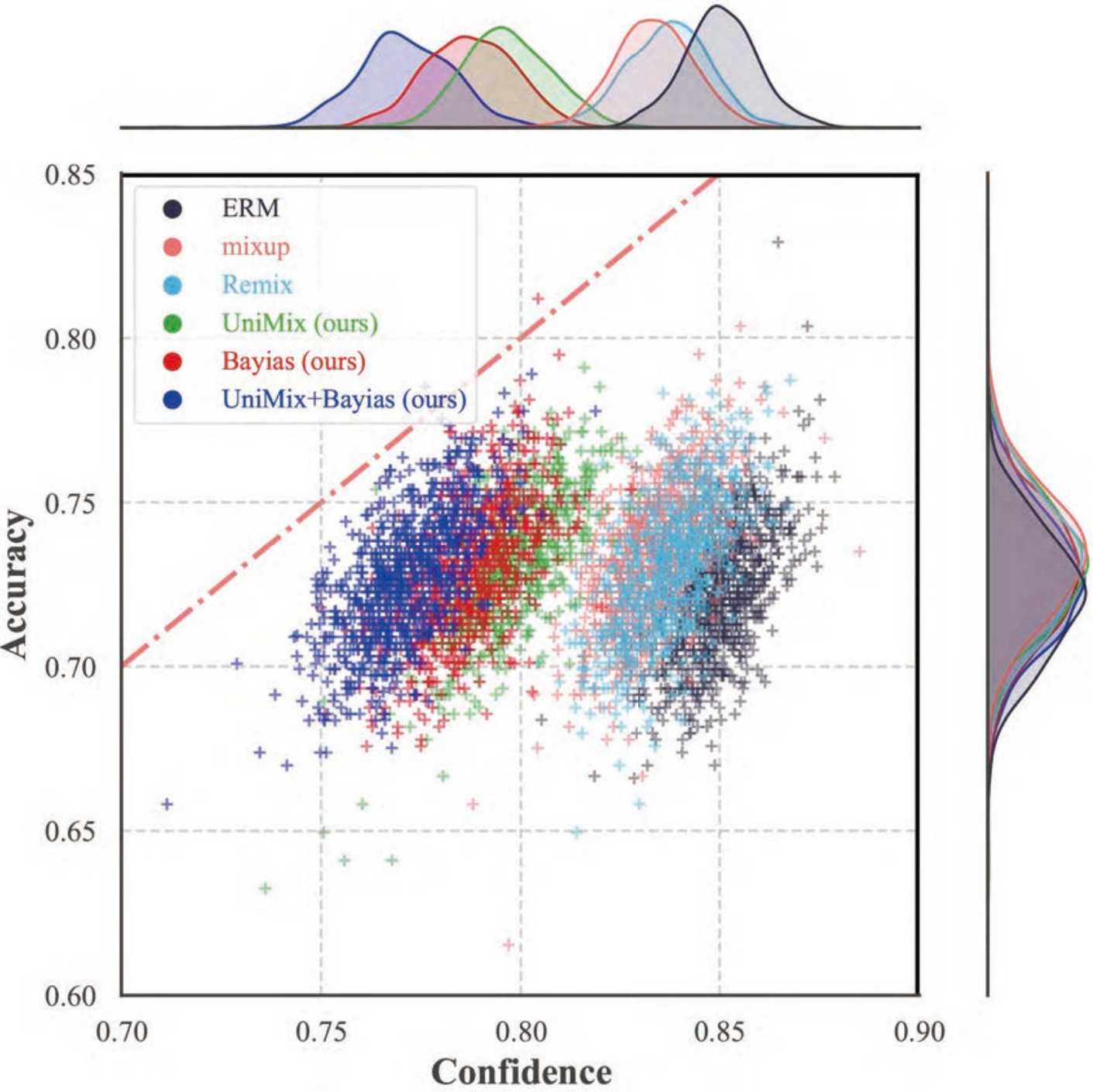}  
\label{Fig.confacc}
\vspace{-11pt}
\end{SCfigure*}

\section{Analysis of \textit{mixup}}  \label{limitmixup}

The core of supervised image classification is to find a $\theta$ parameterized mapping $\mathcal{F}_\theta:X\in \mathds{R}^{c\times h\times w} \mapsto Y \in \mathds{R}^{C\times 1}$ to estimate the empirical Dirac delta distribution $\mathds{P}_{\delta}(x,y) = \frac{1}{N}\sum\nolimits_{i=1}^N \delta(x_i,y_i)$ of $N$ instances $x\in \mathcal{X}$ and labels $y \in \mathcal{Y}$. The learning progress by minimizing Eq.\ref{Eq.ERM} is known as Empirical Risk Minimization (ERM), where $\mathcal{L}(Y=y_i,\mathcal{F}_\theta(X=x_i))$ is $x_i$'s conditional risk.
\begin{equation}
\label{Eq.ERM}
    \begin{aligned}
        R_{\delta}(\mathcal{F}_\theta) = \int_{x\in \mathcal{X}} \mathcal{L}\left(Y=y,\mathcal{F}_{\theta}(X=x)\right) d\mathds{P}_{\delta}(x,y) 
        = \frac{1}{N} \sum\nolimits_{i = 1}^N \mathcal{L}\left(Y=y_i,\mathcal{F}_{\theta}(X=x_i)\right)
    \end{aligned}
\end{equation}
To overcome the over-fitting caused by insufficient training of $N$ samples, \textit{mixup} utilizes Eq.\ref{Eq.mixup} to extend the feature space to its vicinity based on Vicinal Risk Minimization (VRM) \cite{Nips/VRM}.
\begin{equation}
\label{Eq.mixup}
\begin{aligned}
    \widetilde{x}=\xi \cdot x_i + (1-\xi) \cdot x_j \qquad \widetilde{y}=\xi \cdot y_i + (1-\xi) \cdot y_j
\end{aligned}
\end{equation}
where $\xi \sim Beta(\alpha,\alpha),\alpha \in[0,1]$, the sample pair $(x_i,y_i),(x_j,y_j)$ is drawn from training dataset $\mathcal{D}_{train}$ randomly. Hence, Eq.\ref{Eq.mixup} converts $\mathds{P}_{\delta}(X,Y)$ into empirical \textit{vicinal distribution} $\mathds{P}_{\nu}(\widetilde{x}, \widetilde{y}) = \frac{1}{N}\sum\nolimits_{i=1}^N \nu(\widetilde{x},\widetilde{y}|x_i,y_i)$, where $\nu(\cdot)$ describes the manner of finding virtual pairs $(\widetilde{x},\widetilde{y})$ in the vicinity of arbitrary sample $(x_i,y_i)$. Then, we construct a new dataset $\mathcal{D}_{\nu}:=\{(\widetilde{x}_k, \widetilde{y}_k)\}_{k=1}^M$ via Eq.\ref{Eq.mixup} and minimize the empirical vicinal risk by Vicinal Risk Minimization (VRM):
\begin{equation}
\label{Eq.VRM}
R_{\nu}(\mathcal{F}_\theta) = \int_{\widetilde{x}\in \widetilde{\mathcal{X}}} \mathcal{L}\left(\widetilde{{Y}}=\widetilde{y},\mathcal{F}_{\theta}(\widetilde{{X}}=\widetilde{x})\right) d\mathds{P}_{\nu}(\widetilde{x},\widetilde{y})
 = \frac{1}{M} \sum\nolimits_{i = 1}^M \mathcal{L}\left(\widetilde{{Y}}=\widetilde{y}_i,\mathcal{F}_{\theta}(\widetilde{{X}}=\widetilde{x}_i)\right)
\end{equation}
\textit{mixup} is proven to be effective on balanced dataset due to its improvement of \textit{calibration} \cite{Nips/On_Mixup_Training, Icml/Calibration-NN}, but it is unsatisfactory in LT scenarios (see in Tab.\ref{Tab.Ablation}). In Fig.\ref{Fig.confacc}, \textit{mixup} fails to train a calibrated model, which surpasses baseline (ERM) a little in accuracy and seldom contributes to \textit{calibration} (far from $y=x$). To analyze the insufficiency of \textit{mixup}, the definition of \textit{$\xi$-Aug} is raised.
\begin{definition}
\label{Def.Aug}
$\xi$-Aug.\quad The virtual sample $(\widetilde{x}_{i,j},\widetilde{y}_{i,j})$ generated by Eq.\ref{Eq.mixup} with mixing factor $\xi$ is defined as a $\xi$-Aug sample, which is a robust sample of class $y_i$ $($class $y_j)$ iff $\xi \geq 0.5(\xi < 0.5)$ that contributes to class $y_i$ (class $y_j$) in model's feature learning.
\end{definition}
In LT scenarios, we reasonably assume the instance number $n$ of each class is \textit{exponential} with parameter $\lambda$ \cite{Cvpr/CB} if indices are descending sorted by $n_{y_i}$, where $y_i \in [1,C]$ and $C$ is the total class number. Generally, the imbalance factor is defined as $\rho= n_{y_1} / n_{y_C}$ to measure how skewed the LT dataset is. It is easy to draw $\lambda = \ln \rho / (C-1)$. Hence, we can describe the LT dataset as Eq.\ref{Eq.P(X)}:
\begin{equation}
\mathds{P}(Y=y_i) = \frac{{\iint_{x_i \in \mathcal{X},y_j \in \mathcal{Y}} {\mathds{1}(X = x_i,Y =y_i) dx_i dy_j}}}{{\iint_{{x_i} \in \mathcal{X},{y_j} \in \mathcal{Y}} {\mathds{1}(X = {x_i},Y={y_j})dx_i dy_j}}}= \frac{\lambda}{e^{-\lambda}-e^{-\lambda C}} {e ^{-\lambda y_i}},y_i \in [1,C]
\label{Eq.P(X)}
\end{equation}
Then, we derive the following corollary to illustrate the limitation of na\"ive \textit{mixup} strategy.
\begin{corollary}
\label{Col.1}
    When $\bm{\xi \sim Beta(\alpha,\alpha)},\alpha\in [0,1]$, the newly mixed dataset $\mathcal{D}_\nu$ composed of $\xi$-Aug samples $(\widetilde{x}_{i,j}, \widetilde{y}_{i,j})$ follows the same long-tailed distribution as the origin dataset $\mathcal{D}_{train}$, where $(x_i,y_i)$ and $(x_j,y_j)$ are \textbf{randomly} sampled from $\mathcal{D}_{train}$. (See detail derivation in Appendix \ref{Apdx:Coro1}.)
 \begin{equation}
\label{Eq.ProbMixup}
\begin{aligned}
    \mathds{P}_{\textit{mixup}}(Y^* = {y_i}) &= \mathds{P}^2(Y = {y_i}) + \mathds{P}(Y = {y_i})\iint_{y_i \ne y_j} {Beta(\alpha ,\alpha )}\mathds{P}(Y = {y_j})d\xi dy_j \\
    &= \frac{\lambda}{e^{-\lambda}-e^{-\lambda C}} {e ^{-\lambda y_i}} , y_i \in [1,C]
\end{aligned}
\end{equation}
\end{corollary}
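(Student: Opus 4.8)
The plan is to compute $\mathds{P}_{\textit{mixup}}(Y^*=y_i)$ directly as the probability that a freshly generated $\xi$-Aug sample is assigned class $y_i$, by conditioning on the classes of the two parent samples and on the mixing factor $\xi$, and then exploiting the independence of $\xi$ from the draws together with the symmetry of $Beta(\alpha,\alpha)$.

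First I would fix a class $y_i$ and decompose the event $\{Y^*=y_i\}$ into three disjoint cases according to Definition~\ref{Def.Aug}: (i) both parents have class $y_i$, in which case the mixed sample is class $y_i$ for every $\xi$; (ii) the first parent has class $y_i$, the second has class $y_j\neq y_i$, and $\xi\geq 0.5$; (iii) the second parent has class $y_i$, the first has class $y_j\neq y_i$, and $\xi<0.5$. Since the two parents are drawn i.i.d.\ from $\mathds{P}(Y=\cdot)$ and $\xi\sim Beta(\alpha,\alpha)$ independently, each case factorizes: case (i) contributes $\mathds{P}^2(Y=y_i)$, case (ii) contributes $\mathds{P}(Y=y_i)\sum_{y_j\neq y_i}\mathds{P}(Y=y_j)\,\mathds{P}(\xi\geq 0.5)$, and case (iii) the analogous term with $\mathds{P}(\xi<0.5)$.

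Next I would use that $Beta(\alpha,\alpha)$ is symmetric about $1/2$ for every $\alpha\in[0,1]$, so $\mathds{P}(\xi\geq 0.5)=\mathds{P}(\xi<0.5)=\tfrac12$; adding the contributions of cases (ii) and (iii) then yields exactly $\mathds{P}(Y=y_i)\sum_{y_j\neq y_i}\mathds{P}(Y=y_j)$, which is precisely the term $\mathds{P}(Y=y_i)\iint_{y_i\neq y_j}Beta(\alpha,\alpha)\mathds{P}(Y=y_j)\,d\xi\,dy_j$ appearing in Eq.~\ref{Eq.ProbMixup} (the $\xi$-integral of the full Beta density is $1$, i.e.\ $\tfrac12+\tfrac12$ from the two halves). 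Combining with case (i),
\[
\mathds{P}_{\textit{mixup}}(Y^*=y_i)=\mathds{P}^2(Y=y_i)+\mathds{P}(Y=y_i)\bigl(1-\mathds{P}(Y=y_i)\bigr)=\mathds{P}(Y=y_i).
\]
Finally I would substitute the closed form $\mathds{P}(Y=y_i)=\frac{\lambda}{e^{-\lambda}-e^{-\lambda C}}e^{-\lambda y_i}$ from Eq.~\ref{Eq.P(X)} into the right-hand side, recovering the second line of Eq.~\ref{Eq.ProbMixup} and showing that $\mathcal{D}_\nu$ inherits the original exponential long-tailed profile.

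The calculation is elementary; the only points requiring care are the bookkeeping of the $y_j=y_i$ overlap so that it is not double counted (handled by making the three cases disjoint), treating $\xi=0.5$ as a measure-zero boundary, and — the conceptually important observation — noting that the result is completely independent of $\alpha$. Hence no choice of the na\"ive symmetric mixing factor $Beta(\alpha,\alpha)$ together with uniform pair sampling can rebalance $\mathcal{D}_\nu$, which is exactly the limitation motivating the tail-favored mixing factor and inverse sampler of UniMix.
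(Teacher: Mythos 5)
Your proposal is correct and follows essentially the same route as the paper's own derivation in Appendix A.2: the identical three-case decomposition (both parents in $y_i$; one parent in $y_i$ with $\xi$ favoring it), the symmetry of $Beta(\alpha,\alpha)$ giving each half-mass $\tfrac12$, and the algebraic collapse to $\mathds{P}^2(Y=y_i)+\mathds{P}(Y=y_i)(1-\mathds{P}(Y=y_i))=\mathds{P}(Y=y_i)$. Your added observations (disjointness to avoid double counting, the measure-zero boundary at $\xi=0.5$, and the $\alpha$-independence of the conclusion) are sound refinements but do not change the argument.
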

In \textit{mixup}, the probability of any $(\widetilde{x}_{i,j},\widetilde{y}_{i,j})$ belongs to class $y_i$ or class $y_j$ is strictly determined by $\xi$ and $\mathds{E}(\xi)\equiv0.5$. Furthermore, both $(x_i,y_i)$ and $(x_j,y_j)$ are randomly sampled and concentrated on the head instead of tail, resulting in that the head classes get more \textit{$\xi$-Aug} samples than the tail ones.

\section{Methodology}

\subsection{UniMix: balance-oriented feature improvement} \label{Sec.UniMix}
\textit{mixup} and its extensions tend to generate head-majority pseudo data, which leads to the deficiency on the tail feature learning and results in a bad-calibrated model. To obtain a more balanced dataset $\mathcal{D}_\nu$, we propose the UniMix Factor $\xi^*_{i,j}$ related to the \textit{prior} probability of each category and a novel UniMix Sampler to obtain sample pairs. Our motivation is to generate comparable \textit{$\xi$-Aug} samples of each class for better generalization and \textit{calibration}.

\paragraph{UniMix Factor.} \label{subsubsection:balmixfactor}
Specifically, the \textit{prior} in imbalanced train set and balanced test set of class $y_i$ is defined as $\mathds{P}_{train}(Y=y_i)\triangleq \pi_{y_i}$, and $\mathds{P}_{test}(Y=y_i)\equiv 1/C$, respectively. We design the UniMix Factor $\xi_{i,j}^*$ for each virtual sample $\widetilde{x}_{i,j}$ instead of a fixed $\xi$ in \textit{mixup}. Consider adjusting $\xi$ with the class \textit{prior} probability $\pi_{y_i},\pi_{y_j}$. It is intuitive that a proper factor $\xi_{i,j}=\pi_{y_j}/(\pi_{y_i} + \pi_{y_j})$ ensures $\widetilde{x}_{i,j}$ to be a \textit{$\xi$-Aug} sample of class $y_j$ if $\pi_{y_i} \geq \pi_{y_j}$, i.e., class $y_i$ occupies more instances than class $y_j$.

However, $\xi_{i,j}$ is uniquely determined by $\pi_{y_i},\pi_{y_j}$. To improve the robustness and generalization, original $Beta(\alpha,\alpha)$ is adjusted to obtain UniMix Factor $\xi_{i,j}^*$. Notice that $\xi$ is close to $0$ or $1$ and symmetric at $0.5$, we transform it to maximize the probability of $\xi_{i,j} = \pi_{y_j}/(\pi_{y_i} + \pi_{y_j})$ and its vicinity. Specifically, if note $\xi \sim Beta(\alpha,\alpha)$ as $f(\xi;\alpha,\alpha)$, we define $\xi_{i,j}^* \sim \mathscr{U}(\pi_{y_i},\pi_{y_j},\alpha,\alpha)$ as:
\begin{equation}
   \xi_{i,j}^* \sim \mathscr{U}(\pi_{y_i},\pi_{y_j},\alpha,\alpha) = 
        \left\{
            \begin{aligned}
            &f(\xi_{i,j}^*-\frac{\pi_{y_j}}{\pi_{y_i}+\pi_{y_j}}+1;\alpha,\alpha), & \xi_{i,j}^* \in [0, \frac{\pi_{y_j}}{\pi_{y_i}+\pi_{y_j}}); \\
            &f(\xi_{i,j}^*-\frac{\pi_{y_j}}{\pi_{y_i}+\pi_{y_j}};\alpha,\alpha), & \xi_{i,j}^* \in [\frac{\pi_{y_j}}{\pi_{y_i}+\pi_{y_j}},1]
            \end{aligned}
        \right.
\label{Eq.UnimixFactor}
\end{equation}
Rethink Eq.\ref{Eq.mixup} with $\xi_{i,j}^*$ described as Eq.\ref{Eq.UnimixFactor}:
\begin{equation}
    \begin{aligned}
    \widetilde{x}_{i,j} = \xi_{i,j}^* \cdot x_i + (1-\xi_{i,j}^*) \cdot x_j \qquad \widetilde{y}_{i,j} = \xi_{i,j}^* \cdot y_i + (1-\xi_{i,j}^*) \cdot y_j
    \end{aligned}
\label{Eq.UniMix}
\end{equation}
We have the following corollary to show how $\xi^*_{i,j}$ ameliorates the imbalance of $\mathcal{D}_{train}$:
\begin{corollary}
\label{Col.2}
When $\bm{\xi_{i,j}^* \sim\mathscr{U}(\pi_{y_i},\pi_{y_j},\alpha,\alpha)},\alpha\in [0,1]$, the newly mixed dataset $\mathcal{D}_\nu$ composed of $\xi$-Aug samples $(\widetilde{x}_{i,j}, \widetilde{y}_{i,j})$ follows a middle-majority distribution (see Fig.\ref{Fig.curve-200-100}), where $(x _i,y_i)$ and $(x_j,y_j)$ are both \textbf{randomly} sampled from $\mathcal{D}_{train}$. (See detail derivation in Appendix \ref{Apdx:Coro2}.)
\begin{equation}
\label{Eq.ProbIM}
\begin{aligned}
    \mathds{P}_{\textit{mixup}}^*(Y^* = {y_i}) &= \mathds{P}(Y = {y_i})\int_{y_j < {y_i}} {\mathds{1}\left(\int {\xi^*_{i,j} \mathscr{U}(\pi_i,\pi_j,\alpha ,\alpha )d\xi^*_{i,j} \geq 0.5 }\right) \mathds{P}(Y = {y_j})dy_j} \\
    & = \frac{\lambda }{{{{\left( {{e^{ - \lambda }} - {e^{ - \lambda C}}} \right)}^2}}}\left( {{e^{ - \lambda \left( {{y_i} + 1} \right)}} - {e^{ - 2\lambda {y_i}}}} \right) , y_i \in [1,C]
\end{aligned}
\end{equation}
\end{corollary}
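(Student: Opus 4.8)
The plan is to parallel the argument behind Corollary~\ref{Col.1}: reduce the statement to a per-sample question --- when does a virtual sample $\widetilde{x}_{i,j}$ count toward class $y_i$? --- and then average over the i.i.d.\ draw of the pair. By Definition~\ref{Def.Aug} read through the indicator in Eq.\ref{Eq.ProbIM}, $\widetilde{x}_{i,j}$ is a $\xi$-Aug sample of class $y_i$ precisely when its expected weight on $x_i$ satisfies $\mathds{E}_{\xi^*_{i,j}\sim\mathscr{U}(\pi_{y_i},\pi_{y_j},\alpha,\alpha)}[\xi^*_{i,j}]\ge 0.5$. Since $(x_i,y_i)$ and $(x_j,y_j)$ are sampled independently from $\mathcal{D}_{train}$, whose class law is Eq.\ref{Eq.P(X)}, the probability $\mathds{P}^*_{\textit{mixup}}(Y^*=y_i)$ that $\widetilde{x}_{i,j}$ is a $\xi$-Aug sample of class $y_i$ factorizes as $\mathds{P}(Y=y_i)\cdot\mathds{P}_{y_j}\!\big(\mathds{E}[\xi^*_{i,j}]\ge 0.5\big)$. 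So the whole proof reduces to describing the event $\{\mathds{E}[\xi^*_{i,j}]\ge 0.5\}$ in terms of the priors.

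The key computation is the mean of $\mathscr{U}(\pi_{y_i},\pi_{y_j},\alpha,\alpha)$. Setting $c:=\pi_{y_j}/(\pi_{y_i}+\pi_{y_j})$, Eq.\ref{Eq.UnimixFactor} exhibits $\mathscr{U}$ as the law of $(Z+c)\bmod 1$ for $Z\sim Beta(\alpha,\alpha)$: the branch on $[0,c)$ carries the density $f(\,\cdot\,-c+1;\alpha,\alpha)$ and the branch on $[c,1]$ carries $f(\,\cdot\,-c;\alpha,\alpha)$. Integrating this piecewise density gives $\mathds{E}[\xi^*_{i,j}]=\tfrac12+c-F_\alpha(c)$, where $F_\alpha$ denotes the $Beta(\alpha,\alpha)$ cdf, so $\mathds{E}[\xi^*_{i,j}]\ge 0.5 \iff c\ge F_\alpha(c)$. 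Now $Beta(\alpha,\alpha)$ is symmetric about $\tfrac12$ (hence $F_\alpha(\tfrac12)=\tfrac12$ and $F_\alpha(1-c)=1-F_\alpha(c)$) and, for $\alpha\in(0,1)$, its density is U-shaped with value $<1$ at $\tfrac12$; consequently $h(c):=c-F_\alpha(c)$ is odd about $\tfrac12$, has $h'=1-f_\alpha$ changing sign exactly twice, and therefore vanishes only at $0,\tfrac12,1$ while being negative on $(0,\tfrac12)$ and positive on $(\tfrac12,1)$. Thus $\mathds{E}[\xi^*_{i,j}]\ge 0.5 \iff c\ge\tfrac12 \iff \pi_{y_j}\ge\pi_{y_i}$, which under the descending-by-count indexing of Eq.\ref{Eq.P(X)} is $y_j\le y_i$. (For $\alpha=1$, $\mathscr{U}$ is uniform and $\mathds{E}[\xi^*_{i,j}]\equiv\tfrac12$; this degenerate case is read off as the tie boundary.)

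With the event identified, and discarding the diagonal $y_j=y_i$ (measure zero in the continuous model of Eq.\ref{Eq.P(X)}), the per-sample characterization gives exactly the first equality of Eq.\ref{Eq.ProbIM}, $\mathds{P}^*_{\textit{mixup}}(Y^*=y_i)=\mathds{P}(Y=y_i)\int_{y_j<y_i}\mathds{P}(Y=y_j)\,dy_j$. Substituting $\mathds{P}(Y=y)=\tfrac{\lambda}{e^{-\lambda}-e^{-\lambda C}}e^{-\lambda y}$ and using $\int_1^{y_i}e^{-\lambda y_j}\,dy_j=\tfrac1\lambda(e^{-\lambda}-e^{-\lambda y_i})$ collapses this to the claimed closed form $\tfrac{\lambda}{(e^{-\lambda}-e^{-\lambda C})^2}(e^{-\lambda(y_i+1)}-e^{-2\lambda y_i})$. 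To certify the \emph{middle-majority} shape, I would finally examine $g(y):=e^{-\lambda(y+1)}-e^{-2\lambda y}$ on $[1,C]$: $g'(y)=\lambda e^{-\lambda y}(2e^{-\lambda y}-e^{-\lambda})$ changes sign once, from $+$ to $-$, at $y^\star=1+\tfrac{\ln 2}{\lambda}$, which lies in $(1,C)$ whenever $\rho>2$; moreover $g(1)=0$ (the head class is never the minority of a pair, so it gets no $\xi$-Aug sample) and $g(C)=e^{-\lambda(C+1)}-e^{-2\lambda C}>0$, so the mass peaks strictly between head and tail.

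I expect the second step to be the real obstacle: spotting the circular-shift structure in Eq.\ref{Eq.UnimixFactor}, doing the mean integral cleanly, and --- above all --- pinning down the sign of $c-F_\alpha(c)$ from the symmetry and U-shape of $Beta(\alpha,\alpha)$, while correctly disposing of the ties $\pi_{y_i}=\pi_{y_j}$ and the degenerate endpoint $\alpha=1$. The remaining steps are a one-line integral and an elementary first-derivative test.
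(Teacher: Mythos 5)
Your proposal is correct and follows the same route as the paper's Appendix \ref{Apdx:Coro2} proof: factor $\mathds{P}^*_{\textit{mixup}}(Y^*=y_i)$ as $\mathds{P}(Y=y_i)\int_1^{y_i}\mathds{P}(Y=y_j)\,dy_j$ via the condition that the mean of $\mathscr{U}(\pi_{y_i},\pi_{y_j},\alpha,\alpha)$ reaches $0.5$ exactly when $\pi_{y_j}\ge\pi_{y_i}$ (equivalently $y_j\le y_i$ under the descending indexing), then evaluate the elementary exponential integral to obtain the closed form. Where you go beyond the paper is welcome rather than divergent: the paper merely asserts that the threshold condition ``generalizes'' from the point mass at $\pi_{y_j}/(\pi_{y_i}+\pi_{y_j})$, whereas you actually compute $\mathds{E}[\xi^*_{i,j}]=\tfrac12+c-F_\alpha(c)$ via the circular-shift representation of Eq.\ref{Eq.UnimixFactor} and settle the sign of $c-F_\alpha(c)$ from the symmetry and U-shape of $Beta(\alpha,\alpha)$, and you likewise make the ``middle-majority'' claim precise with the first-derivative test at $y^\star=1+\ln 2/\lambda$ together with $g(1)=0$ and $g(C)>0$, which the paper leaves at ``it's easy to find a derivative zero point in range $[1,C]$.''
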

\paragraph{UniMix Sampler.} \label{subsubsection:unimixsampler}
\begin{wrapfigure}{}{0.39\textwidth}
\centering
\includegraphics[width=0.39\textwidth]{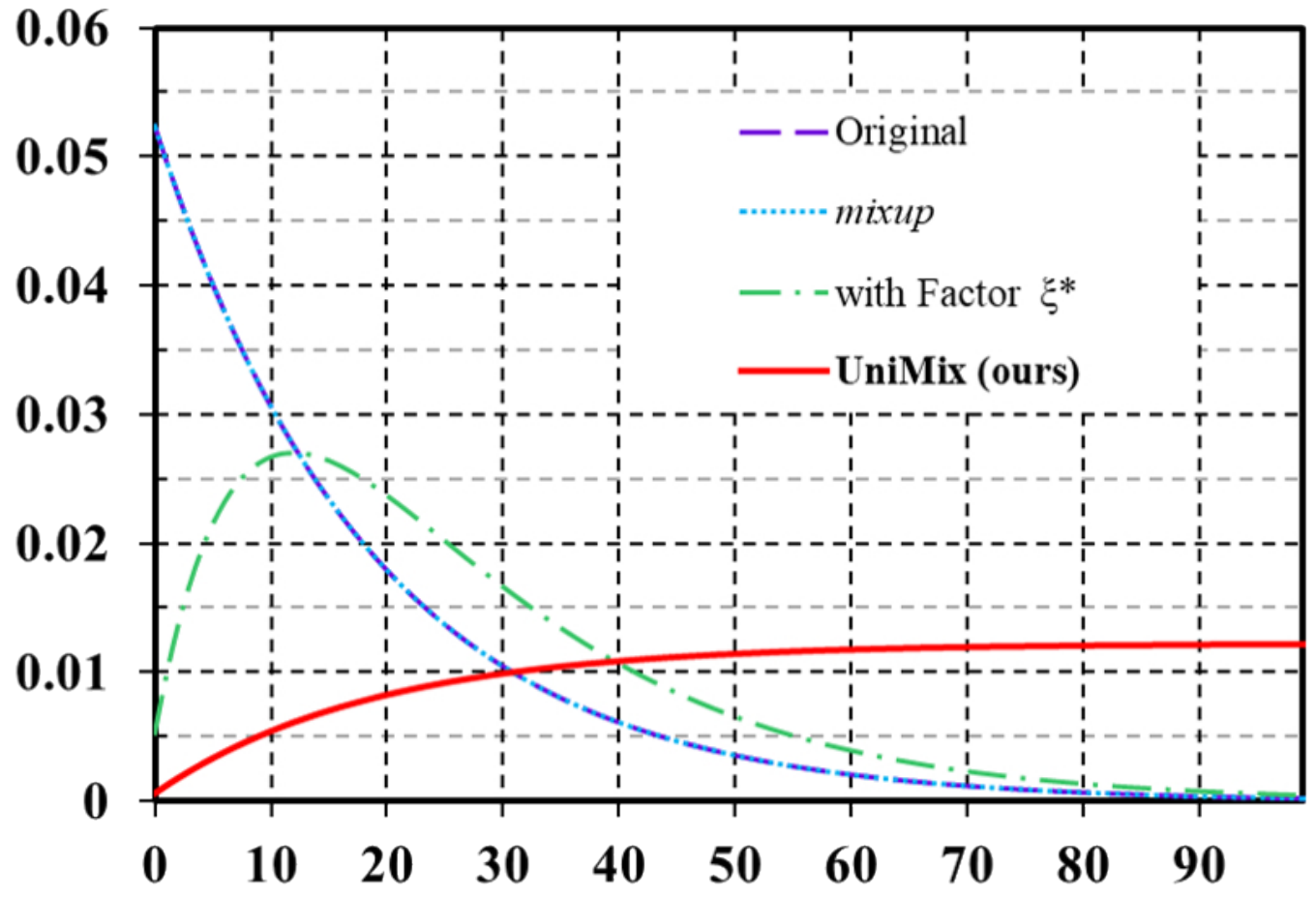}
\topcaption{\footnotesize Visualization of \textit{$\xi$-Aug} samples distribution ($C=100,\rho=200$) in Corollary\ref{Col.1},\ref{Col.2},\ref{Col.3}. $x$-axis: class indices. $y$-axis: probability of each class. \textit{mixup} (\textcolor{curvemixup}{blue}) exhibits the same LT distribution as origin (\textcolor{curveori}{purple}). $\xi^*$ (\textcolor{curvefactor}{green}) alleviates such situation and the full pipeline ($\tau=-1$) (\textcolor{curveunimix}{red}) constructs a more uniform distributed dataset. See more results in Appendix \ref{Apdx:addcurve}.}
\label{Fig.curve-200-100}
\vspace{-15pt}
\end{wrapfigure}

UniMix Factor facilitates \textit{$\xi$-Aug} samples more balance-distributed over all classes. However, most samples are still \textit{$\xi$-Aug} for the head or middle (see Fig.\ref{Fig.curve-200-100}(\textcolor{curvefactor}{green})). Actually, the constraint that pair $x_i, x_j$ drawn from the head and tail respectively is preferred, which dominantly generates \textit{$\xi$-Aug} samples for tail classes with $\xi_{i,j}^*$. To this end, we consider sample $x_j$ from $\mathcal{D}_{train}$ with probability inverse to the label \textit{prior}:
\begin{equation}
    {\mathds{P}_{inv}}(Y = {y_i}) = \frac{{\mathds{P}^\tau(Y = {y_i})}}{{\int_{{y_j} \in \mathcal{Y}} {\mathds{P}^\tau(Y = {y_j})dy_j} }}
    \label{Eq.reversesample} 
\end{equation}
When $\tau=1$, UniMix Sampler is equivalent to a random sampler. $\tau < 1$ indicates that $x_j$ has higher probability drawn from tail class. Note that $x_i$ is still randomly sampled from $\mathcal{D}_{train}$, i.e., it's most likely drawn from the majority class. The virtual sample $\widetilde{x}_{i,j}$ obtained in this manner is mainly a \textit{$\xi$-Aug} sample of the tail composite with $x_i$ from the head. Hence Corollary\ref{Col.3} is derived:
\begin{corollary}
\label{Col.3}
When $\bm{\xi_{i,j}^* \sim \mathscr{U}(\pi_{y_i},\pi_{y_j},\alpha,\alpha)},\alpha\in [0,1]$, the newly mixed dataset $\mathcal{D}_\nu$ composed of $\xi$-Aug samples $(\widetilde{x}_{i,j}, \widetilde{y}_{i,j})$ follows a tail-majority distribution (see Fig.\ref{Fig.curve-200-100}), where $(x _i,y_i)$ is \textbf{randomly} and $(x_j,y_j)$ is \textbf{inversely} sampled from $\mathcal{D}_{train}$, respectively. (See detail derivation in Appendix \ref{Apdx:Coro3}.)
\begin{equation}
    \label{Eq.P_ours}
    \begin{aligned}
        {\mathds{P}_{UniMix}}({Y^*} = {y_i}) &= \mathds{P}(Y = {y_i})\int_{y_j < {y_i}} { \mathds{1}\left(\int {\xi^*_{i,j} \mathscr{U}(\pi_i,\pi_j,\alpha ,\alpha )d\xi^*_{i,j} \geq 0.5 }\right) \mathds{P}_{inv}(Y=y_j)dy_j}  \\
        &= \frac{\lambda }{{\left( {{e^{ - \lambda }} - {e^{ - \lambda C}}} \right)\left( {{e^{ - \lambda \tau C}} - {e^{ - \lambda \tau}}} \right)}}\left( {{e^{ - \lambda {y_i}\left( {\tau  + 1} \right)}} - {e^{ - \lambda \left( {\tau  + {y_i}} \right)}}} \right), y_i \in [1,C]
    \end{aligned}
\end{equation}
\end{corollary}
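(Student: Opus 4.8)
The plan is to reuse the three‑stage template that underlies Corollaries~\ref{Col.1} and~\ref{Col.2}, working throughout (as in Eq.~\ref{Eq.P(X)}) with the continuum relaxation of the class index: (i) read off the first line of Eq.~\ref{Eq.P_ours} from the two samplers and Definition~\ref{Def.Aug}; (ii) collapse the $\xi$-Aug indicator into an explicit range of $y_j$; (iii) evaluate the remaining one‑dimensional exponential integrals. For (i): a UniMix virtual point $(\widetilde x_{i,j},\widetilde y_{i,j})$ is built from $x_i$ produced by the ordinary random sampler and $x_j$ produced by the inverse sampler of Eq.~\ref{Eq.reversesample}, two independent draws, so $x_i$ carries class $y_i$ with probability $\mathds{P}(Y=y_i)$ (the exponential prior of Eq.~\ref{Eq.P(X)}) and $x_j$ carries class $y_j$ with probability $\mathds{P}_{inv}(Y=y_j)$. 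By Definition~\ref{Def.Aug} the point is a $\xi$-Aug sample of class $y_i$ exactly when $\xi^*_{i,j}\ge 0.5$; after integrating out $\xi^*_{i,j}\sim\mathscr{U}(\pi_{y_i},\pi_{y_j},\alpha,\alpha)$ this event is recorded by the factor $\mathds{1}\big(\int\xi^*_{i,j}\,\mathscr{U}(\pi_i,\pi_j,\alpha,\alpha)\,d\xi^*_{i,j}\ge 0.5\big)$. Multiplying the two sampling probabilities and integrating over $y_j$ gives the first line of Eq.~\ref{Eq.P_ours}, whose $y_j<y_i$ range and indicator together encode the assignment rule.

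The crux is step (ii): showing the indicator equals $\mathds{1}(y_j<y_i)$. Put $p=\pi_{y_j}/(\pi_{y_i}+\pi_{y_j})$. By construction in Eq.~\ref{Eq.UnimixFactor}, $\mathscr{U}(\pi_{y_i},\pi_{y_j},\alpha,\alpha)$ is the circular shift by $p$ of the $Beta(\alpha,\alpha)$ density $f$. Since $f$ is symmetric about $1/2$, splitting the domain at $p$ and changing variables gives $\int\xi^*_{i,j}\,\mathscr{U}\,d\xi^*_{i,j}=\tfrac12+p-F(p)$, where $F$ is the $Beta(\alpha,\alpha)$ cdf. For $\alpha\in[0,1]$ the density $f$ is U‑shaped, hence non‑increasing on $[0,\tfrac12]$, so $F$ is concave on $[0,\tfrac12]$ and, by symmetry, convex on $[\tfrac12,1]$; comparing $F$ with the chord through $(0,0)$ and $(\tfrac12,\tfrac12)$ yields $F(p)\ge p$ on $[0,\tfrac12]$ and $F(p)\le p$ on $[\tfrac12,1]$. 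Therefore $\int\xi^*_{i,j}\,\mathscr{U}\,d\xi^*_{i,j}\ge 0.5\iff p\ge\tfrac12\iff\pi_{y_j}\ge\pi_{y_i}$, and because the prior of Eq.~\ref{Eq.P(X)} is strictly decreasing in the class index, this is $\iff y_j\le y_i$. Hence the indicator is identically $1$ on the domain $\{y_j<y_i\}$, and the first line reduces to $\mathds{P}(Y=y_i)\int_1^{y_i}\mathds{P}_{inv}(Y=y_j)\,dy_j$. (This is the only place where the precise shape of $\mathscr{U}$ matters; the flat case $\alpha=1$ is degenerate, since the shift of a uniform law is uniform and $\int\xi^*\mathscr{U}\,d\xi^*\equiv\tfrac12$, so the interesting regime is $\alpha<1$.)

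Step (iii) is routine bookkeeping. From Eq.~\ref{Eq.P(X)}, $\mathds{P}(Y=y_i)=\tfrac{\lambda}{e^{-\lambda}-e^{-\lambda C}}e^{-\lambda y_i}$. Substituting $\mathds{P}^\tau(Y=y)\propto e^{-\lambda\tau y}$ into Eq.~\ref{Eq.reversesample} and computing the normalizer $\int_1^C e^{-\lambda\tau y}\,dy=\tfrac{e^{-\lambda\tau}-e^{-\lambda\tau C}}{\lambda\tau}$ gives $\mathds{P}_{inv}(Y=y_j)=\tfrac{\lambda\tau\,e^{-\lambda\tau y_j}}{e^{-\lambda\tau}-e^{-\lambda\tau C}}$, whence $\int_1^{y_i}\mathds{P}_{inv}(Y=y_j)\,dy_j=\tfrac{e^{-\lambda\tau}-e^{-\lambda\tau y_i}}{e^{-\lambda\tau}-e^{-\lambda\tau C}}$. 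Multiplying by $\mathds{P}(Y=y_i)$, pulling $e^{-\lambda y_i}$ into the numerator, and flipping the sign of both differences in the last fraction yields
\[
\mathds{P}_{UniMix}(Y^*=y_i)=\frac{\lambda\,e^{-\lambda y_i}\big(e^{-\lambda\tau y_i}-e^{-\lambda\tau}\big)}{\big(e^{-\lambda}-e^{-\lambda C}\big)\big(e^{-\lambda\tau C}-e^{-\lambda\tau}\big)}=\frac{\lambda\big(e^{-\lambda y_i(\tau+1)}-e^{-\lambda(\tau+y_i)}\big)}{\big(e^{-\lambda}-e^{-\lambda C}\big)\big(e^{-\lambda\tau C}-e^{-\lambda\tau}\big)},
\]
which is the second line of Eq.~\ref{Eq.P_ours}. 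A final one‑line check that this function of $y_i$ is unimodal with an interior maximizer that drifts toward $C$ as $\tau$ decreases confirms the ``tail‑majority'' claim and the $\tau=-1$ curve in Fig.~\ref{Fig.curve-200-100}. The main obstacle is step (ii); everything else is arithmetic.
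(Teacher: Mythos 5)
Your proposal is correct and follows the paper's overall template exactly: factor the probability as (random-sampler prior for $y_i$) $\times$ (indicator that the pair is a $\xi$-Aug sample for $y_i$) $\times$ (inverse-sampler prior for $y_j$), collapse the indicator to the range $y_j<y_i$, and evaluate the two exponential integrals; your step (iii) arithmetic, including $\mathds{P}_{inv}(Y=y_j)=\lambda\tau e^{-\lambda\tau y_j}/(e^{-\lambda\tau}-e^{-\lambda\tau C})$ and the final sign flip, reproduces Eq.~\ref{Eq.P_ours} verbatim. Where you genuinely diverge is the crux step (ii). The paper (in Appendix~\ref{Apdx:Coro2}, which Appendix~\ref{Apdx:Coro3} simply invokes) disposes of the indicator by asserting that the mean of $\xi^*_{i,j}$ under the shifted density $\mathscr{U}$ equals $\pi_{y_j}/(\pi_{y_i}+\pi_{y_j})$ and then ``generalizing'' from the constant-factor case. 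That assertion is not actually true: as your change-of-variables computation shows, the circular shift by $p$ of a $Beta(\alpha,\alpha)$ law has mean $\tfrac12+p-F(p)$, which equals $p$ only at $p=\tfrac12$. Your argument repairs this by proving the weaker but sufficient statement that $\tfrac12+p-F(p)\ge\tfrac12\iff p\ge\tfrac12$, using the U-shape of the $Beta(\alpha,\alpha)$ density for $\alpha<1$ to get $F(p)\ge p$ on $[0,\tfrac12]$ and $F(p)\le p$ on $[\tfrac12,1]$. So you buy a rigorous justification of the indicator reduction that the paper only gestures at, at the cost of a slightly longer argument; you also correctly isolate the degenerate boundary case $\alpha=1$ (where $\int\xi^*\mathscr{U}\,d\xi^*\equiv\tfrac12$ and the $\ge$ convention would make the indicator identically one), a caveat the paper does not mention even though its stated hypothesis is $\alpha\in[0,1]$.
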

With the proposed UniMix Factor and UniMix Sampler, we get the complete UniMix manner, which constructs a uniform \textit{$\xi$-Aug} samples distribution for VRM and greatly facilitates model's \textit{calibration} (See Fig.\ref{Fig.curve-200-100} (\textcolor{curveunimix}{red}) \& \ref{Fig.confacc}). We construct $\mathcal{D}_{\nu}:=\{(\widetilde{x}_k, \widetilde{y}_k)\}_{k=1}^M$ where $\{\widetilde{x}_k,\widetilde{y}_k\}$ is $(\widetilde{x}_{i,j},\widetilde{y}_{i,j})$ generated by $(x_i,y_i)$ and $(x_j,y_j)$. We conduct training via Eq.\ref{Eq.VRM} and the loss via VRM is available as:
\begin{equation}
\label{Eq.CalVRMLoss}
    \mathcal{L}(\widetilde{y}_k,\mathcal{F}_{\theta}(\widetilde{x}_k)) = \xi_{i,j}^*\mathcal{L}(y_i,\mathcal{F}_\theta(\widetilde{x}_{i,j}))+ (1-\xi_{i,j}^*)\mathcal{L}(y_j,\mathcal{F}_\theta(\widetilde{x}_{i,j}))
\end{equation}

\subsection{Bayias: an inherent bias in LT} \label{sec:bayias}
The bias between LT set and balanced set is ineluctable and numerous studies \cite{Cvpr/CB, Nips/Causal-LT, Nips/Rethinking-labels-LT} have demonstrated its existence. To eliminate the systematic bias that classifier tends to predict the head, we reconsider the parameters training process. Generally, a classifier can be modeled as:
\begin{equation}
\label{Eq.OrigalOptim}
    \begin{aligned}
    \hat{y}=\mathop{\arg\max}_{y_i \in \mathcal{Y}}\frac{e^{\sum\nolimits_{{d_i} \in D} [{{{({W^T})_{{y_i}}^{({d_i})}\mathcal{F}{{(x;\theta )}^{(d_i)}}] + {b_{{y_i}}}}}} }}{{\sum\nolimits_{{y_j} \in \mathcal{Y}} e^{\sum\nolimits_{{d_i} \in D} [{{{({W^T})_{{y_j}}^{({d_i})}\mathcal{F}{{(x;\theta )}^{(d_i)}}] + {b_{{y_j}}}}}} } }} \triangleq \mathop{\arg\max}_{y_i \in \mathcal{Y}}\frac{{{e^{\psi {{(x;\theta ,W,b)}_{{y_i}}}}}}}{{\sum\nolimits_{{y_j} \in \mathcal{Y}} {{e^{\psi {{(x;\theta ,W,b)}_{{y_j}}}}}} }}
    \end{aligned}
\end{equation}
where $\hat{y}$ indicts the predicted label, and $\mathcal{F}(x;\theta)\in \mathds{R}^{D\times1}$ is the $D$-dimension feature extracted by the backbone with parameter $\theta$. $W \in \mathds{R}^{D\times C}$ represents the parameter matrix of the classifier.

Previous works \cite{Cvpr/CB, Nips/Causal-LT} have demonstrated that it is not suitable for imbalance learning if one ignores such bias. In LT scenarios, the instances number in each class of the train set varies greatly, which means the corresponding \textit{prior} probability $\mathds{P}_{train}(Y=y)$ is highly skewed whereas the distribution on the test set $\mathds{P}_{test}(Y=y)$ is uniform.


According to Bayesian theory, \textit{posterior} is proportional to \textit{prior} times \textit{likelihood}. The supervised training process of $\psi(x;\theta,W,b)$ in Eq.\ref{Eq.OrigalOptim} can regard as the estimation of \textit{likelihood}, which is equivalent to get \textit{posterior} for inference in balanced dataset. Considering the difference of \textit{prior} during training and testing, we have the following theorem (See detail derivation in Appendix \ref{Apdx:baybiasexist}):
\begin{thm}
    For classification, let $\psi(x;\theta,W,b)$ be a hypothesis class of neural networks of input $X=x$, the classification with Softmax should contain the influence of prior, i.e., the predicted label during training should be:

    \begin{equation}
    \label{Eq.thmbias}
    \begin{aligned}
    \hat{y}=\mathop{\arg\max}_{y_i \in \mathcal{Y}}\frac{{{e^{\psi {{(x;\theta ,W,b)}_{{y_i}}}+\bm{\log(\pi_{y_i})+\log (C)}}}}}{{\sum\nolimits_{{y_j} \in \mathcal{Y}} {{e^{\psi {{(x;\theta ,W,b)}_{{y_j}}}+\bm{\log(\pi_{y_j})+\log (C)}}}}}}
    \end{aligned}
\end{equation}
\end{thm}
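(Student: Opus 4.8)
The plan is to argue purely from Bayes' rule, using the single structural assumption that the class-conditional feature distribution $\mathds{P}(X=x\mid Y=y_i)$ is identical at train and test time, so that only the class priors differ: $\mathds{P}_{train}(Y=y_i)=\pi_{y_i}$ while $\mathds{P}_{test}(Y=y_i)\equiv 1/C$. Under this assumption Bayes' rule gives $\mathds{P}_{train}(Y=y_i\mid X=x)\propto \mathds{P}(X=x\mid Y=y_i)\,\pi_{y_i}$ and $\mathds{P}_{test}(Y=y_i\mid X=x)\propto \mathds{P}(X=x\mid Y=y_i)\cdot\tfrac1C$, where in each line the proportionality constant (the evidence $\mathds{P}(X=x)$) depends on $x$ only. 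Dividing the two relations eliminates the likelihood and leaves $\mathds{P}_{test}(Y=y_i\mid X=x)\propto \mathds{P}_{train}(Y=y_i\mid X=x)\big/\big(C\,\pi_{y_i}\big)$.

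Next I would pin down what $\psi(x;\theta,W,b)$ actually fits. The Softmax--CE objective underlying Eq.\ref{Eq.OrigalOptim}, taken in expectation over the training distribution, is minimised exactly when $\mathrm{softmax}\!\big(\psi(x;\theta,W,b)\big)_{y_i}=\mathds{P}_{train}(Y=y_i\mid X=x)$; hence after training $e^{\psi_{y_i}}$ is, up to an $x$-dependent but class-independent factor, proportional to the training posterior, i.e.\ to $\mathds{P}(X=x\mid Y=y_i)\,\pi_{y_i}$. For the network to instead represent the \emph{likelihood} $\mathds{P}(X=x\mid Y=y_i)$ --- which by the first paragraph is all that is needed, since on the balanced test set the posterior is proportional to the likelihood --- the target of estimation must be shifted. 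This is achieved by fitting the \emph{corrected} scores $\psi_{y_i}+\log\pi_{y_i}$ to the training labels, plus the class-independent constant $\log C$ which merely records the reference prior $\log\mathds{P}_{test}(Y=y_i)=-\log C$; equivalently, the Softmax around which the training loss should be built is $e^{\psi_{y_i}+\log\pi_{y_i}+\log C}\big/\sum_{y_j} e^{\psi_{y_j}+\log\pi_{y_j}+\log C}$, whose $\arg\max$ is precisely Eq.\ref{Eq.thmbias}.

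Finally I would close the loop and check consistency. The additive $\log C$ is constant in $y_i$, so it drops out of every $\arg\max$ and plays no essential role beyond fixing the reference prior; and with $e^{\psi_{y_i}}\propto\mathds{P}(X=x\mid Y=y_i)$ the inference-time rule (no correction) becomes $\arg\max_{y_i}\psi_{y_i}=\arg\max_{y_i}\mathds{P}(X=x\mid Y=y_i)=\arg\max_{y_i}\mathds{P}_{test}(Y=y_i\mid X=x)$, i.e.\ the Bayes-optimal classifier on the balanced test set, confirming that the compensation is the right one. The step I expect to be the real crux is the second one: making rigorous the informal claim that ``$\psi$ estimates the likelihood'', namely that the population minimiser of Softmax--CE is the conditional class probability and that Softmax preserves proportionality up to a class-independent factor, so that adding $\log\pi_{y_i}$ to the logits genuinely re-targets the estimator from the training posterior onto the train/test-invariant likelihood. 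Everything else is bookkeeping with Bayes' rule and the uniformity of $\mathds{P}_{test}$.
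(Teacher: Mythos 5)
Your proposal is correct and follows essentially the same route as the paper's own derivation in Appendix \ref{Apdx:baybiasexist}: both apply Bayes' rule under the assumption of a train/test-invariant likelihood, divide the two posteriors to obtain $\mathds{P}_{test}(Y=y_i\mid X=x)\propto \mathds{P}_{train}(Y=y_i\mid X=x)/(C\,\pi_{y_i})$, identify the Softmax of $\psi$ with the training posterior, and read off the compensation $\log(\pi_{y_i})+\log(C)$ on the logits. Your explicit remark that the population minimiser of Softmax--CE is the conditional class probability just makes precise a step the paper states more informally, so no substantive difference remains.
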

In balanced datasets, all classes share the same \textit{prior}. Hence, the supervised model $\psi(x;\theta,W,b)$ could use the estimated \textit{likelihood} $\mathds{P}(X=x|Y=y)$ of train set to correctly obtain \textit{posterior} $\mathds{P}(Y=y|X=x)$ in test set. However, in LT datasets where $\mathds{P}_{train}(Y=y_i)=\pi_{y_i}$ and $\mathds{P}_{test}(Y=y_i)\equiv 1/C$, \textit{prior} cannot be regard as a constant over all classes any more. Due to the difference on \textit{prior}, the learned parameters $\theta,W,b\triangleq \Theta$ will yield class-level bias, i.e., the optimization direction is no longer as described in Eq.\ref{Eq.OrigalOptim}. Thus, the bias incurred by \textit{prior} should compensate at first. To correctness the bias for inferring, the offset term that model in LT dataset to compensate is:
\begin{equation}
\label{Eq.biascompensate1}
    \mathscr{B}_y = \log(\pi_y) + \log(C)
\end{equation}
Furthermore, the proposed Bayias $\mathscr{B}_y$ enables predicted probability reflecting the actual correctness likelihood, expressed as Theorem \ref{Thm.Calibration1}. (See detail derivation in Appendix \ref{apdx:baybiascaliprove}.)
\begin{thm}
\label{Thm.Calibration1}
    $\mathscr{B}_y$-compensated cross-entropy loss in Eq.\ref{Eq.b-ce} ensures classification \textit{calibration}. 
\begin{equation}
    \label{Eq.b-ce}
    \begin{aligned}
    \mathcal{L}_{\mathscr{B}}(y_i,\psi(x;\Theta)) = \log \left[ 1 + \sum\nolimits_{y_k \neq y_i} e^{(\mathscr{B}_{y_k}-\mathscr{B}_{y_i})} \cdot e^{ \psi(x;\Theta)_{y_k} - \psi(x;\Theta)_{y_i}}\right]
    \end{aligned}
\end{equation}
\end{thm}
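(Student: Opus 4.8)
The plan is to pass to the population risk and minimise it pointwise in $x$, exactly as one argues for ordinary cross-entropy. First I would observe that the loss $\mathcal{L}_{\mathscr{B}}(y_i,\psi(x;\Theta))$ in Eq.\ref{Eq.b-ce} is precisely the standard softmax cross-entropy evaluated on the \emph{compensated} logits $z_{y}:=\psi(x;\Theta)_{y}+\mathscr{B}_{y}$, because $\log\bigl[1+\sum_{y_k\ne y_i}e^{\mathscr{B}_{y_k}-\mathscr{B}_{y_i}}e^{\psi(x;\Theta)_{y_k}-\psi(x;\Theta)_{y_i}}\bigr]=-\log\bigl(e^{z_{y_i}}/\sum_{y_j}e^{z_{y_j}}\bigr)$. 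Consequently the expected $\mathscr{B}$-risk over $\mathcal{D}_{train}$ factorises as $\mathds{E}_{x}\bigl[\sum_{y_i}\mathds{P}_{train}(Y=y_i\mid x)\bigl(-\log\mathrm{softmax}(z)_{y_i}\bigr)\bigr]$, so it suffices to minimise the inner conditional risk for each fixed $x$.

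Next I would perform that pointwise minimisation. Fixing $x$ and writing $q_{y_i}:=\mathds{P}_{train}(Y=y_i\mid x)$, the conditional risk equals $H(q)+\mathrm{KL}\bigl(q\,\|\,\mathrm{softmax}(z)\bigr)$, which by Gibbs' inequality (strict convexity of $\mathrm{KL}$) is minimised exactly when $\mathrm{softmax}(z)=q$; hence at the optimum $\mathrm{softmax}\bigl(\psi(x;\Theta^\star)+\mathscr{B}\bigr)=\mathds{P}_{train}(\cdot\mid x)$, the optimiser $\psi(x;\Theta^\star)$ being determined only up to an additive constant, which is immaterial since the softmax is shift invariant. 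I would then invoke the label-shift hypothesis already used to derive Eq.\ref{Eq.thmbias} and Eq.\ref{Eq.biascompensate1}, namely $\mathds{P}_{train}(X=x\mid Y=y)=\mathds{P}_{test}(X=x\mid Y=y)$, and combine it with Bayes' rule, $\mathds{P}_{train}(Y=y)=\pi_{y}$ and $\mathds{P}_{test}(Y=y)\equiv 1/C$ to obtain $\mathds{P}_{train}(Y=y\mid x)=\pi_{y}\mathds{P}_{test}(Y=y\mid x)\big/\sum_{y_j}\pi_{y_j}\mathds{P}_{test}(Y=y_j\mid x)$.

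Finally I would substitute $\mathscr{B}_{y}=\log\pi_{y}+\log C$: then $e^{z_{y}}=C\,\pi_{y}\,e^{\psi(x;\Theta^\star)_{y}}$, the factor $C$ cancels in the softmax, and the optimality condition becomes $\pi_{y}e^{\psi(x;\Theta^\star)_{y}}\big/\sum_{y_j}\pi_{y_j}e^{\psi(x;\Theta^\star)_{y_j}}=\pi_{y}\mathds{P}_{test}(Y=y\mid x)\big/\sum_{y_j}\pi_{y_j}\mathds{P}_{test}(Y=y_j\mid x)$. Cancelling $\pi_{y}$ from both sides gives $e^{\psi(x;\Theta^\star)_{y}}\propto\mathds{P}_{test}(Y=y\mid x)$ with a proportionality constant independent of $y$, i.e.\ $\mathrm{softmax}\bigl(\psi(x;\Theta^\star)\bigr)_{y}=\mathds{P}_{test}(Y=y\mid x)$ for every $x$. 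Thus the uncompensated softmax head trained under $\mathcal{L}_{\mathscr{B}}$ reproduces the true balanced-test posterior, and calibration follows at once from its definition: with $\hat p(x):=\max_{y}\mathrm{softmax}(\psi(x;\Theta^\star))_{y}$ and $\hat y(x):=\arg\max_{y}\mathrm{softmax}(\psi(x;\Theta^\star))_{y}$, the tower property yields $\mathds{P}\bigl(\hat y(X)=Y\mid \hat p(X)=\beta\bigr)=\mathds{E}\bigl[\mathds{P}_{test}(\hat y(X)\mid X)\mid \hat p(X)=\beta\bigr]=\beta$, so the expected calibration error vanishes.

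The main obstacle is the modelling layer rather than the algebra: one must state and defend the idealisations that $\psi(x;\Theta)$ ranges over a class rich enough that the unconstrained pointwise minimiser is attained, so that the identity $\mathrm{softmax}(z)=q$ is feasible, and that the train/test discrepancy is pure label shift, $\mathds{P}_{train}(x\mid y)=\mathds{P}_{test}(x\mid y)$ — exactly the assumption behind the Bayias offset — since the cancellation of $\pi_{y}$ in the last step relies on it. The remaining gaps are benign and should merely be flagged: the difference between the empirical risk $R_{\delta}$ and its population counterpart, and the fact that the minimiser is only a coset under constant shifts, which is harmless by shift invariance of softmax. Once the identity $\mathrm{softmax}(\psi(x;\Theta^\star))=\mathds{P}_{test}(\cdot\mid x)$ is established, the calibration conclusion is a one-line consequence of the definition.
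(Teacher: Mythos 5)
Your argument is correct, but it is a genuinely different route from the one the paper takes. The paper's proof (Appendix B.2) is a template-matching argument: it rewrites $\mathcal{L}_{\mathscr{B}}$ into the pairwise form $\alpha_{y_i}\log\bigl[1+\sum_{y_k\neq y_i}e^{\Delta_{y_iy_k}}e^{\psi_{y_k}-\psi_{y_i}}\bigr]$, checks that the choice $\delta_y=\pi_y$ yields $\alpha_{y_i}=1$ and $\Delta_{y_iy_k}=\mathscr{B}_{y_k}-\mathscr{B}_{y_i}=\log(\pi_{y_k}/\pi_{y_i})$, and then invokes the calibration lemma of Menon et al.\ (Lemma B.1) as a black box. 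You instead prove the statement from first principles: you recognise $\mathcal{L}_{\mathscr{B}}$ as ordinary softmax cross-entropy on the shifted logits $z_y=\psi_y+\mathscr{B}_y$, minimise the population risk pointwise via the decomposition $H(q)+\mathrm{KL}(q\,\|\,\mathrm{softmax}(z))$ to get $\mathrm{softmax}(z)=\mathds{P}_{train}(\cdot\mid x)$, and then use the label-shift relation between the train and test posteriors to cancel the priors and conclude $\mathrm{softmax}(\psi(x;\Theta^\star))=\mathds{P}_{test}(\cdot\mid x)$. This is essentially an unpacking of the proof of the lemma the paper cites, specialised to $\delta_y=\pi_y$, and it buys two things the paper's version does not: it makes the hidden hypotheses explicit (label shift $\mathds{P}_{train}(x\mid y)=\mathds{P}_{test}(x\mid y)$, a sufficiently rich hypothesis class so that the pointwise minimiser is attained, and the passage from empirical to population risk), and it delivers a strictly stronger conclusion --- the uncompensated head recovers the exact balanced-test posterior, which gives probabilistic calibration (vanishing ECE) directly via your tower-property step, not merely Fisher consistency of the induced classifier. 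The paper's route is shorter and defers all of this to the cited lemma; yours is self-contained and more informative about why the specific offset $\mathscr{B}_y=\log\pi_y+\log C$ is the right one (the $\log C$ term cancels in the softmax, as you note). One cosmetic remark: the paper's Lemma B.1 as printed has the exponent $\psi_{y_i}-\psi_{y_k}$, with the opposite sign to Eq.\ref{Eq.b-ce}; this is a typo in the paper's restatement of the lemma, and your derivation sidesteps it entirely.
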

Here, the optimization direction during training will convert to $\psi(X;\theta,W,b)+\mathscr{B}_y$. In particular, if the train set is balanced, $\mathds{P}_{train}(Y=y) \triangleq \pi_y \equiv 1/C$, then $\mathscr{B}_y = \log(1/C) + \log(C) \equiv 0$, which means the Eq.\ref{Eq.OrigalOptim} is a balanced case of Eq.\ref{Eq.thmbias}. We further raise that $\mathscr{B}_y$ is critical to the classification \textit{calibration} in Theorem \ref{Thm.Calibration1}. The pairwise loss in Eq.\ref{Eq.b-ce} will guide model to avoid over-fitting the tail or under-fitting the head with better generalization, which contributes to a better calibrated model. 

Compared with logit adjustment \cite{Aaai/logit_adjustment}, which is also a \textit{margin} modification, it is necessary to make a clear statement about the concrete difference from three points. 1) Logit adjustment is motivated by Balanced Error Rate (BER), while the Bayias compensated CE loss is inspired by the Bayesian theorem. We focus more on the model performance on the real-world data distribution. 2) As motioned above, our loss is consistent with standard CE loss when the train set label prior is the same as real test label distribution. 3) Our loss can tackle the imbalanced test set situation as well by simply setting the margin as $\mathscr{B}_y = \log(\pi_y) + \log(\pi'_y)$, where the $\pi'_y$ represents the test label distribution. The experiment evidence can be found in Appendix Tab.\ref{Tab.testimb}.

\subsection{Towards calibrated model with UniMix and Bayias}
\setlength{\textfloatsep}{11pt}
\begin{algorithm}[ht!]
    \caption{Integrated training manner towards calibrated model.}
    \label{Alg:trainingmanner}
	\renewcommand{\algorithmicrequire}{\textbf{Input:}}
	\renewcommand{\algorithmicensure}{\textbf{Output:}}
	\begin{algorithmic}[1]
		\REQUIRE $\mathcal{D}_{train}$, Batch Size $\mathcal{N}$, Stop Steps $T_1,T_2$, Random Sampler $\mathcal{R}$, UniMix Sampler $\mathcal{R}^*$
		\ENSURE Optimized $\Theta^*$, i.e., feature extractor parameters $\theta^*$, classifier parameters $W^*,b^*$
		\STATE Initialize the parameters $\Theta^{(0)}$ randomly and calculate $\mathscr{B}_y$ via Eq.\ref{Eq.biascompensate1}
		\FOR{$t = 0$ to $T_1$} 
    		\STATE Sample a mini-batch $\mathcal{B} = \{x_i,y_i\}_{i=1}^{\mathcal{N}} \leftarrow \mathcal{R}(\mathcal{D}_{train},\mathcal{N})$ 
    		\STATE Sample a mini-batch $\mathcal{B}^* = \{x_j^*,y_j^*\}_{j=1}^{\mathcal{N}} \leftarrow \mathcal{R^*}(\mathcal{D}_{train},\mathcal{N})$
    		\STATE Calculate UniMix factor \textbf{$\xi^*$} via Eq.\ref{Eq.UnimixFactor}
    		\STATE Construct VRM dataset $\mathcal{B}_{\nu}= \{\widetilde{x}_{k},\widetilde{y}_{k}\}_{k=1}^{\mathcal{N}}$ via Eq.\ref{Eq.UniMix}
            \STATE Calculate $\mathcal{L}_{\mathcal{B_{\nu}}} = \mathds{E} [\xi^*_{i,j} \mathcal{L}_{\mathscr{B}}(y_i,\psi(\widetilde{x};\Theta^{(t)})) + (1-\xi^*_{i,j}) \mathcal{L}_{\mathscr{B}}(y^*_j,\psi(\widetilde{x};\Theta^{(t)}))]$ via Eq.\ref{Eq.CalVRMLoss},\ref{Eq.b-ce}
    		\STATE Update $\Theta^{(t+1)} \leftarrow \Theta^{(t)} - \alpha \nabla_{\Theta^{(t)}} \mathcal{L}_{\mathcal{B_{\nu}}}$
		\ENDFOR
    	\FOR{$t=T_1$ to $T_2$} 
    	    \STATE Sample a mini-batch $\mathcal{B} = \{x_i,y_i\}_{i=1}^{\mathcal{N}} \leftarrow \mathcal{R}(\mathcal{D}_{train},\mathcal{N})$ 
    	    \STATE Calculate $\mathcal{L}_{\mathcal{B}} = \mathds{E}[\mathcal{L}_{\mathscr{B}}(y_i,\psi(x_i;\Theta))]$ via Eq.\ref{Eq.b-ce}
    		\STATE Update $\Theta^{(t+1)} \leftarrow \Theta^{(t)} - \alpha \nabla_{\Theta^{(t)}} \mathcal{L}_{\mathcal{B}}$
    	\ENDFOR
	\end{algorithmic} 
\end{algorithm}

It's intuitive to integrate feature improvement methods with loss modification ones for better performance. However, we find such combinations fail in most cases and are counterproductive with each other, i.e., the combined methods reach unsatisfactory performance gains. We suspect that these methods take contradictory trade-offs and thus result in overconfidence and bad \textit{calibration}. Fortunately, the proposed UniMix and Bayias are both proven to ensure \textit{calibration}. To achieve a better-calibrated model for superior performance gains, we introduce Alg.\ref{Alg:trainingmanner} to tackle the previous dilemma and integrate our two proposed approaches to deal with poor generalization of tail classes. Specially, inspired by previous work \cite{Corr/RethinkingCleanAugmented}, we overcome the coverage difficulty in \textit{mixup} \cite{Iclr/mixup} by removing UniMix in the last several epochs and thus maintain the same epoch as baselines. Note that Bayias-compensated CE is only adopted in the training process as discussed in Sec.\ref{sec:bayias}.

\section{Experiment}
\subsection{Results on synthetic dataset}
We make an ideal binary classification using Support Vector Machine (SVM) \cite{Ac/SVM} to show the distinguish effectiveness of UniMix. Suppose there are samples from two disjoint circles respectively:
\begin{equation}
\label{Eq.fakedataset}
\begin{aligned}
    z^+ =\{(x,y)|(x-x_0)^2+(y-y_0)^2 \leq r^2\} \\
    z^-=\{(x,y)|(x+x_0)^2+(y+y_0)^2 \leq r^2\}
\end{aligned}
\end{equation}
To this end, we randomly sample $m$ discrete point pairs from $z^+$ to compose positive samples $z^+_{p}=\{(x^+_1,y^+_1),\cdots,(x^+_m,y^+_m)\}$, and $m$ negative samples $z^-_{n}=\{(x^-_1,y^+_1),\cdots,(x^-_m,y^-_m)\}$ from $z^-$ correspondingly, thus to generate a balanced dataset $\mathcal{D}_{bal}=\{z^+_{p},z^-_{n}\}$ with $\mathds{P}\left ((x,y)\in z^+_{p} \right )=\mathds{P}\left ((x,y)\in z^-_{n} \right )=0.5$. For imbalance data, we sample $n (n \ll m)$ negative data from $z^-$ to generate $z^-_{n'}=\{(x'^-_1,y'^-_1),\cdots,(x'^-_n,y'^-_n)\}$, so as to compose the imbalance dataset $\mathcal{D}_{imbal}=\{z^+_{p},z^-_{n'}\}$, with $\mathds{P}\left ((x,y)\in z^+_{p} \right ) \gg \mathds{P}\left ((x,y)\in z^-_{n'} \right )$. We train the SVM model on the two synthetic datasets, and visualize the classification boundary of each dataset in Fig.\ref{exp:SVM}.
\begin{figure}[h!]
\vspace{-11pt}
	\centering
	\subfigure[imbalanced scenario]{
		\begin{minipage}[b]{0.23\textwidth}
			\includegraphics[width=1\textwidth]{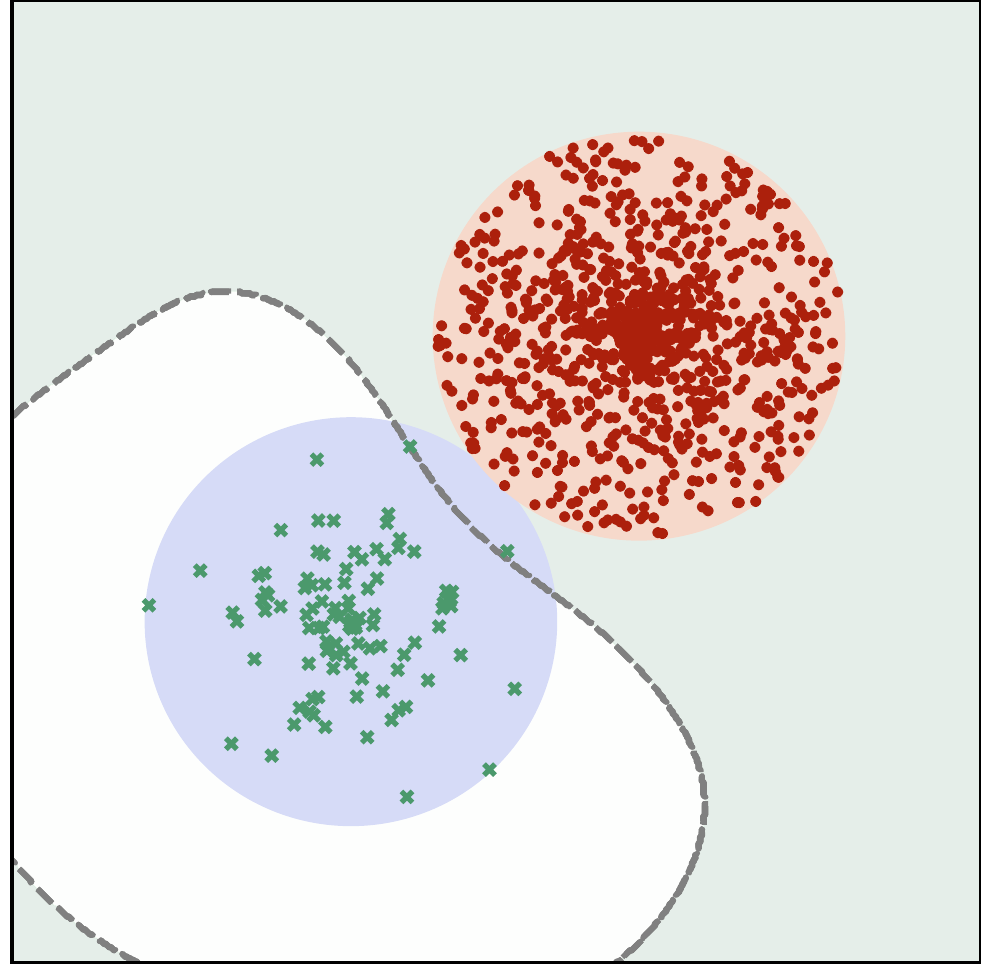} 
		\end{minipage}
		\label{exp:SVM(a)}
	}
    	\subfigure[with \textit{mixup}]{
    		\begin{minipage}[b]{0.23\textwidth}
  		 	\includegraphics[width=1\textwidth]{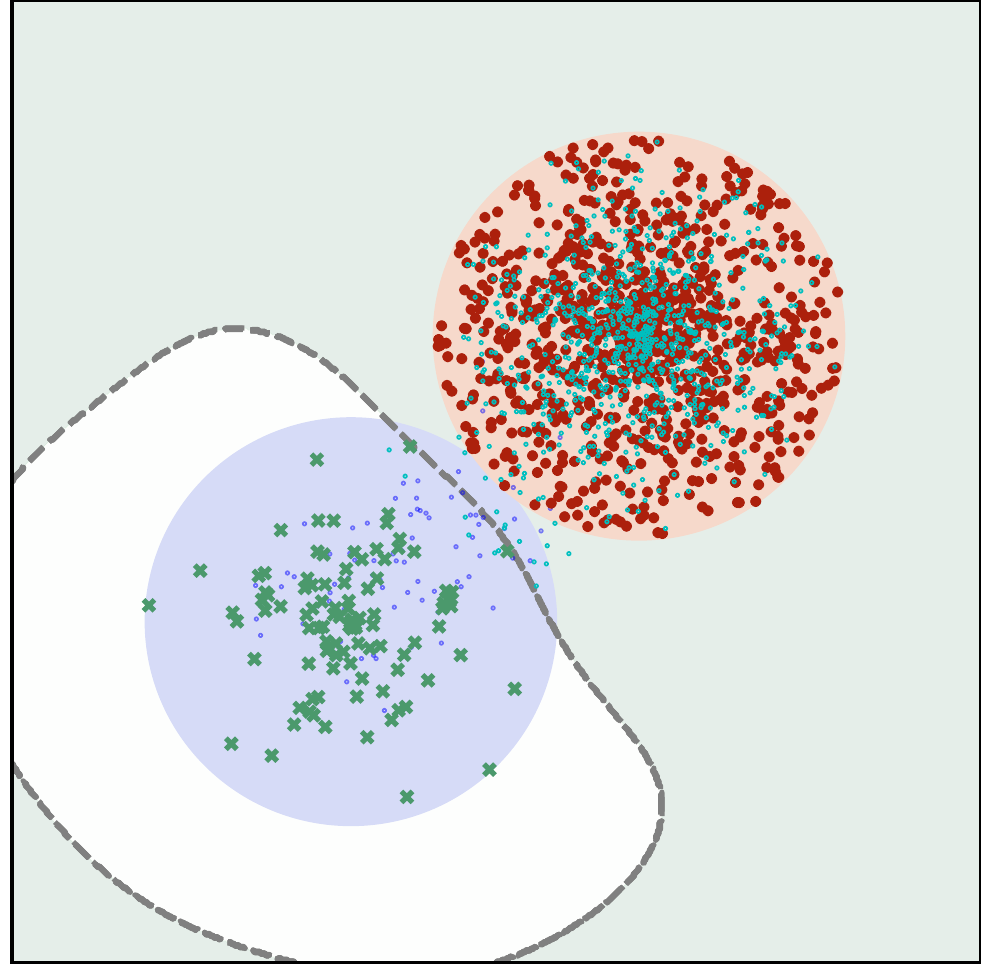}
    		\end{minipage}
		\label{exp:SVM(b)}
    	}
	\subfigure[with UniMix]{
		\begin{minipage}[b]{0.23\textwidth}
			\includegraphics[width=1\textwidth]{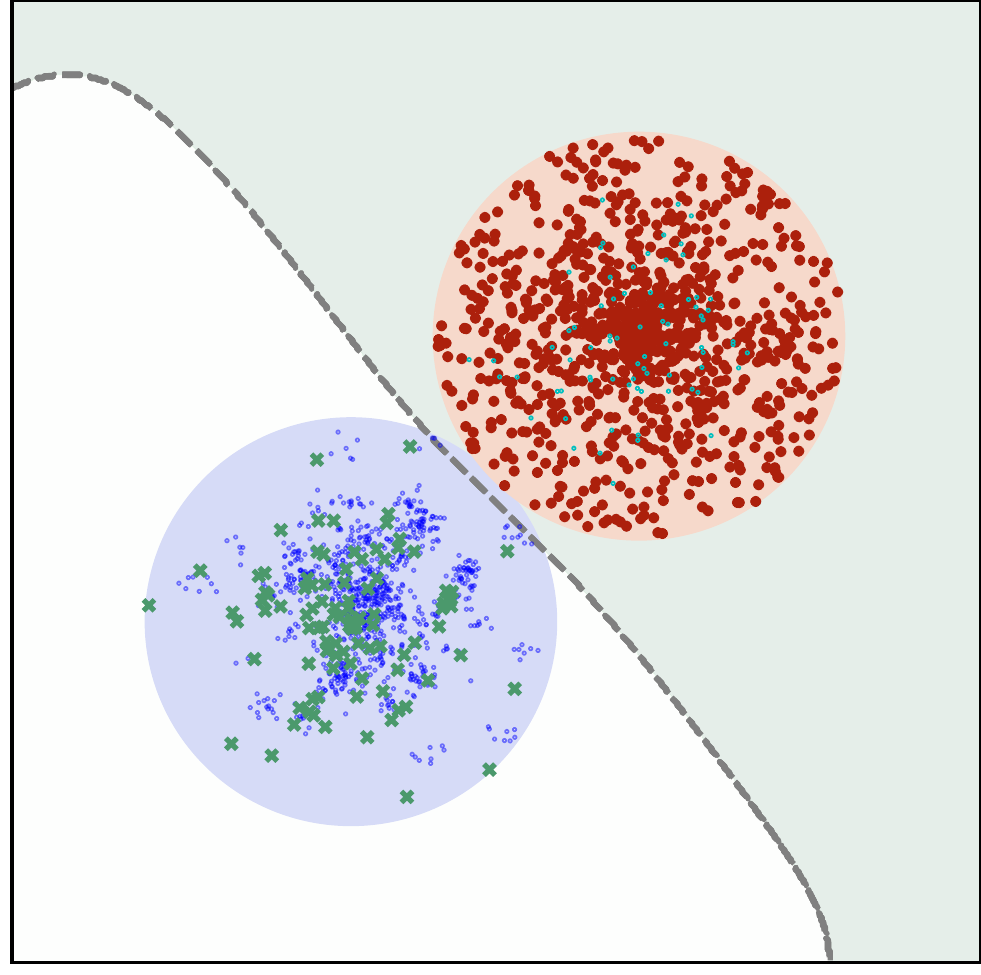} 
		\end{minipage}
		\label{exp:SVM(c)}
	}
    	\subfigure[balanced scenario]{
    		\begin{minipage}[b]{0.23\textwidth}
		 	\includegraphics[width=1\textwidth]{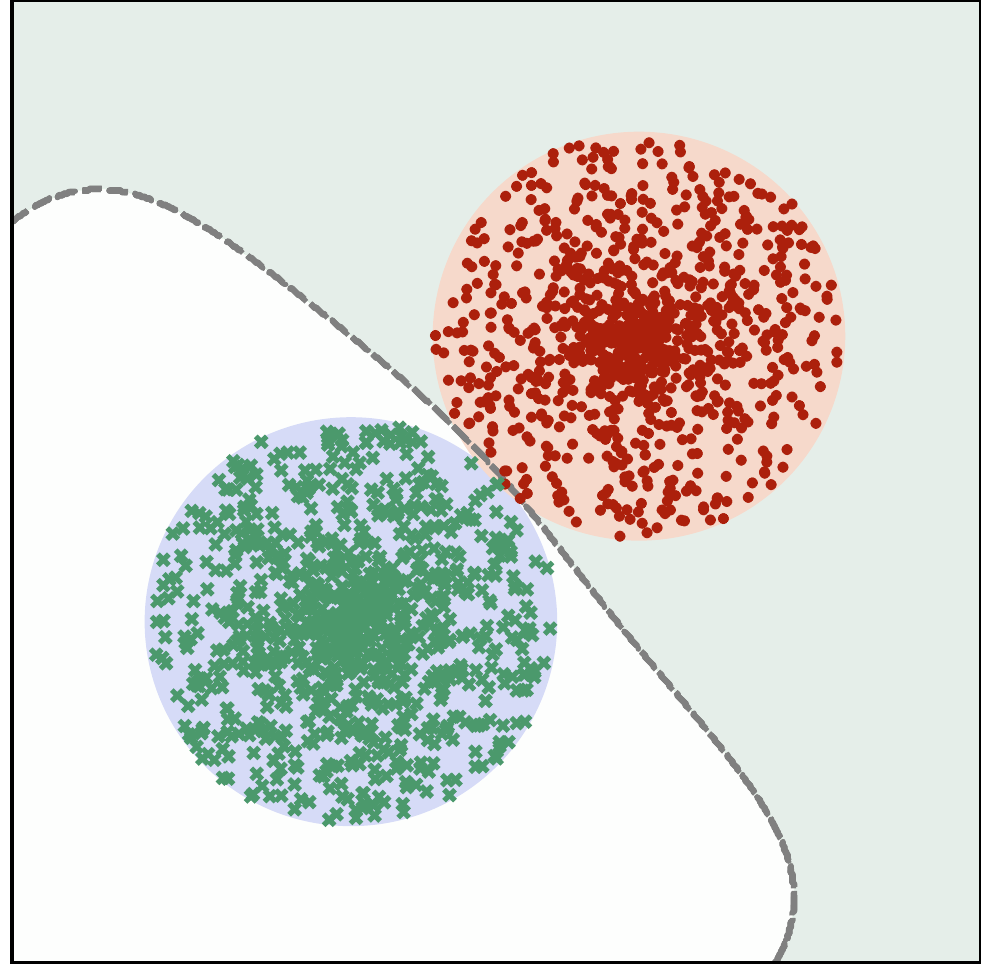}
    		\end{minipage}
		\label{exp:SVM(d)}
    	}
	\topcaption{SVM decision boundary on the synthetic balanced dataset (Fig.\ref{exp:SVM(d)}) and imbalanced dataset (Fig.\ref{exp:SVM(a)},\ref{exp:SVM(b)},\ref{exp:SVM(c)}). The theoretical classification boundary of the synthetic dataset is $y$=$-x$. "$\circ$" represents generated pseudo data, where \textcolor{svmdotblue}{blue} and \textcolor{svmdotgreen}{green} represent belong to $z^-$ and $z^+$, respectively.}
	\label{exp:SVM}
	\vspace{-11pt}
\end{figure}

The SVM reaches an approximate ideal boundary on balanced datasets (Fig.\ref{exp:SVM(d)}) but severely deviates from the $y=-x$ in the imbalanced dataset (Fig.\ref{exp:SVM(a)}). As proven in Sec.\ref{limitmixup}, \textit{mixup} (Fig.\ref{exp:SVM(b)}) is incapable of shifting imbalance distribution, resulting in no better result than the original one (Fig.\ref{exp:SVM(a)}). After adopting the proposed UniMix, the classification boundary in Fig.\ref{exp:SVM(c)} shows much better results than the original imbalanced dataset, which gets closed to the ideal boundary.
\subsection{Results on CIFAR-LT}
The imbalanced datasets CIFAR-10-LT and CIFAR-100-LT are constructed via suitably discarding training samples following previous works \cite{Cvpr/BBN, Cvpr/CB, Aaai/logit_adjustment, Nips/LDAM}. The instance numbers exponentially decay per class in train dataset and keep balanced during inference. We extensively adopt $\rho \in \{10,50,100,200\}$ for comprehensive comparisons. See implementation details in Appendix \ref{Apdx:cifardetail}.

\textbf{Comparison methods.} We evaluate the proposed method against various representative and effective approaches extensively, summarized into the following groups: 
\textbf{a) Baseline.} We conduct plain training with CE loss called ERM as baseline. \textbf{b) Feature improvement methods} modify the input feature to cope with LT datasets. \textit{mixup} \cite{Iclr/mixup} convexly combines images and labels to build virtual data for VRM. Manifold mixup \cite{Icml/Manifold_Mixup} performs the linear combination in latent states. Remix \cite{Eccv/remix} conducts the same combination on images and adopts tail-favored rules on labels. M2m \cite{Cvpr/M2m} converts majority images to minority ones by adding noise perturbation, which need an additional pre-trained classifier. BBN \cite{Cvpr/BBN} utilizes features from two branches in a cumulative learning manner. \textbf{c) Loss modification methods} either adjust the logits \textit{weight} or \textit{margin} before the \textit{Softmax} operation. Specifically, focal loss \cite{Iccv/Focal}, CB \cite{Cvpr/CB} and CDT \cite{Corr/CDT} re-weight the logits with elaborate strategies, while LDAM \cite{Nips/LDAM} and Logit Adjustment \cite{Aaai/logit_adjustment} add the logits \textit{margin} to shift decision boundary away from tail classes. \textbf{d) Other methods.} We also compare the proposed method with other two-stage approaches (e.g. DRW \cite{Nips/LDAM}) for comprehensive comparisons.

\begin{table*}[h!]
\topcaption{Top-1 validation accuracy(\%) of ResNet-32 on CIFAR-10/100-LT. E2E: end to end training. Underscore: the best performance in each group. $\dagger$: our reproduced results. $\ddagger$: reported results in \cite{Cvpr/BBN}. $\star$: reported results in \cite{Corr/CDT}. Our \textit{calibration} ensured method achieves the best performance.}
\resizebox{1\textwidth}{!}{%
\centering
\begin{tabular}{l|c|cccc|cccc}
\toprule
\multicolumn{1}{c|}{Dataset} & E2E & \multicolumn{4}{c|}{CIFAR-10-LT}                                  & \multicolumn{4}{c}{CIFAR-100-LT}                                  \\ \midrule
\multicolumn{1}{c|}{$\rho$ (easy $\rightarrow$ hard)}        & -   & 10             & 50             & 100            & 200            & 10             & 50             & 100            & 200            \\ \midrule
ERM$^\dagger$                                             & \cmark   & 86.39          & 74.94          & 70.36          & 66.21          & 55.70       & 44.02          & 38.32          & 34.56          \\ \midrule
\textit{mixup}$^\ddagger$  \cite{Iclr/mixup}                & \cmark  & 87.10          & 77.82          & 73.06          & {\ul 67.73}    & 58.02          & 44.99          & 39.54          & 34.97          \\
Manifold Mixup$^\ddagger$  \cite{Icml/Manifold_Mixup}       & \cmark   & 87.03          & 77.95          & 72.96          & -              & 56.55          & 43.09          & 38.25          & -              \\
Remix          \cite{Eccv/remix}                & \cmark   & 88.15          & 79.20          & 75.36          & 67.08          & {\ul 59.36}    & 46.21          & 41.94          & {\ul 36.99}    \\
M2m            \cite{Cvpr/M2m}                  & \xmark   & 87.90          & -              & 78.30          & -              & 58.20          & -              & {\ul 42.90}    & -              \\
BBN$^\ddagger$            \cite{Cvpr/BBN}          & \xmark    & {\ul 88.32}    & {\ul 82.18}    & {\ul 79.82}    & -              & 59.12          & {\ul 47.02}    & 42.56          & -              \\ \midrule
Focal$^\star$        \cite{Iccv/Focal}           & \cmark   & 86.55          & 76.71$^\dagger$          & 70.43          & 65.85          & 55.78          & 44.32$^\dagger$          & 38.41          & 35.62          \\
Urtasun et al  \cite{Icml/L2RW}                 & \cmark   & 82.12          & 76.45          & 72.23          & 66.25          & 52.12          & 43.17          & 38.90          & 33.00          \\
CB-Focal       \cite{Cvpr/CB}                   & \cmark   & 87.10          & 79.22          & 74.57          & 68.15          & 57.99          & 45.21          & 39.60          & 36.23          \\
$\tau$-norm$^\star$   \cite{Iclr/Decouple}        & \cmark   & 87.80          & 82.78$^\dagger$          & 75.10          & 70.30          & 59.10           & 48.23$^\dagger$          & 43.60          & 39.30          \\
LDAM$^\dagger$          \cite{Nips/LDAM}          & \cmark   & 86.96          & 79.84          & 74.47          & 69.50          & 56.91          & 46.16          & 41.76          & 37.73          \\
LDAM+DRW$^\dagger$       \cite{Nips/LDAM}          & \xmark    & 88.16          & 81.27          & 77.03          & 74.74          & 58.71          & 47.97          & 42.04          & 38.45          \\
CDT$^\star$         \cite{Corr/CDT}             & \cmark   & {\ul 89.40}     & 81.97$^\dagger$          & 79.40          & 74.70          & 58.90          & 45.15$^\dagger$          & {\ul 44.30}    & 40.50          \\
Logit Adjustment   \cite{Aaai/logit_adjustment}     & \cmark   & 89.26$^\dagger$          & {\ul 83.38}$^\dagger$     & {\ul 79.91}   & {\ul 75.13}$^\dagger$    & {\ul 59.87}$^\dagger$    & {\ul 49.76}$^\dagger$     & 43.89         & {\ul 40.87}$^\dagger$    \\ \midrule
Ours                                            & \cmark   & \textbf{89.66} & \textbf{84.32} & \textbf{82.75} & \textbf{78.48} & \textbf{61.25} & \textbf{51.11} & \textbf{45.45} & \textbf{42.07} \\
\bottomrule
\end{tabular}}
\label{Tab.ResultCifar}
\vspace{-5pt}
\end{table*}

\textbf{Results.} We present results of CIFAR-10-LT and CIFAR-100-LT in Tab.\ref{Tab.ResultCifar}. Our proposed method achieves state-of-the-art results against others on each $\rho$, with performance gains improved as $\rho$ gets increased (See Appendix \ref{Apdx:visualcifar}). Specifically, our method overcomes the ignorance in tail classes effectively with better \textit{calibration}, which integrates advantages of two group approaches and thus surpass most two-stage methods (i.e., BBN, M2m, LDAM+DRW). However, not all combinations can get ideal performance gains as expected. More details will be discussed in Sec.\ref{sec:abl}.

\begin{table*}[h!]
\setlength{\tabcolsep}{10pt}
\topcaption{Quantitative \textit{calibration} metric of ResNet-32 on CIFAR-10/100-LT test set. Smaller ECE and MCE indicate better \textit{calibration} results. Either of the proposed methods achieves a well-calibrated model compared with others. The combination of UniMix and Bayias achieves the best performance.}
\resizebox{1\textwidth}{!}{%
\centering
\begin{tabular}{l|c|c|c|c|cccc}
\toprule
Dataset &
  \multicolumn{4}{c|}{CIFAR-10-LT} &
  \multicolumn{4}{c}{CIFAR-100-LT} \\ \midrule
\multicolumn{1}{c|}{$\rho$} &
  \multicolumn{2}{c|}{10} &
  \multicolumn{2}{c|}{100} &
  \multicolumn{2}{c|}{10} &
  \multicolumn{2}{c}{100} \\ \midrule
\textit{Calibration} Metric (\%)&
  ECE &
  MCE &
  ECE &
  MCE &
  \multicolumn{1}{c|}{ECE} &
  \multicolumn{1}{c|}{MCE} &
  \multicolumn{1}{c|}{ECE} &
  MCE\\ \midrule
ERM &
  6.60 &
  74.96 &
  20.53 &
  73.91 &
  \multicolumn{1}{c|}{22.85} &
  \multicolumn{1}{c|}{34.50} &
  \multicolumn{1}{c|}{38.23} &
  87.22 \\
\textit{mixup} \cite{Iclr/mixup} & 6.55 & 24.54 & 19.20 & 37.84 & \multicolumn{1}{c|}{19.69} & \multicolumn{1}{c|}{38.53} & \multicolumn{1}{c|}{32.72} & 50.46 \\
Remix \cite{Eccv/remix} &
  6.81 &
  22.44 &
  15.38 &
  27.99 &
  \multicolumn{1}{c|}{20.17} &
  \multicolumn{1}{c|}{32.99} &
  \multicolumn{1}{c|}{33.56} &
  50.96 \\
LDAM+DRW \cite{Nips/LDAM} &
  11.22 &
  45.92 &
  19.89 &
  49.07 &
  \multicolumn{1}{c|}{30.54} &
  \multicolumn{1}{c|}{55.57} &
  \multicolumn{1}{c|}{42.18} &
  64.78 \\ \midrule
UniMix (ours)&
  6.00 &
  25.99 &
  12.87 &
  28.30 &
  \multicolumn{1}{c|}{19.38} &
  \multicolumn{1}{c|}{33.40} &
  \multicolumn{1}{c|}{27.12} &
  41.46 \\
Bayias (ours)&
  5.52 &
  20.14 &
  11.05 &
  \textbf{23.72} &
  \multicolumn{1}{c|}{17.42} &
  \multicolumn{1}{c|}{28.26} &
  \multicolumn{1}{c|}{24.31} &
  39.66 \\
UniMix+Bayias (ours) &
  \textbf{4.74} &
  \textbf{13.67} &
  \textbf{10.19} &
  25.47 &
  \multicolumn{1}{c|}{\textbf{15.24}} &
  \multicolumn{1}{c|}{\textbf{23.67}} &
  \multicolumn{1}{c|}{\textbf{23.04}} &
  \textbf{37.36} \\ 
  \bottomrule
\end{tabular}%
}
\label{Tab.ECEResultCifar}
\end{table*}

To quantitatively describe the contribution to model \textit{calibration}, we make quantitative comparisons on CIFAR-10-LT and CIFAR-100-LT. According to the definition ECE and MCE (see Appendix Eq.\ref{Eq.defECE},\ref{Eq.defMCE}), a well-calibrated model should minimize the ECE and MCE for better generalization and robustness. In this experiment, we adopt the most representative $\rho \in \{10,100\}$ with previous mainstream state-of-the-art methods.

The results in Tab.\ref{Tab.ECEResultCifar} show that either of the proposed methods generally outperforms previous methods, and their combination enables better classification \textit{calibration} with smaller ECE and MCE. Specifically, \textit{mixup} and Remix have negligible contributions to model \textit{calibration}. As analyzed before, such methods tend to generate head-head pairs in favor of the feature learning of majority classes. However, more head-tail pairs are required for better feature representation of the tail classes. In contrast, both the proposed UniMix and Bayias pay more attention to the tail and achieve satisfactory results. It is worth mentioning that improving \textit{calibration} in post-hoc manners \cite{Icml/Calibration-NN, DBLP:conf/cvpr/MiSLAS} is also effective, and we will discuss it in Appendix \ref{Apdx.calibrationvis}. Note that LDAM is even worse calibrated compared with baseline. We suggest that LDAM adopts an additional margin only for the ground-truth label from the angular perspective, which shifts the decision boundary away from the tail class and makes the tail predicting score tend to be larger. Additionally, LDAM requires the normalization of input features and classifier weight matrix. Although a scale factor is proposed to enlarge the logits for better \textit{Softmax} operation \cite{Mm/norm_face}, it is still harmful to \textit{calibration}. It also accounts for its contradiction with other methods. Miscalibration methods combined will make models become even more overconfident and damage the generalization and robustness severely.

\subsection{Results on large-scale datasets}
We further verify the proposed method's effectiveness quantitatively on large-scale imbalanced datasets, i.e. ImageNet-LT and iNaturalist 2018. ImageNet-LT is the LT version of ImageNet \cite{Ijcv/ImageNet} by sampling a subset following \textit{Pareto} distribution, which contains about $115K$ images from $1,000$ classes. The number of images per class varies from $5$ to $1,280$ exponentially, i.e., $\rho=256$. In our experiment, we utilize the balanced validation set constructed by Cui \textit{et al.} \cite{Cvpr/CB} for fair comparisons. The iNaturalist species classification dataset \cite{Cvpr/INaturalist} is a large-scale real-world dataset which suffers from extremely label LT distribution and fine-grained problems \cite{Cvpr/INaturalist, Cvpr/BBN}. It is composed of $435,713$ images over $8,142$ classes with $\rho=500$. The official splits of train and validation images \cite{Nips/LDAM, Cvpr/BBN, Iclr/Decouple} are adopted for fair comparisons. See implementation details in Appendix \ref{Apdx:largedetail}.

\begin{table*}[h!]
\vspace{-5pt}
\setlength{\tabcolsep}{10pt}
\centering
\topcaption{Top-1 validation accuracy(\%) of ResNet-10/50 on ImageNet-LT and ResNet-50 on iNaturalist 2018. E2E: end to end training. $\dagger$: our reproduced results. $\ddagger$: results reported in origin paper.}
\resizebox{1\textwidth}{!}{%
\begin{tabular}{l|l|cccc|cc}
\toprule
Dataset & \multicolumn{5}{c|}{ImageNet-LT}                                                        & \multicolumn{2}{c}{iNaturalist 2018} \\ \midrule
Method                          & E2E                           & ResNet-10       & $\Delta$       & ResNet-50                &$\Delta$        & ResNet-50        & $\Delta$         \\ \midrule
CE$^\dagger$                    & \multicolumn{1}{c|}{\cmark}   & 35.88           & -              & 38.88                    & -              & 60.88            & -                \\
CB-CE$^\dagger$ \cite{Cvpr/CB}  & \multicolumn{1}{c|}{\cmark}   & 37.06           & +1.18          & 40.85                    & +1.97          & 63.50            & +2.62            \\
LDAM \cite{Nips/LDAM}           & \multicolumn{1}{c|}{\cmark}   & 36.05$^\dagger$ & +0.17          & 41.86$^\dagger$          & +2.98          & 64.58$^\ddagger$ & +3.70            \\ \midrule
OLTR$^\ddagger$ \cite{Cvpr/OLTR}& \multicolumn{1}{c|}{\xmark}   & 35.60           & -0.28          & 40.36                    & +1.48          & 63.90            & +3.02            \\
LDAM+DRW \cite{Nips/LDAM}       & \multicolumn{1}{c|}{\xmark}   & 38.22$^\dagger$ & +2.34          & 45.75$^\dagger$          & +6.87          & 68.00$^\ddagger$ & +7.12            \\
BBN$^\ddagger$ \cite{Cvpr/BBN}  & \multicolumn{1}{c|}{\xmark}   & \multicolumn{2}{c}{-}                         & \multicolumn{2}{c|}{-}       & 66.29            & +5.41            \\
c-RT \cite{Iclr/Decouple}       & \multicolumn{1}{c|}{\xmark}   & 41.80$^\ddagger$ & +5.92         & 47.54$^\dagger$          & +8.66          & 67.60$^\dagger$  & +6.72            \\ \midrule
Ours & \multicolumn{1}{c|}{\cmark}   & \textbf{42.90} & \textbf{+7.02} & \textbf{48.41} & \textbf{+9.53} & \textbf{69.15}   & \textbf{+8.27}   \\ 
\bottomrule
\end{tabular}%
}
\label{Tab.ResultLarge}
\vspace{-5pt}
\end{table*}

\textbf{Results.} Tab.\ref{Tab.ResultLarge} illustrates the results on large-scale datasets. Ours is consistently effective and outperforms existing mainstream methods, achieving distinguish improvement compared with previous SOTA c-RT \cite{Iclr/Decouple} in the compared backbones. Especially, our method outperforms the baseline on ImageNet-LT and iNaturalist 2018 by \textbf{9.53\%} and \textbf{8.27\%} with ResNet-50, respectively. As can be noticed in Tab.\ref{Tab.ResultLarge}, the proposed method also surpasses the well-known two-stage methods \cite{Iclr/Decouple, Nips/LDAM, Cvpr/BBN}, achieving superior accuracy with less computation load in a concise training manner.

\subsection{Further Analysis} \label{sec:abl}
\textbf{Effectiveness of UniMix and Bayias.} We conduct extensive ablation studies in Tab.\ref{Tab.Ablation} to demonstrate the effectiveness of the proposed UnixMix and Bayias, with detailed analysis in various combinations of feature-wise and loss-wise methods on CIFAR-10-LT and CIFAR-100-LT. Indeed, both UniMix and Bayias turn out to be effective in LT scenarios. Further observation shows that with \textit{calibration} ensured, the proposed method gets significant performance gains and achieve state-of-the-art results. Noteworthy, LDAM \cite{Nips/LDAM} makes classifiers miscalibrated, which leads to unsatisfactory improvement when combined with \textit{mixup} manners.

\begin{table*}[h!]
\vspace{-5pt}
\centering
\topcaption{Ablation study between feature-wise and loss-wise methods. LDAM is counterproductive to \textit{mixup} and its extensions. Bayias-compensated CE ensures \textit{calibration} and shows excellent performance gains especially combined with UniMix.} 
\resizebox{1\textwidth}{!}{%
\begin{tabular}{c|c|cc|cc|cccc}
\toprule
\multicolumn{2}{c|}{Dataset} & \multicolumn{4}{c|}{CIFAR-10-LT}                                                 & \multicolumn{4}{c}{CIFAR-100-LT}                                                                      \\ \midrule
\multicolumn{2}{c|}{$\rho$ (easy $\rightarrow$ hard)}  & \multicolumn{2}{c|}{100}    & \multicolumn{2}{c|}{200}    & \multicolumn{2}{c|}{100}                                    & \multicolumn{2}{c}{200}                \\ \midrule
Mix           & Loss         & Top1 Acc       & $\Delta$              & Top1 Acc       & $\Delta$              & Top1 Acc       & \multicolumn{1}{c|}{$\Delta$}              & Top1 Acc       & $\Delta$              \\ \midrule
None          & CE           & 70.36          & -                     & 66.21          & -                     & 38.32          & \multicolumn{1}{c|}{-}                     & 34.56          & -                     \\
\textit{mixup}        & CE           & 73.06          & +2.70                  & 67.73          & +1.52                  & 39.54          & \multicolumn{1}{c|}{+1.22}                  & 34.97          & +0.41                  \\
Remix         & CE           & 75.36          & +5.00                  & 67.08          & +0.87                  & 41.94          & \multicolumn{1}{c|}{+3.62}                  & 36.99          & +2.43                  \\
UniMix        & CE           & 76.47          & +6.11                  & 68.42          & +2.21                  & 41.46          & \multicolumn{1}{c|}{+3.14}                  & 37.63          & +3.07                  \\ \midrule
None          & LDAM         & 74.47          & -                     & 69.50          & -                     & 41.76          & \multicolumn{1}{c|}{-   }                   & 37.73          & -                     \\
\textit{mixup}         & LDAM         & 73.96          & -0.15                 & 67.89          & -1.61                 & 40.22          & \multicolumn{1}{c|}{-1.54}                  & 37.52          & -0.21                  \\
Remix         & LDAM         & 74.33          & -0.14                 & 69.66          & +0.16                 & 40.59          & \multicolumn{1}{c|}{-1.17}                  & 37.66          & -0.07                  \\
UniMix        & LDAM         & 75.35          & +0.88                 & 70.77          & +1.27                 & 41.67          & \multicolumn{1}{c|}{-0.09}                  & 37.83          & +0.01                  \\ \midrule
None          & Bayias       & 78.70          & -                     & 74.21          & -                     & 43.52          & \multicolumn{1}{c|}{-   }                  & 38.83          & -                     \\
\textit{mixup}         & Bayias       & 81.75          & +3.05                 & 76.69          & +2.48                 & 44.56          & \multicolumn{1}{c|}{+1.04}                  & 41.19          & +2.36                  \\
Remix         & Bayias       & 81.55          & +2.85                 & 75.81          & +1.60                  & 45.01          & \multicolumn{1}{c|}{+1.49}                  & 41.44          & +2.61                  \\
UniMix        & Bayias       & \textbf{82.75} & \textbf{+4.05}        & \textbf{78.48} & \textbf{+4.27}        & \textbf{45.45} & \multicolumn{1}{c|}{\textbf{+1.93}}         & \textbf{42.07} & \textbf{+3.24}         \\ 
\bottomrule
\end{tabular}}
\label{Tab.Ablation}
\vspace{-5pt}
\end{table*}

\textbf{Evaluating different UniMix Sampler.} Corollary \ref{Col.1},\ref{Col.2},\ref{Col.3} demonstrate distinguish influence of UniMix. However, the \textit{$\xi$-sample} can not be completely equivalent with orginal ones. Hence, an appropriate $\tau$ in Eq.\ref{Eq.reversesample} is also worth further searching. Fig.\ref{Fig.AblationTau} illustrates the accuracy with different $\tau$ on CIFAR-10-LT and CIFAR-100-LT setting $\rho=10$ and $100$. For CIFAR-10-LT (Fig.\ref{exp:tau(a)},\ref{exp:tau(c)}), $\tau=-1$ is possibly ideal, which forces more head-tail instead of head-head pairs get generated to compensate tail classes. In the more challenging CIFAR-100-LT, $\tau=0$ achieves the best result. We suspect that unlike simple datasets (e.g., CIFAR-10-LT), where overconfidence occurs in head classes, all classes need to get enhanced in complicated LT scenarios. Hence, the augmentation is effective and necessary both on head and tail. $\tau=0$ allows both head and tail get improved simultaneously.

\begin{figure*}[h!]
\vspace{-5pt}
	\centering
	\subfigure[CIFAR-10-LT-10.]{
		\begin{minipage}[b]{0.233\textwidth}
			\includegraphics[width=1\textwidth]{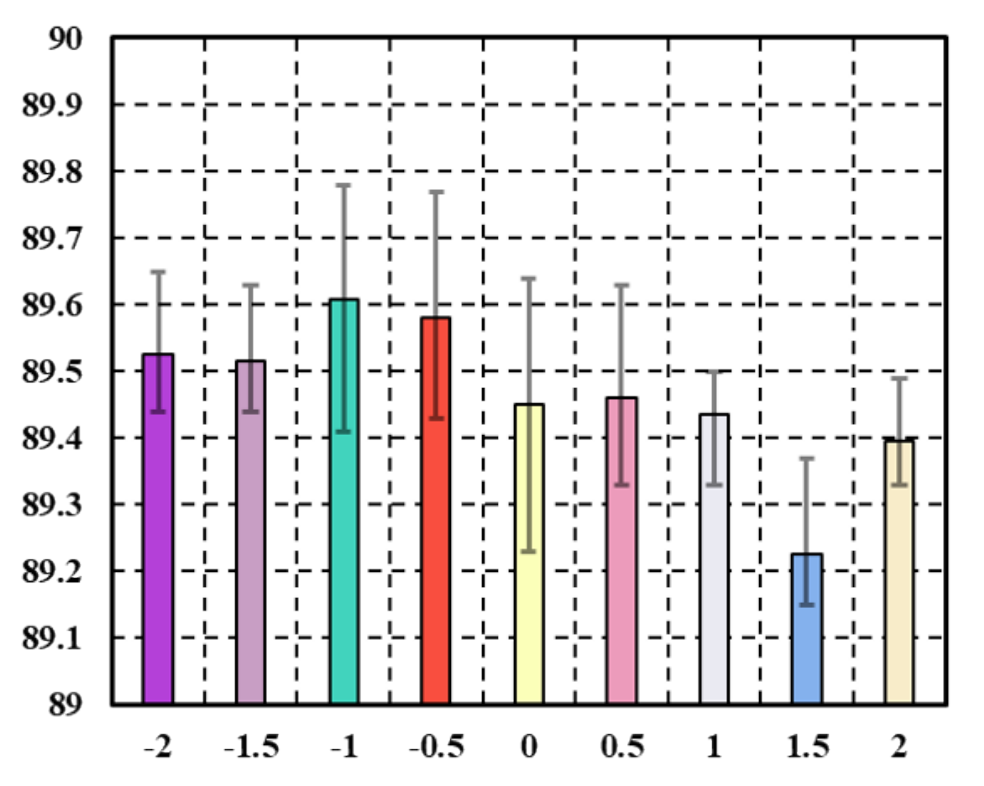} 
		\end{minipage}
		\label{exp:tau(a)}
	}
	\subfigure[CIFAR-10-LT-100.]{
		\begin{minipage}[b]{0.233\textwidth}
			\includegraphics[width=1\textwidth]{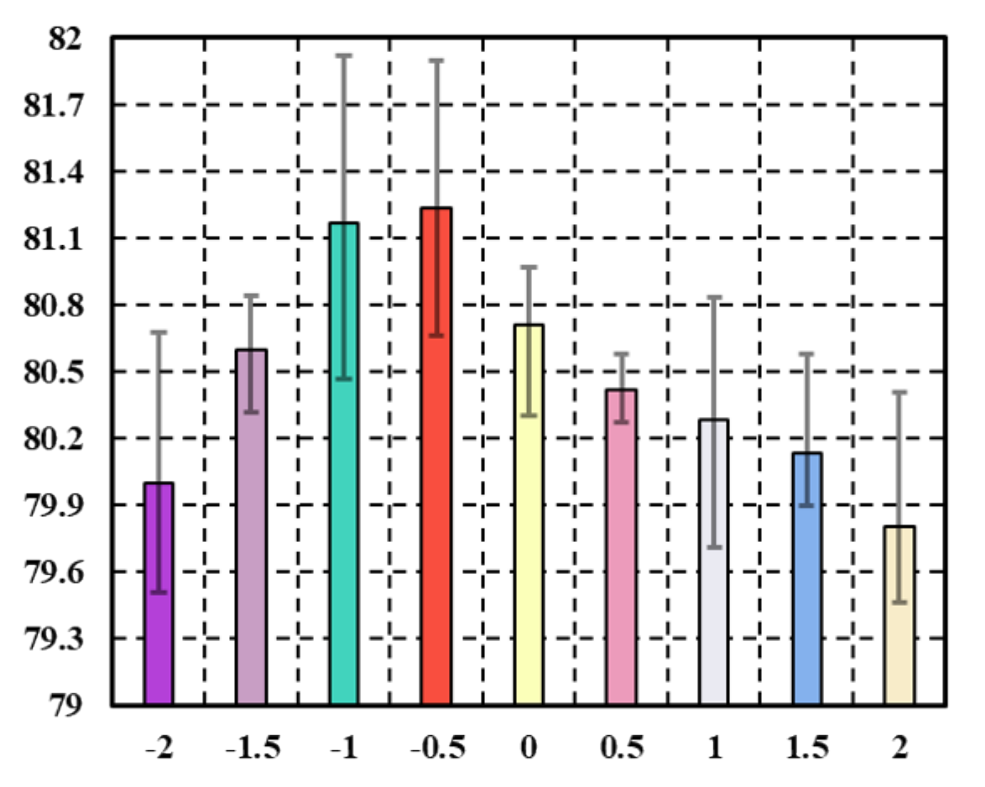} 
		\end{minipage}
		\label{exp:tau(c)}
	}
	    	\subfigure[CIFAR-100-LT-10.]{
    		\begin{minipage}[b]{0.233\textwidth}
  		 	\includegraphics[width=1\textwidth]{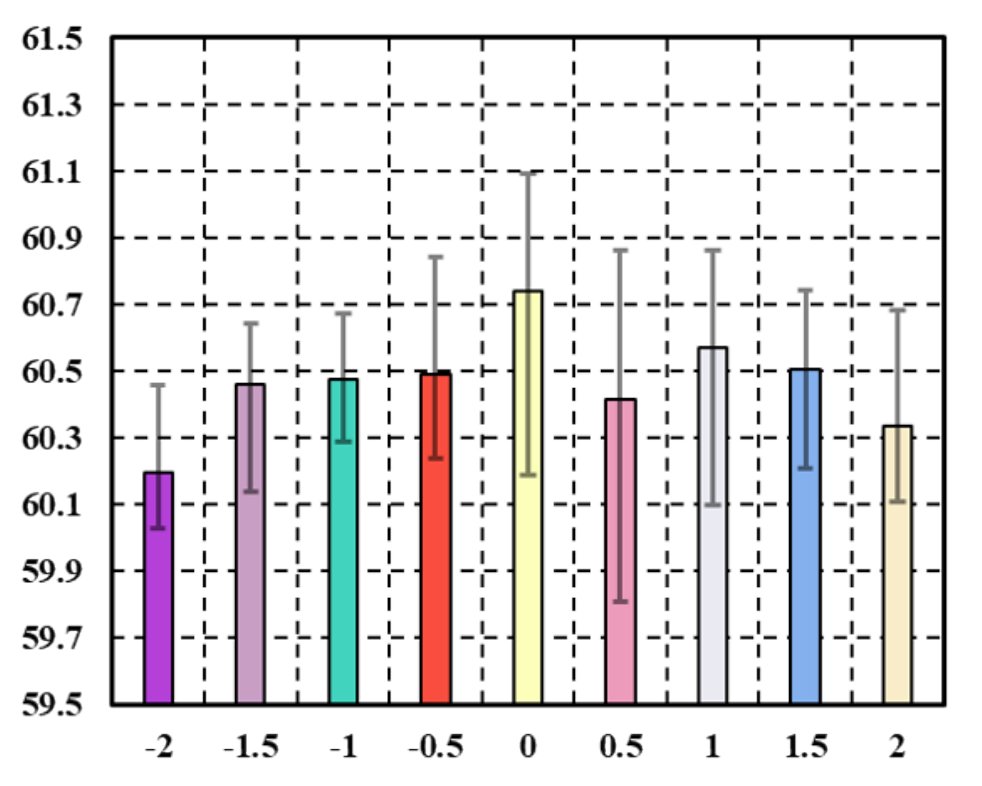}
    		\end{minipage}
		\label{exp:tau(b)}
    	}
    	\subfigure[CIFAR-100-LT-100.]{
    		\begin{minipage}[b]{0.233\textwidth}
		 	\includegraphics[width=1\textwidth]{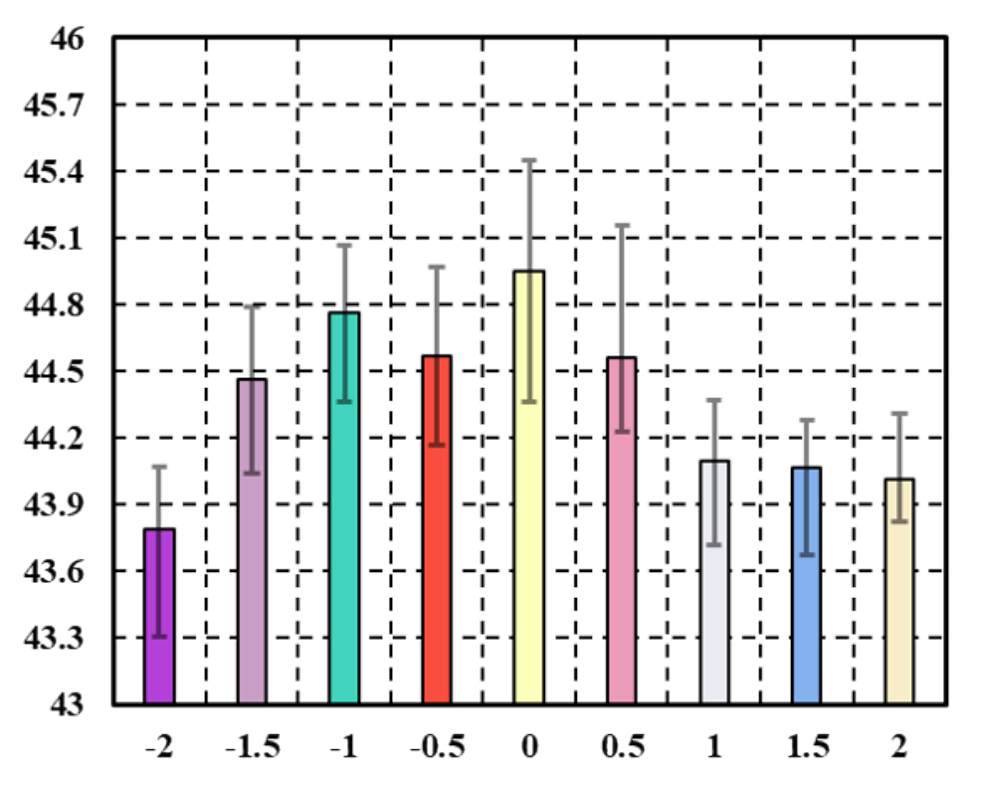}
    		\end{minipage}
		\label{exp:tau(d)}
    	}
	\topcaption{Comparison of top-1 validation accuracy(\%) of ResNet-32 on CIFAR-LT when varying $\tau$ in Eq.\ref{Eq.reversesample} for UniMix. The histogram indicates average results in repeated experiments.}
    \label{Fig.AblationTau}
    \vspace{-5pt}
\end{figure*}

\textbf{Do minorities really get improved?}
To observe the amelioration on tail classes, Fig.\ref{Fig.confusematrix} visualizes $\log$-confusion matrices on CIFAR-100-LT-100. In Fig.\ref{Fig.confusematrix(E)}, our method exhibits satisfactory generalization on the tail. Vanilla ERM model (Fig.\ref{Fig.confusematrix(A)}) is a trivial predictor which simplifies tail instances as head labels to minimize the error rate. Feature improvement \cite{Eccv/remix} and loss modification \cite{Nips/LDAM,Corr/CDT} methods do alleviate LT problem to some extent. The misclassification cases (i.e., non-diagonal elements) in Fig.\ref{Fig.confusematrix(B)},\ref{Fig.confusematrix(C)},\ref{Fig.confusematrix(D)} become smaller and more balanced distributed compared with ERM. However, the error cases are still mainly in the upper or lower triangular, indicating the existence of inherent bias between the head and tail. Our method (Fig.\ref{Fig.confusematrix(E)}) significantly alleviates such dilemma. The non-diagonal elements are more uniformly distributed throughout the matrix rather than in the corners, showing superiority to erase the bias in LT scenarios. Our method enables effective feature improvement for data-scarce classes and alleviates the \textit{prior} bias, suggesting our success in regularizing tail remarkably.

\begin{figure}[t!]
	\centering
	\subfigure[ERM]{
		\begin{minipage}[b]{0.182\textwidth}
			\includegraphics[width=1\textwidth]{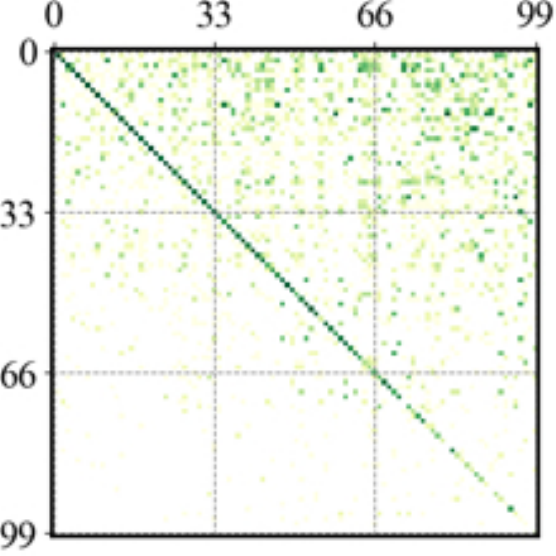}
		\end{minipage}
		\label{Fig.confusematrix(A)}
	}
		\subfigure[LDAM+DRW \cite{Nips/LDAM}]{
		\begin{minipage}[b]{0.182\textwidth}
			\includegraphics[width=1\textwidth]{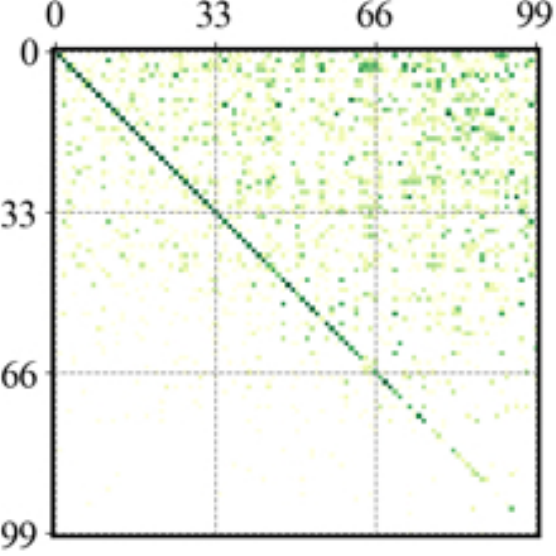} 
		\end{minipage}
		\label{Fig.confusematrix(B)}
	}
    	\subfigure[CDT \cite{Corr/CDT}]{
    		\begin{minipage}[b]{0.182\textwidth}
  		 	\includegraphics[width=1\textwidth]{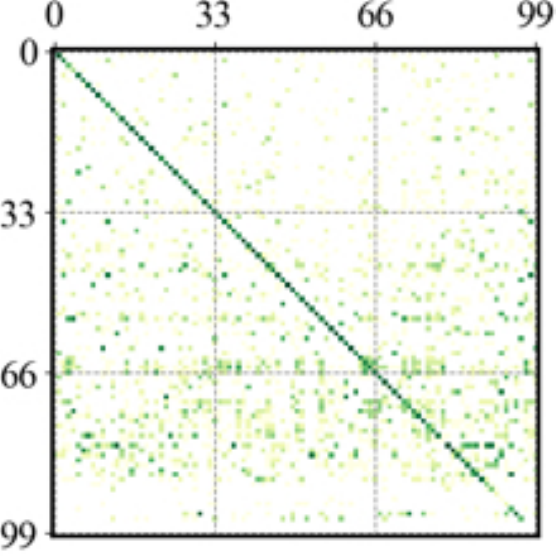}
    		\end{minipage}
		\label{Fig.confusematrix(C)}
    }
        	\subfigure[Remix \cite{Eccv/remix}]{
    		\begin{minipage}[b]{0.182\textwidth}
		 	\includegraphics[width=1\textwidth]{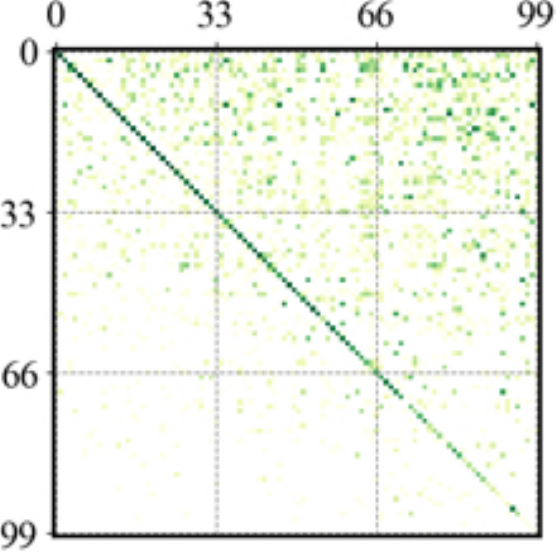}
    		\end{minipage}
		\label{Fig.confusematrix(D)}
    }
    	\subfigure[Ours]{
    		\begin{minipage}[b]{0.182\textwidth}
		 	\includegraphics[width=1\textwidth]{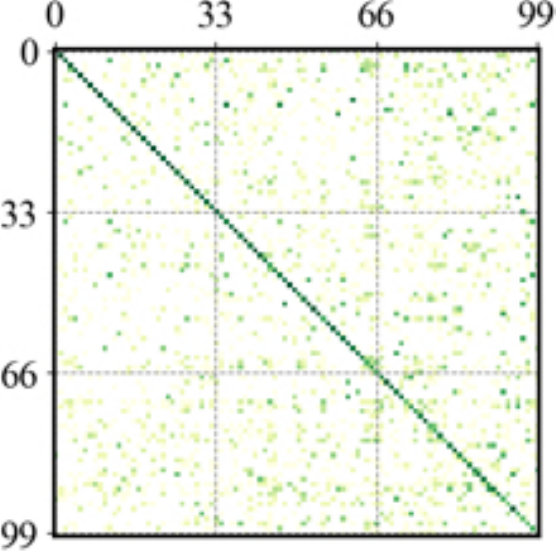}
    		\end{minipage}
		\label{Fig.confusematrix(E)}
    }
\topcaption{The $\log$-confusion matrix on CIFAR-100-LT-100 validation dataset. The $x$-axis and $y$-axis indicate the ground truth and predicted labels, respectively. Deeper color indicates larger values.}
\label{Fig.confusematrix}
\vspace{-5pt}
\end{figure}

\section{Related work and discussion}
\textbf{Why need \textit{calibration}?} To quantify the predictive uncertainty, \textit{calibration} \cite{Journal/Classification_Calibration} is put forward to describe the relevance between predictive score and actual correctness likelihood. A well-calibrated model is more reliable with better interpretability, which probabilities indicate optimal expected costs in Bayesian decision scenarios \cite{Ijon/Calibration_bayesian_networks}. Guo \textit{et al.} \cite{Icml/Calibration-NN} firstly provide metric to measure the \textit{calibration} of CNN and figure out well-performed models are always in lack of \textit{calibration}, indicating that CNN is sensitive to be overconfidence and lacks robustness. Thulasidasan \textit{et al.} \cite{Nips/On_Mixup_Training} point out that the effectiveness of \textit{mixup} in balanced datasets originates from superior \textit{calibration} modification. Menon \textit{et al.} \cite{Aaai/logit_adjustment} further show how to ensure optimal classification \textit{calibration} for a pair-wise loss. 

\textbf{Feature-wise methods.} Intuitively, under-sampling the head \cite{Icic/Board_SMOTE, Nn/OverOrUnderSample, Tkde/Under-sampling1} or over-sampling the tail \cite{Jair/SMOTE, Nn/OverOrUnderSample, Iccv/Gene-SMOTE, Eccv/Over-samlping1, Icml/Over-sampling2} can improve the inconsistent performance of imbalanced datasets but tend to either weaken the head or over-fitting the tail. Hence, many effective works generate additional samples \cite{Eccv/FSA, Aaai/Bag-tricks-LT, Cvpr/M2m} to compensate the tail classes. BBN \cite{Cvpr/BBN} uses two branches to extract features from head and tail simultaneously, while c-RT \cite{Iclr/Decouple} trains feature representation learning and classification stage separately. \textit{mixup} \cite{Iclr/mixup} and its variants \cite{Icml/Manifold_Mixup, Iccv/CutMix, Eccv/remix} are effective and easy-implement feature-wise methods that convexly combine input and label pairs to generate virtual samples. However, na\"ive \textit{mixup} manners are deficient in LT scenarios as we discussed in Sec.\ref{limitmixup}. In contrast, our UniMix tackles such a dilemma by constructing class balance-oriented virtual data as describe in Sec.\ref{Sec.UniMix} and shows satisfactory \textit{calibration} as Fig.\ref{Fig.confacc} exhibits.

\textbf{Loss modification.} Numerous experimental and theoretical studies \cite{Nips/BMSoftmax, Cvpr/CB, Nips/Causal-LT, Nips/Rethinking-labels-LT} have demonstrated the existence of inherent \textit{bias} between LT train set and balanced test set in supervised learning. Previous works \cite{Cvpr/reweight1, Pami/reweight2, Tnn/CSCE, Cvpr/CB, Iccv/Focal} make networks prefer learning tail samples by additional class-related weight on CE loss. Some works further correct CE according to the gradient generated by different samples \cite{Cvpr/EQL, Aaai/GHM} or from the perspective of Gaussian distribution and Bayesian estimation \cite{Iccv/Gaussian-Max-Margin-cls, Cvpr/Striking-Uncertainty}. Meta-learning approaches \cite{Icml/Meta1, Nips/Meta2, Nips/MWNet, Icml/L2RW, Cvpr/RethinkDA} optimize the weights of each class in CE as learnable parameters and achieve remarkable success. Cao \textit{et al.} \cite{Nips/LDAM} theoretically provides the ideal optimal \textit{margin} for CE from the perspective of \textit{VC} generalization bound. Compared with Logit Adjustment \cite{Aaai/logit_adjustment} motivated by balance error rate, our Bayias-compensated CE eliminates \textit{bias} incured by \textit{prior} and is consistent with balanced datasets, which ensures classification \textit{calibration} as well.

\section{Conclusion}\label{Sec.Conclusion}

We systematically analyze the limitations of mainstream feature improvement methods, i.e., \textit{mixup} and its extensions in the label-imbalanced situation, and propose the UniMix to construct a more class-balanced virtual dataset that significantly improves classification \textit{calibration}. We further pinpoint an inherent bias induced by the inconstancy of label distribution \textit{prior} between long-tailed train set and balanced test set. We prove that the standard cross-entropy loss with the proposed Bayias compensated can ensure classification \textit{calibration}. The combination of UniMix and Bayias achieves state-of-the-art performance and contributes to a better-calibrated model (Fig.\ref{Fig.confacc}). Further study in Tab.\ref{Tab.Ablation} shows that the bad \textit{calibration} methods are counterproductive with each other. However, more in-depth analysis and theoretical guarantees are still required, which we leave for our future work.
\begin{ack}
This work was supported by NSFC project Grant No. U1833101, SZSTI Grant No. JCYJ20190809 172201639 and WDZC20200820200655001, the Joint Research Center of Tencent and Tsinghua.
\end{ack}

{\small
\bibliography{all.bib}
}
\newpage

\appendix
\setcounter{equation}{0}
\setcounter{subsection}{0}
\setcounter{table}{0}   
\setcounter{figure}{0}
\setcounter{corollary}{0}

\renewcommand{\thetable}{A\arabic{table}}
\renewcommand{\thefigure}{A\arabic{figure}}
\renewcommand{\theequation}{A.\arabic{equation}}

\section{Missing proofs and derivations of UniMix} \label{Apdx.UniMix}

\subsection{Basic setting}

\begin{basicsetting}
\label{basicsetting}
Without loss of generality, we suppose that the long-tailed distribution satisfies some kind exponential distribution with parameter $\lambda$ \cite{Cvpr/CB}. The imbalance factor is defined as $\rho = n_{\max} / n_{\min}$, where $n_{\max}$ and $n_{\min}$ represent the number of the most and least samples in the dataset with $C$ classes, respectively. Then the probability of class $Y$ belongs to $y_i$ is:
\begin{equation} 
    \mathds{P}(Y=y_i) = \left \{
\begin{array}{lll}
\alpha e^{-\lambda y_i}    &      & {y_i \in [1,C]}\\
0                        &      & {others}
\end{array} \right. \quad \sim \quad long-tailed \ \ distribution
\end{equation}
According to the definition of $\rho$, we can directly deduce the relationship between $\rho$ and $\lambda$:
\begin{equation}
\begin{aligned}
    \rho &= \frac{n_{\max}}{ n_{\min}} = \frac{\mathds{P}(Y=y_i)_{\max}}{\mathds{P}(Y=y_i)_{\min}} \Rightarrow \frac{\mathds{P}(Y=y_1)}{\mathds{P}(Y=y_C)} =\frac{\alpha e^{-\lambda \cdot 1}}{\alpha e^{-\lambda \cdot C}} \quad \Rightarrow \quad \lambda = \frac{\ln \rho}{C-1} 
\end{aligned}
\end{equation}
Considering the normalization of probability density, we have:
\begin{equation}
    \begin{aligned}
    \int_{y_i \in \mathcal{Y}} {\mathds{P}(Y = y_i)dy_i} &= \int_1^C {\alpha {e^{ - \lambda {y_i}}}d{y_i}} \equiv 1\\
    &\left. \Rightarrow -\frac{\alpha }{\lambda }e^{-\lambda y_i}\right |_1^C \equiv 1 \Rightarrow \frac{\alpha }{\lambda }\left( {{e^{ - \lambda }} - {e^{ - \lambda C}}} \right) \equiv 1 \\
    &\Rightarrow \alpha  = \frac{\lambda }{{{e^{ - \lambda }} - {e^{ - \lambda C}}}}
\end{aligned}
\end{equation}
Hence, we can express the distribution of LT dataset $\mathcal{D}_{train}$, i.e., $\mathds{P}(Y=y_i)$ represents the probability of class $y_i$ in $\mathcal{D}_{train}$, which can be calculated as follows:
\begin{equation}
\label{Eq.LTdistribution}
    \begin{aligned}
\mathds{P}(Y=y_i) = \frac{{\iint_{x_i \in \mathcal{X},y_j \in \mathcal{Y}} {\mathds{1}(X = x_i,Y =y_i) dx_i dy_j}}}{{\iint_{{x_i} \in \mathcal{X},{y_j} \in \mathcal{Y}} {\mathds{1}(X = {x_i},Y={y_j})dx_i dy_j}}}= \frac{\lambda}{e^{-\lambda}-e^{-\lambda C}} {e ^{-\lambda y_i}},y_i \in [1,C]
\end{aligned}
\end{equation}
Notice that Eq.\ref{Eq.LTdistribution} is determined by total class number $C$ and the imbalance factor $\rho$.
\end{basicsetting}

\subsection{Proof of Corollary \ref{Col.1}} \label{Apdx:Coro1}
\begin{corollary}
\label{Col.1}
    When $\bm{\xi \sim Beta(\alpha,\alpha)},\alpha\in [0,1]$, the newly mixed dataset $\mathcal{D}_\nu$ composed of $\xi$-Aug samples $(\widetilde{x}_{i,j}, \widetilde{y}_{i,j})$ follows the same long-tailed distribution as the origin dataset $\mathcal{D}_{train}$, where $(x_i,y_i)$ and $(x_j,y_j)$ are \textbf{randomly} sampled from $\mathcal{D}_{train}$.
\begin{equation}
\label{Eq.ProbMixup}
\begin{aligned}
    \mathds{P}_{\textit{mixup}}(Y^* = {y_i}) &= \mathds{P}^2(Y = {y_i}) + \mathds{P}(Y = {y_i})\iint_{y_i \ne y_j} {Beta(\alpha ,\alpha )}\mathds{P}(Y = {y_j})d\xi dy_j \\
    &= \frac{\lambda}{e^{-\lambda}-e^{-\lambda C}} {e ^{-\lambda y_i}} , y_i \in [1,C]
\end{aligned}
\end{equation}
\end{corollary}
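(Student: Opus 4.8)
The plan is to compute directly the probability that a $\xi$-Aug sample generated by na\"ive \textit{mixup} belongs to class $y_i$, using Definition \ref{Def.Aug} to decide which of the two mixed samples a virtual point ``contributes to''. A virtual sample $(\widetilde{x}_{i,j},\widetilde{y}_{i,j})$ is built from an independently and randomly drawn pair $(x_i,y_i),(x_j,y_j)$ together with a mixing factor $\xi\sim Beta(\alpha,\alpha)$; by Definition \ref{Def.Aug} it is a robust sample of class $y_i$ when $\xi\geq 0.5$ and of class $y_j$ when $\xi<0.5$. First I would split on whether the two drawn classes coincide. If $y_i=y_j$ (which happens with probability $\mathds{P}^2(Y=y_i)$ for the class being $y_i$), the virtual sample is a $y_i$-sample regardless of $\xi$. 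If $y_i\neq y_j$, the sample is counted for class $y_i$ precisely when $x_i$ is the ``dominant'' endpoint, i.e.\ $\xi\geq 0.5$; since $Beta(\alpha,\alpha)$ is symmetric about $1/2$, this has probability $\tfrac12$, but one must integrate over all partner classes $y_j$.

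Next I would assemble these pieces into the claimed formula. The contribution from the distinct-class case is $\mathds{P}(Y=y_i)\iint_{y_i\neq y_j} Beta(\alpha,\alpha)\,\mathds{P}(Y=y_j)\,d\xi\,dy_j$, which matches the first displayed line of Eq.\ref{Eq.ProbMixup}. Here the $\xi$-integral over the region $\{\xi\geq 0.5\}$ contributes a factor $\tfrac12$ by symmetry of the Beta density, and the remaining $y_j$-integral over $[1,C]\setminus\{y_i\}$ equals $\mathds{P}(Y\neq y_i)=1-\mathds{P}(Y=y_i)$ (the single point $y_j=y_i$ has measure zero, so restricting the domain is harmless). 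By the same argument the probability that the sample is counted for $y_i$ via the $y_j$-slot — i.e.\ $\xi<0.5$ with the ``$x_j$'' draw landing in class $y_i$ — gives an identical $\tfrac12\,\mathds{P}(Y=y_i)(1-\mathds{P}(Y=y_i))$ term. Adding the degenerate term $\mathds{P}^2(Y=y_i)$, the total is $\mathds{P}^2(Y=y_i)+\mathds{P}(Y=y_i)\bigl(1-\mathds{P}(Y=y_i)\bigr)=\mathds{P}(Y=y_i)$, and substituting Eq.\ref{Eq.P(X)} (equivalently Eq.\ref{Eq.LTdistribution}) yields $\tfrac{\lambda}{e^{-\lambda}-e^{-\lambda C}}e^{-\lambda y_i}$, the second line of Eq.\ref{Eq.ProbMixup}.

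I expect the only genuine subtlety — rather than a true obstacle — to be bookkeeping the symmetry argument carefully: one must be explicit that (i) the marginal label of a \textit{mixup} sample is well-defined only through the $\xi$-Aug convention, not the soft label $\widetilde{y}$, so $\mathds{P}_{\textit{mixup}}(Y^*=y_i)$ really means ``probability the generated sample is a robust sample of class $y_i$''; (ii) the two endpoints $x_i,x_j$ play symmetric roles, so it suffices to compute the $x_i$-slot contribution and double it, or equivalently to write the single integral $\iint_{y_i\neq y_j}Beta(\alpha,\alpha)\mathds{P}(Y=y_j)\,d\xi\,dy_j$ as in the statement where the $\xi$-range is implicitly the half that favors $y_i$; and (iii) $\mathds{E}[\xi]\equiv 1/2$ together with symmetry about $1/2$ is what forces each endpoint to be dominant exactly half the time. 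Once these conventions are pinned down, the computation is a one-line integral, and the conclusion that $\mathcal{D}_\nu$ inherits the same exponential long-tailed law as $\mathcal{D}_{train}$ — hence that na\"ive \textit{mixup} cannot rebalance the data — follows immediately.
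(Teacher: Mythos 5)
Your proposal is correct and follows essentially the same route as the paper's own proof in Appendix \ref{Apdx:Coro1}: condition on whether the two drawn labels coincide, use the symmetry of $Beta(\alpha,\alpha)$ about $1/2$ to assign probability $\tfrac12$ to each endpoint being dominant in the distinct-label case, and observe that $\mathds{P}^2(Y=y_i)+2\cdot\tfrac12\,\mathds{P}(Y=y_i)\bigl(1-\mathds{P}(Y=y_i)\bigr)=\mathds{P}(Y=y_i)$. Your explicit remarks on the $\xi$-Aug labelling convention and the two symmetric slots are exactly the bookkeeping the paper carries out in Eq.\ref{Eq.apdxmixupformu}, so nothing is missing.
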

\begin{proof}
Follow the Basic Setting \ref{basicsetting}, when mixing factor $\xi\sim Beta(\alpha,\alpha)$, consider a \textit{$\xi$-Aug} sample generated by $\widetilde{x}_{i,j}=\xi \cdot x_i + (1-\xi) \cdot x_j$ and $\widetilde{y}_{i,j}=\xi \cdot y_i + (1-\xi) \cdot y_j$. The $\xi$-Aug sample $\widetilde{x}_{i,j}$ contributes to class $y_i$ when both pair samples $(x_i,y_i)$ and $(x_j,y_j)$ are in class $y_i$ or one of them are in other labels while the mixing factor $\xi$ is in favor of class $y_i$.
\begin{equation}
    \begin{aligned}
    \mathds{P}({\tilde y_{i,j}} &= {y_i}) =\mathds{P}({\tilde y_{i,j}} = {y_i}) \cdot \mathds{P}({\tilde y_{i,j}} = {y_i}){\mathds{P}_\xi }(0\leq \xi \leq 1) \\
    &\quad +\mathds{P}({\tilde y_{i,j}} = {y_i}) \cdot \mathds{P}({\tilde y_{i,j}} \neq {y_i}){\mathds{P}_\xi }(\xi  \geq 0.5) \\
    &\quad +\mathds{P}({\tilde y_{i,j}} \neq {y_i}) \cdot \mathds{P}({\tilde y_{i,j}} = {y_i}){\mathds{P}_\xi }(\xi  < 0.5) \\
    \end{aligned}
\end{equation}
Hence, the distribution of new dataset $D_\nu$ is:
\begin{equation}
\label{Eq.apdxmixupformu}
    \begin{aligned}
    \mathds{P}_{\textit{mixup}}(Y = {y_i}) &= \mathds{P}(Y = {y_i}) \cdot \mathds{P}(Y = {y_i}) \cdot \int_{0\le \xi \le 1}{Beta(\alpha,\alpha) d\xi} \\
    &\quad + \mathds{P}(Y = {y_i}) \cdot \int_{{y_j} \ne {y_i}} \int_{\xi \ge 0.5} {Beta(\alpha ,\alpha) \cdot \mathds{P}(Y = {y_j})d\xi d{y_j}} \\
    &\quad + \int_{{y_j} \ne {y_i}} \int_{\xi < 0.5} {Beta(\alpha ,\alpha) \cdot \mathds{P}(Y = {y_j})d\xi d{y_j}} \cdot \mathds{P}(Y = {y_i}) \\
    &= \mathds{P}^2(Y = {y_i}) + 0.5\cdot \mathds{P}(Y = {y_i}) \cdot (1-\mathds{P}(Y = {y_i})) \\
    &\quad + 0.5\cdot (1-\mathds{P}(Y = {y_i})) \cdot \mathds{P}(Y = {y_i})\\
    &=\mathds{P}^2(Y = {y_i})+ 0.5\cdot \mathds{P}(Y=y_i) -0.5\cdot \mathds{P}^2(Y=y_i) \\
    &\quad + 0.5\cdot \mathds{P}(Y=y_i) -0.5\cdot \mathds{P}^2(Y=y_i)\\
    &=\mathds{P}(Y = {y_i}) = \frac{\lambda}{e^{-\lambda}-e^{-\lambda C}} {e ^{-\lambda y_i}} 
    \end{aligned}
\end{equation}
According to Eq.\ref{Eq.LTdistribution} and Eq.\ref{Eq.apdxmixupformu}, the $\xi$-Aug samples in mixed dataset $\mathcal{D}_\nu$ generated by \textit{mixup} follow the same distribution of the original long-tailed one. Therefore, the head gets more regulation than the tail. One the one hand, the classification performance will be promoted. On the other hand, however, the performance gap between the head and tail still exists.

\end{proof}

\subsection{Proof of Corollary \ref{Col.2}} \label{Apdx:Coro2}
\begin{corollary}
\label{Col.2}
When $\bm{\xi_{i,j}^* \sim\mathscr{U}(\pi_{y_i},\pi_{y_j},\alpha,\alpha)},\alpha\in [0,1]$, the newly mixed dataset $\mathcal{D}_\nu$ composed of $\xi$-Aug samples $(\widetilde{x}_{i,j}, \widetilde{y}_{i,j})$ follows a middle-majority distribution, where $(x _i,y_i)$ and $(x_j,y_j)$ are both \textbf{randomly} sampled from $\mathcal{D}_{train}$.
\begin{equation}
\label{Eq.ProbIM}
\begin{aligned}
    \mathds{P}_{\textit{mixup}}^*(Y^* = {y_i}) &= \mathds{P}(Y = {y_i})\int_{y_j < {y_i}} {\mathds{1}\left(\int {\xi^*_{i,j} \mathscr{U}(\pi_i,\pi_j,\alpha ,\alpha )d\xi^*_{i,j} \geq 0.5 }\right) \mathds{P}(Y = {y_j})dy_j} \\
    & = \frac{\lambda }{{{{\left( {{e^{ - \lambda }} - {e^{ - C\lambda }}} \right)}^2}}}\left( {{e^{ - \lambda \left( {{y_i} + 1} \right)}} - {e^{ - 2\lambda {y_i}}}} \right) , y_i \in [1,C]
\end{aligned}
\end{equation}
\end{corollary}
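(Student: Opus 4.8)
The plan is to compute $\mathds{P}^*_{\textit{mixup}}(Y^* = y_i)$ by first establishing the indicator condition under the UniMix Factor distribution $\mathscr{U}(\pi_{y_i},\pi_{y_j},\alpha,\alpha)$, and then evaluating the resulting integral over the long-tailed prior $\mathds{P}(Y=y_j)$ from Basic Setting~\ref{basicsetting}. The first step is to observe that $\mathscr{U}$ is just the $Beta(\alpha,\alpha)$ density shifted so that its mean (which is $0.5$ for the symmetric Beta) is translated to $\pi_{y_j}/(\pi_{y_i}+\pi_{y_j})$; hence $\int \xi^*_{i,j}\,\mathscr{U}(\pi_i,\pi_j,\alpha,\alpha)\,d\xi^*_{i,j} = 0.5 - 0.5 + \pi_{y_j}/(\pi_{y_i}+\pi_{y_j}) = \pi_{y_j}/(\pi_{y_i}+\pi_{y_j})$ (with a small wrap-around correction at the boundary that I would argue is negligible or handled by the piecewise definition). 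The condition $\int \xi^*_{i,j}\mathscr{U}\,d\xi^*_{i,j} \geq 0.5$ then becomes $\pi_{y_j}/(\pi_{y_i}+\pi_{y_j}) \geq 0.5$, i.e. $\pi_{y_j} \geq \pi_{y_i}$, which since the prior $\pi_y = \alpha e^{-\lambda y}$ is decreasing in $y$ is exactly $y_j \leq y_i$. This justifies replacing the $\mathds{1}(\cdot)$ factor by the integration range $y_j < y_i$ already written in the statement, so that the $\xi$-Aug sample $\widetilde x_{i,j}$ is attributed to class $y_i$ precisely when $x_j$ comes from a rarer (higher-index) class — wait, from the \emph{more common} class $y_j < y_i$ while the label $y_i$ is the rarer one, which is the intended balance-shifting effect.

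Next I would carry out the integral. Substituting $\mathds{P}(Y=y_i) = \frac{\lambda}{e^{-\lambda}-e^{-\lambda C}} e^{-\lambda y_i}$ and $\mathds{P}(Y=y_j) = \frac{\lambda}{e^{-\lambda}-e^{-\lambda C}} e^{-\lambda y_j}$, the quantity becomes
\begin{equation}
\mathds{P}^*_{\textit{mixup}}(Y^*=y_i) = \left(\frac{\lambda}{e^{-\lambda}-e^{-\lambda C}}\right)^2 e^{-\lambda y_i} \int_{1}^{y_i} e^{-\lambda y_j}\,dy_j.
\end{equation}
Evaluating $\int_{1}^{y_i} e^{-\lambda y_j}\,dy_j = \frac{1}{\lambda}\left(e^{-\lambda} - e^{-\lambda y_i}\right)$ and multiplying through gives
\begin{equation}
\mathds{P}^*_{\textit{mixup}}(Y^*=y_i) = \frac{\lambda}{\left(e^{-\lambda}-e^{-\lambda C}\right)^2}\, e^{-\lambda y_i}\left(e^{-\lambda} - e^{-\lambda y_i}\right) = \frac{\lambda}{\left(e^{-\lambda}-e^{-\lambda C}\right)^2}\left(e^{-\lambda(y_i+1)} - e^{-2\lambda y_i}\right),
\end{equation}
which is exactly Eq.~\ref{Eq.ProbIM}. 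To confirm the "middle-majority" claim I would then differentiate $g(y) = e^{-\lambda(y+1)} - e^{-2\lambda y}$ in $y$, set $g'(y)=0$, and check that the critical point lies strictly inside $[1,C]$ (it occurs where $e^{-\lambda} = 2 e^{-\lambda y}$, i.e. $y = 1 + \frac{\ln 2}{\lambda}$), with $g$ increasing before it and decreasing after — giving the unimodal shape peaked away from both ends, consistent with Fig.~\ref{Fig.curve-200-100}.

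The main obstacle I anticipate is the rigor of the first step: the piecewise "wrap-around" definition of $\mathscr{U}$ in Eq.~\ref{Eq.UnimixFactor} means the shifted density is not a clean translation — the portion of $Beta(\alpha,\alpha)$ that would fall outside $[0,1]$ is folded back to the other side, so the true mean of $\xi^*_{i,j}$ is $\pi_{y_j}/(\pi_{y_i}+\pi_{y_j})$ only up to a correction term involving $\int$ of the Beta tail. One must either argue this correction does not flip the sign of the comparison with $0.5$ (which holds whenever $\alpha$ is small enough that $Beta(\alpha,\alpha)$ concentrates near $0$ and $1$, the regime assumed throughout), or absorb it into the definition by checking the monotonicity of the mean in $\pi_{y_j}/(\pi_{y_i}+\pi_{y_j})$ directly. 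Everything after that — the exponential integral and the unimodality check — is routine.
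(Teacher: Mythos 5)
Your proposal is correct and follows essentially the same route as the paper's own proof: reduce the indicator to the condition $\pi_{y_j}\geq\pi_{y_i}$ (equivalently $y_j\leq y_i$) via the tail-favored mean of $\xi^*_{i,j}$, then evaluate $\mathds{P}(Y=y_i)\int_1^{y_i}\mathds{P}(Y=y_j)\,dy_j$ to obtain Eq.~\ref{Eq.ProbIM}. If anything you are more explicit than the paper, which merely asserts a derivative zero in $[1,C]$ where you locate it at $y=1+\ln 2/\lambda$, and which does not discuss the wrap-around correction to the mean of $\mathscr{U}$ that you rightly flag as the only delicate point.
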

\begin{proof}
Follow the settings of Proof \ref{Apdx:Coro1} and Eq.\ref{Eq.UnimixFactor}, we can get the following relationship of the label $y_i$ and $y_j$ with UniMix Factor $\xi_{i,j}^*$. It easy to know $\pi_{y_i}\leq \pi_{y_j}$ if the index $i\geq j$ for the reason that class $y_{j}$ occupies more instances than class $y_{i}$ in long-tailed distribution described as Eq.\ref{Eq.LTdistribution}. Under this circumstances, if consider UniMix Factor $\xi_{i,j}^*$ as a constant intuitively, we can deduce that $\xi_{i,j}=\pi_{y_j}/(\pi_{y_i}+\pi_{y_j}) \geq 0.5$, i.e.,
\begin{equation}
\label{Eq.constantunimixfactor}
    i \geq j \quad \Rightarrow \quad \pi_{y_i} \leq \pi_{y_j} \quad \Rightarrow \quad \xi_{i,j} = \frac{\pi_{y_i}}{\pi_{y_i}+\pi_{y_j}} \geq 0.5
\end{equation}

To improve the robustness and generalization, we consider $\xi_{i,j}^* \sim \mathscr{U}(\pi_{y_i},\pi_{y_j},\alpha,\alpha)$, which extends $\xi_{i,j}^*$ to $\pi_{y_j}/(\pi_{y_i}+\pi_{y_j})$ and its vicinity. Hence we can generalize Eq.\ref{Eq.constantunimixfactor} to: 
\begin{equation}
    y_i \geq y_j \quad \Rightarrow \quad \pi_{y_i} \leq \pi_{y_j} \quad \Rightarrow \quad \xi_{i,j} \geq 0.5 \Rightarrow \int {\xi^*_{i,j} \mathscr{U}(\pi_i,\pi_j,\alpha ,\alpha )d\xi^*_{i,j} \geq 0.5}
\end{equation}
Given any $i,j \in [1,C]$, the $\widetilde{x}_{i,j}$ will tend to be a $\xi$-Aug sample of the class $y_i$ if $i\geq j$ for the tail-favored mixing factor. In other words, $\widetilde{x}_{i,j}$ will be a \textit{$\xi$-Aug} sample for class $y_i$ with the mean of $\xi_{i,j}^*$ to be $\pi_{y_j}/(\pi_{y_i}+\pi_{y_j})$. Hence, the probability that {$\widetilde{x}_{i,j}$} belongs to class $y_i$ is:
\begin{equation}
\label{Eq.unimixfactordistribution}
    \begin{aligned}
    \mathds{P}^*_{\textit{mixup}}(Y = y_i) &= \mathds{P}(Y = y_i) \cdot \mathds{1}\left( {\int {\xi_{i,j}^*\mathscr{U}({\pi _i},{\pi _j},\alpha ,\alpha )d\xi_{i,j}^* \geq 0.5} } \right) \cdot \mathds{P}(Y < {y_i}) \\
    &=\mathds{P}(Y = {y_i})\int_{y_j < {y_i}} {\mathds{1}\left(\int {\xi^*_{i,j} \mathscr{U}(\pi_i,\pi_j,\alpha ,\alpha )d\xi^*_{i,j} \geq 0.5 }\right) \mathds{P}(Y = {y_j})dy_j} \\
    &=\mathds{P}(Y = {y_i}) \cdot \int_1^{y_i} {\mathds{P}(Y = {y_j})d{y_j}} \\
    &=\frac{\lambda }{{{e^{ - \lambda }} - {e^{ - \lambda C}}}}{e^{-\lambda {y_i}}} \cdot \int_1^{y_i} {\frac{\lambda }{{{e^{ - \lambda }} - {e^{ - \lambda C}}}}{e^{-\lambda {y_j}}}d{y_j}}  \\
    &= \frac{\lambda }{{{e^{ - \lambda }} - {e^{ - \lambda C}}}}{e^{-\lambda {y_i}}} \cdot \left. { - \frac{{\frac{\lambda }{{{e^{ - \lambda }} - {e^{ - \lambda C}}}}}}{\lambda }{e^{ - \lambda {y_j}}}} \right|_1^{y_i} \\
    &=\frac{\lambda }{{{{\left( {{e^{ - \lambda }} - {e^{ - C\lambda }}} \right)}^2}}}\left( {{e^{ - \lambda ({y_i} + 1)}} - {e^{ - 2\lambda {y_i}}}} \right)
    \end{aligned}
\end{equation}
According to Eq.\ref{Eq.unimixfactordistribution}, it's easy to find a derivative zero point in range [1,C]. Hence, the newly distributed dataset $\mathcal{D}_\nu$ generated by \textit{mixup} with UniMix Factor $\xi^*_{i,j}$ follows a middle-majority distribution that most data concentrates on middle classes.
\end{proof}

\subsection{Proof of Corollary \ref{Col.3}} \label{Apdx:Coro3}
\begin{corollary}
\label{Col.3}
When $\bm{\xi_{i,j}^* \sim \mathscr{U}(\pi_{y_i},\pi_{y_j},\alpha,\alpha)},\alpha\in [0,1]$, the newly mixed dataset $\mathcal{D}_\nu$ composed of $\xi$-Aug samples $(\widetilde{x}_{i,j}, \widetilde{y}_{i,j})$ follows a tail-majority distribution, where $(x _i,y_i)$ is \textbf{randomly} and $(x_j,y_j)$ is \textbf{inversely} sampled from $\mathcal{D}_{train}$, respectively.
\begin{equation}
    \label{Eq.Pours}
    \begin{aligned}
        {\mathds{P}_{UniMix}}({Y^*} = {y_i}) &= \mathds{P}(Y = {y_i})\int_{y_j < {y_i}} { \mathds{1}\left(\int {\xi^*_{i,j} \mathscr{U}(\pi_i,\pi_j,\alpha ,\alpha )d\xi^*_{i,j} \geq 0.5 }\right) \mathds{P}_{inv}(Y=y_j)dy_j}  \\
        &= \frac{\lambda }{{\left( {{e^{ - \lambda }} - {e^{ - C\lambda }}} \right)\left( {{e^{ - C\tau \lambda }} - {e^{ - \tau \lambda }}} \right)}}\left( {{e^{ - \lambda {y_i}\left( {\tau  + 1} \right)}} - {e^{ - \lambda \left( {\tau  + {y_i}} \right)}}} \right), y_i \in [1,C]
    \end{aligned}
\end{equation}
\end{corollary}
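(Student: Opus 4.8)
The plan is to mirror the derivation of Corollary \ref{Col.2} essentially verbatim, changing only the sampling distribution of the second element of the pair: where Corollary \ref{Col.2} integrates $\mathds{P}(Y=y_j)$ over $y_j<y_i$, here I integrate $\mathds{P}_{inv}(Y=y_j)$ over the same range. So the first step is to record the key structural fact carried over from Corollary \ref{Col.2}: for the UniMix Factor $\xi^*_{i,j}\sim\mathscr{U}(\pi_{y_i},\pi_{y_j},\alpha,\alpha)$ one has $\int \xi^*_{i,j}\,\mathscr{U}(\pi_i,\pi_j,\alpha,\alpha)\,d\xi^*_{i,j}\geq 0.5$ exactly when $\pi_{y_i}\leq\pi_{y_j}$, i.e. when $y_i\geq y_j$. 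Consequently the indicator $\mathds{1}(\cdot)$ inside Eq.\ref{Eq.P_ours} collapses to the region $y_j<y_i$, and ${\mathds{P}_{UniMix}}(Y^*=y_i)=\mathds{P}(Y=y_i)\int_1^{y_i}\mathds{P}_{inv}(Y=y_j)\,dy_j$.

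The second step is to compute $\mathds{P}_{inv}$ in closed form. Using the Basic Setting, $\mathds{P}^\tau(Y=y_j)=\alpha^\tau e^{-\lambda\tau y_j}$, so the normalizing integral in Eq.\ref{Eq.reversesample} is $\int_1^C \alpha^\tau e^{-\lambda\tau y_j}\,dy_j = \frac{\alpha^\tau}{\lambda\tau}\big(e^{-\lambda\tau}-e^{-\lambda\tau C}\big)$; dividing, the $\alpha^\tau$ cancels and $\mathds{P}_{inv}(Y=y_j)=\dfrac{\lambda\tau\, e^{-\lambda\tau y_j}}{e^{-\lambda\tau}-e^{-\lambda\tau C}}=\dfrac{\lambda\tau\,e^{-\lambda\tau y_j}}{-(e^{-\lambda\tau C}-e^{-\lambda\tau})}$. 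I would keep the sign arranged as $\big(e^{-\lambda\tau C}-e^{-\lambda\tau}\big)$ in the denominator to match the stated answer (note that for $\tau<1$, and in particular $\tau<0$, this quantity is positive, so the sign bookkeeping matters and is worth doing carefully).

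The third step is the elementary integration. Plugging in,
\begin{equation}
\label{Eq.unimixproofmain}
\begin{aligned}
{\mathds{P}_{UniMix}}(Y^*=y_i)
&= \frac{\lambda\, e^{-\lambda y_i}}{e^{-\lambda}-e^{-\lambda C}}\cdot\frac{\lambda\tau}{e^{-\lambda\tau C}-e^{-\lambda\tau}}\int_1^{y_i} e^{-\lambda\tau y_j}\,dy_j\\
&= \frac{\lambda\, e^{-\lambda y_i}}{e^{-\lambda}-e^{-\lambda C}}\cdot\frac{\lambda\tau}{e^{-\lambda\tau C}-e^{-\lambda\tau}}\cdot\frac{1}{-\lambda\tau}\Big(e^{-\lambda\tau y_i}-e^{-\lambda\tau}\Big)\\
&= \frac{\lambda}{\big(e^{-\lambda}-e^{-\lambda C}\big)\big(e^{-\lambda\tau C}-e^{-\lambda\tau}\big)}\Big(e^{-\lambda\tau} e^{-\lambda y_i}-e^{-\lambda\tau y_i}e^{-\lambda y_i}\Big),
\end{aligned}
\end{equation}
and combining exponents in the last bracket gives exactly $e^{-\lambda(\tau+y_i)}-e^{-\lambda y_i(\tau+1)}$, i.e. the negative of the bracket in Eq.\ref{Eq.P_ours}; the sign is absorbed by writing the denominator factor as $\big(e^{-\lambda\tau C}-e^{-\lambda\tau}\big)$ versus $\big(e^{-\lambda\tau}-e^{-\lambda\tau C}\big)$, which reconciles with the statement. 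Finally I would verify the ``tail-majority'' claim by checking the sign of the derivative of this expression in $y_i$: differentiating, the extremum sits at $e^{-\lambda\tau y_i} = \frac{\tau+1}{1}\cdot(\text{const})$ type condition, and for $\tau<1$ the mode shifts toward larger $y_i$ (the tail) relative to Corollary \ref{Col.2}; a sanity check at $\tau=1$ should recover Eq.\ref{Eq.ProbIM}, and $\tau\to 0$ should flatten toward the uniform distribution, consistent with Fig.\ref{Fig.curve-200-100}.

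The only real subtlety — not a hard obstacle, but the place an error would creep in — is the sign/orientation bookkeeping in $\mathds{P}_{inv}$ and in the final bracket when $\tau$ is negative, together with the implicit assumption that the ``$\xi^*$ favors $y_i$ iff $y_i\geq y_j$'' equivalence from Corollary \ref{Col.2} is unaffected by replacing the sampler (it is, since that equivalence depends only on $\xi^*_{i,j}$, not on how $y_j$ was drawn). Everything else is the same one-line exponential integral as before.
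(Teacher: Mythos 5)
Your proposal follows essentially the same route as the paper's own proof: carry over the indicator-collapse $\int\xi^*_{i,j}\mathscr{U}\,d\xi^*_{i,j}\geq 0.5 \Leftrightarrow y_j<y_i$ from Corollary \ref{Col.2}, compute $\mathds{P}_{inv}$ in closed form as $\lambda\tau e^{-\lambda\tau y_j}/(e^{-\lambda\tau}-e^{-\lambda\tau C})$, and evaluate the one-line exponential integral, so the approach and result agree with the paper. The only blemish is that the first line of your display writes the normalization of $\mathds{P}_{inv}$ with the denominator orientation $(e^{-\lambda\tau C}-e^{-\lambda\tau})$ but drops the accompanying minus sign, which is why your bracket emerges as the negative of the one in the statement; since you explicitly flag this and the two sign flips cancel upon restoring the correct orientation, the reconciliation is sound and the final expression matches Eq.\ref{Eq.Pours}.
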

\begin{proof}
Follow the Basic Setting of Proof.\ref{basicsetting}, suppose $(x_i,y_i)$ is randomly sampled from $\mathcal{D}_{train}$, while $(x_j,y_j)$ is inversely sampled from $\mathcal{D}_{train}$ related to $\tau$. i.e., the probability of $\mathds{P}(Y=y_i)$ is:
\begin{equation}
    \begin{aligned}
    &\mathds{P}(Y=y_i)=\frac{\lambda }{{{e^{ - \lambda }} - {e^{ - \lambda C}}}}{e^{-\lambda {y_i}}} \\
    \end{aligned}
\end{equation}
The probability of UniMix Sampler that the sampled class $Y$ belongs to $y_i$ is $\mathds{P}_{inv}(Y=y_j)$:
\begin{equation}
    \begin{aligned}
    {\mathds{P}_{inv}}(Y = {y_j}) &= \frac{{{\mathds{P}^\tau }(Y = {y_j})}}{{\int_{{y_k} \in \mathcal{Y}} {{\mathds{P}^\tau }(Y = {y_k})d{y_k}} }} \\
    &= \frac{{{{\left( {\frac{\lambda }{{{e^{ - \lambda }} - {e^{ - \lambda C}}}}{e^{ - \lambda {y_j}}}} \right)}^\tau }}}{{\int_{{y_k} \in \mathcal{Y}} {{{\left( {\frac{\lambda }{{{e^{ - \lambda }} - {e^{ - \lambda C}}}}{e^{ - \lambda {y_k}}}} \right)}^\tau }d{y_k}} }} \\
    &=\frac{{{e^{ - \lambda \tau {y_j}}}}}{{\int_1^C {{e^{ - \lambda \tau {y_k}}}d{y_k}} }} = \frac{{\lambda \tau {e^{ - \lambda \tau {y_j}}}}}{{{e^{ - \lambda \tau }} - {e^{ - \lambda \tau C}}}}
    \end{aligned}
\end{equation}
Follow the same proof as Proof.\ref{Apdx:Coro2}, the probability that {$\widetilde{x}_{i,j}$} belongs to class $y_i$ is:
\begin{equation}
\label{Eq.apdxunimixdistribution}
    \begin{aligned}
    \mathds{P}_{\textit{UniMix}}(Y = y_i) &= \mathds{P}(Y = y_i) \cdot \mathds{1}\left( {\int {\xi_{i,j}^*\mathscr{U}({\pi _i},{\pi _j},\alpha ,\alpha )d\xi_{i,j}^* \geq 0.5} } \right) \cdot \mathds{P}_{inv}(Y < {y_i}) \\
    &=\mathds{P}(Y = {y_i})\int_{y_j < {y_i}} {\mathds{1}\left(\int {\xi^*_{i,j} \mathscr{U}(\pi_i,\pi_j,\alpha ,\alpha )d\xi^*_{i,j} \geq 0.5 }\right) \mathds{P}_{inv}(Y = {y_j})dy_j} \\
    &=\mathds{P}(Y = {y_i}) \cdot \int_1^{y_i} {\mathds{P}_{inv}(Y = {y_j})d{y_j}} \\
    &=\frac{\lambda }{{{e^{ - \lambda }} - {e^{ - \lambda C}}}}{e^{-\lambda {y_i}}} \cdot \int_1^{y_i} { \frac{{\lambda \tau {e^{ - \lambda \tau {y_j}}}}}{{{e^{ - \lambda \tau }} - {e^{ - \lambda \tau C}}}}d{y_j}}  \\
    &=\frac{\lambda }{{{e^{ - \lambda }} - {e^{ - \lambda C}}}}{e^{ - \lambda {y_i}}} \cdot \left. {\frac{{ - {e^{ - \lambda \tau {y_j}}}}}{{{e^{ - \lambda \tau }} - {e^{ - \lambda \tau C}}}}} \right|_1^{y_i} \\
    &= \frac{\lambda }{{\left( {{e^{ - \lambda }} - {e^{ - C\lambda }}} \right)\left( {{e^{ - C\tau \lambda }} - {e^{ - \tau \lambda }}} \right)}}\left( {{e^{ - \lambda {y_i}\left( {\tau  + 1} \right)}} - {e^{ - \lambda \left( {\tau  + {y_i}} \right)}}} \right)
    \end{aligned}
\end{equation}
According to Eq.\ref{Eq.apdxunimixdistribution}, $\mathds{P}_{\textit{UniMix}}(Y = y_i)$ is a gently increasing function in range [1,C]. Hence, the newly mixed dataset $\mathcal{D}_\nu$ generated by UniMix follows a tail-majority distribution that adequate data concentrates on tail classes.

\end{proof}

\subsection{More visualization of Corollary \ref{Col.1},\ref{Col.2},\ref{Col.3}} \label{Apdx:addcurve}
\begin{figure}[h!]
    \centering
    \includegraphics[width=0.8\textwidth]{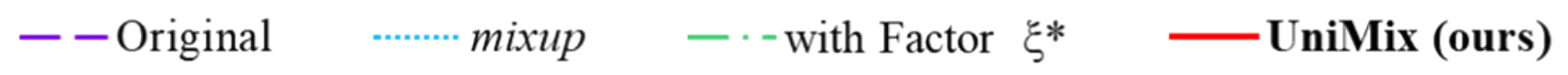}
    \subfigure[$C=10,\rho=10$]{
    \begin{minipage}[b]{0.315\textwidth}
        \centering
        \includegraphics[width=1\textwidth]{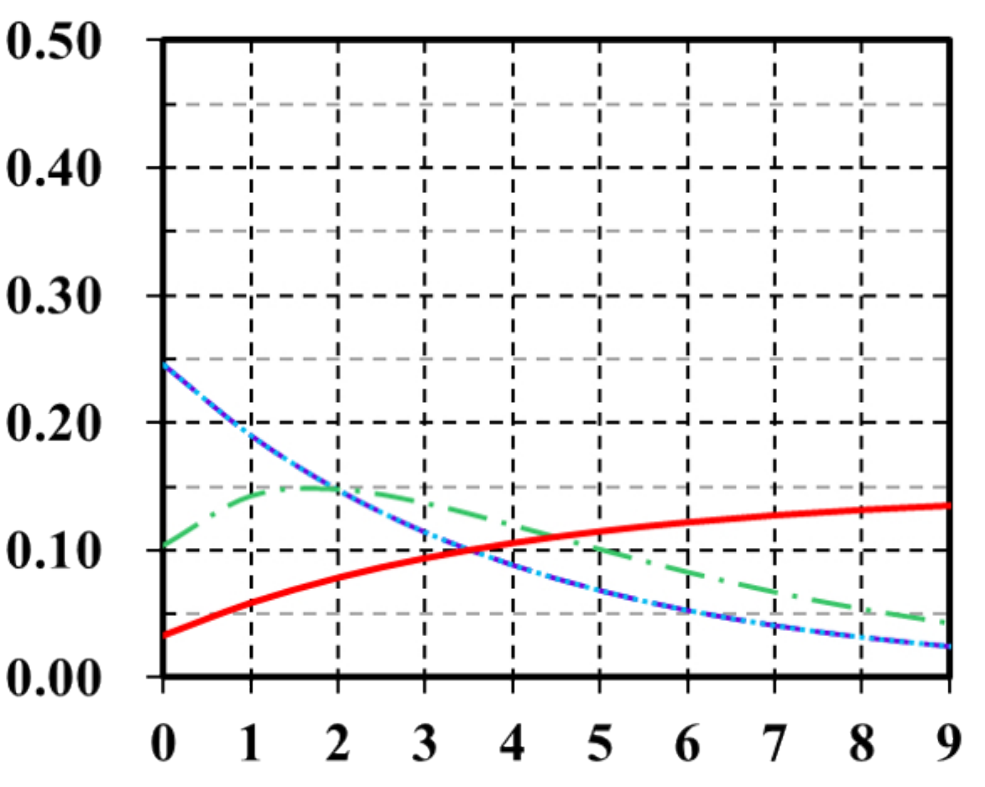}
    \end{minipage}%
    }
    \subfigure[$C=10,\rho=100$]{
    \begin{minipage}[b]{0.315\textwidth}
        \centering
        \includegraphics[width=1\textwidth]{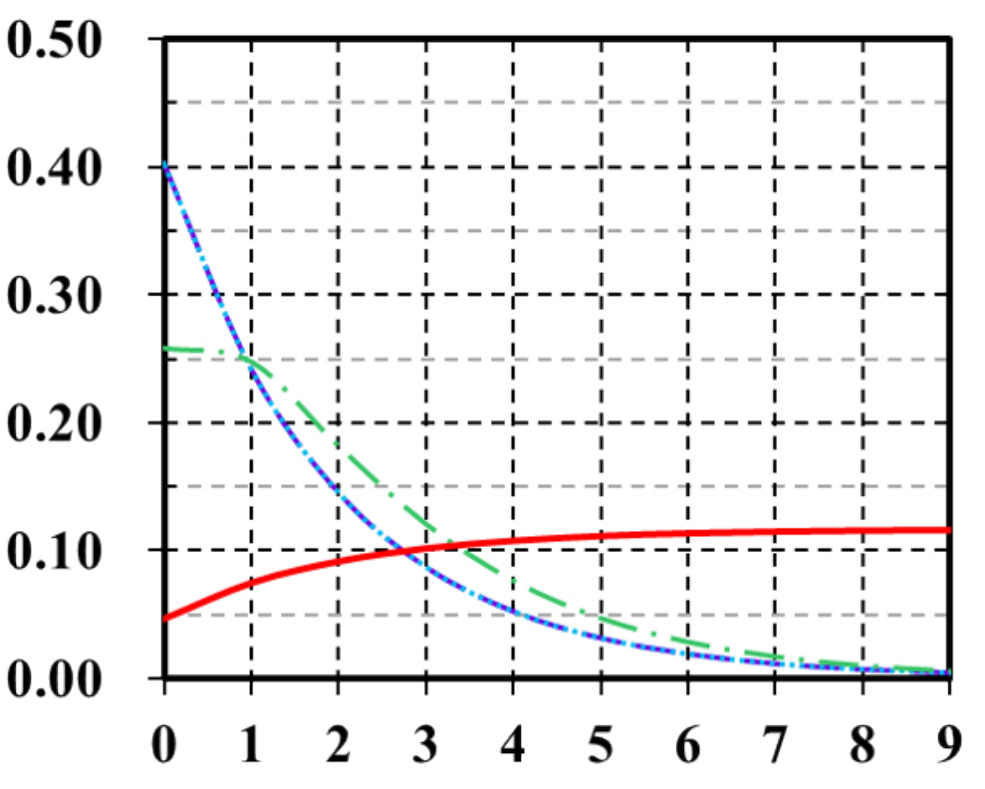}
    \end{minipage}%
    }
    \subfigure[$C=10,\rho=200$]{
    \begin{minipage}[b]{0.315\textwidth}
        \centering
        \includegraphics[width=1\textwidth]{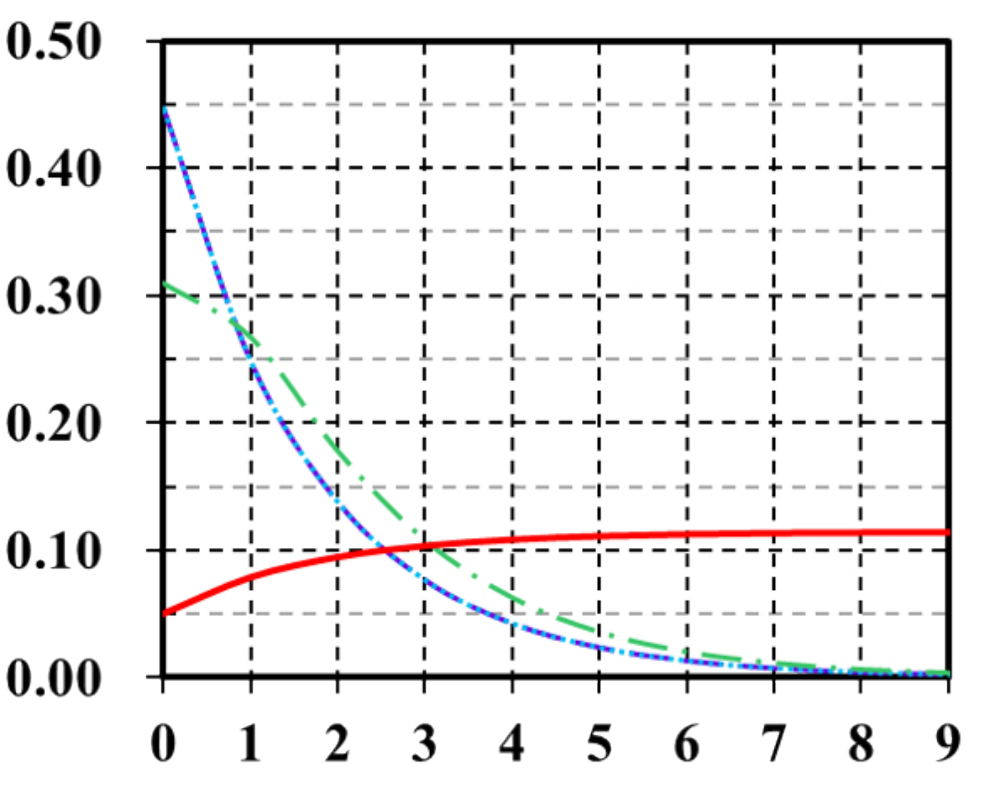}
    \end{minipage}
    }%
    
    \subfigure[$C=100,\rho=10$]{
    \begin{minipage}[b]{0.315\textwidth}
        \centering
        \includegraphics[width=1\textwidth]{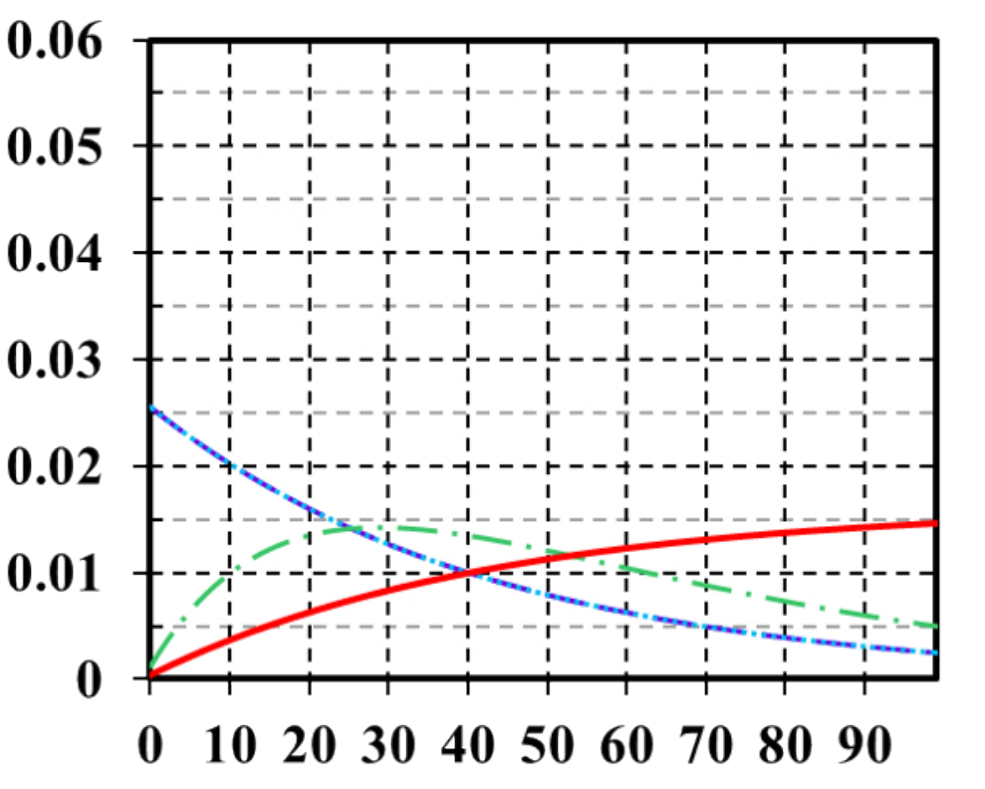}
    \end{minipage}
    }
    \subfigure[$C=100,\rho=100$]{
    \begin{minipage}[b]{0.315\textwidth}
        \centering
        \includegraphics[width=1\textwidth]{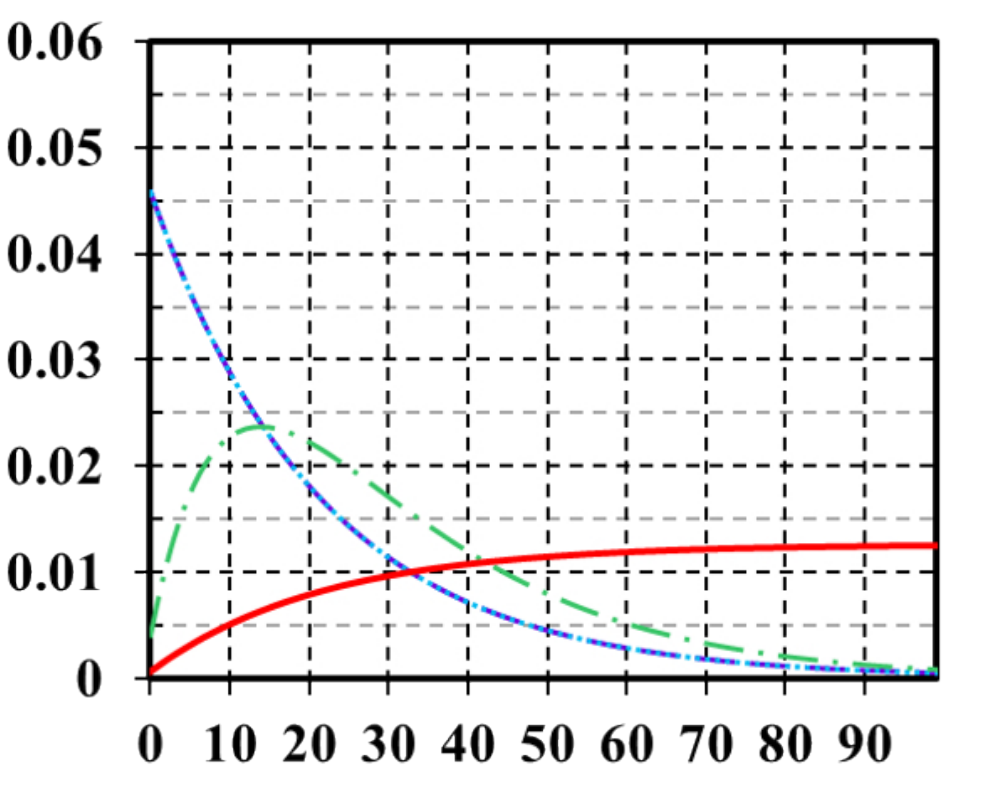}
    \end{minipage}
    }
    \subfigure[$C=100,\rho=200$]{
    \begin{minipage}[b]{0.315\textwidth}
        \centering
        \includegraphics[width=1\textwidth]{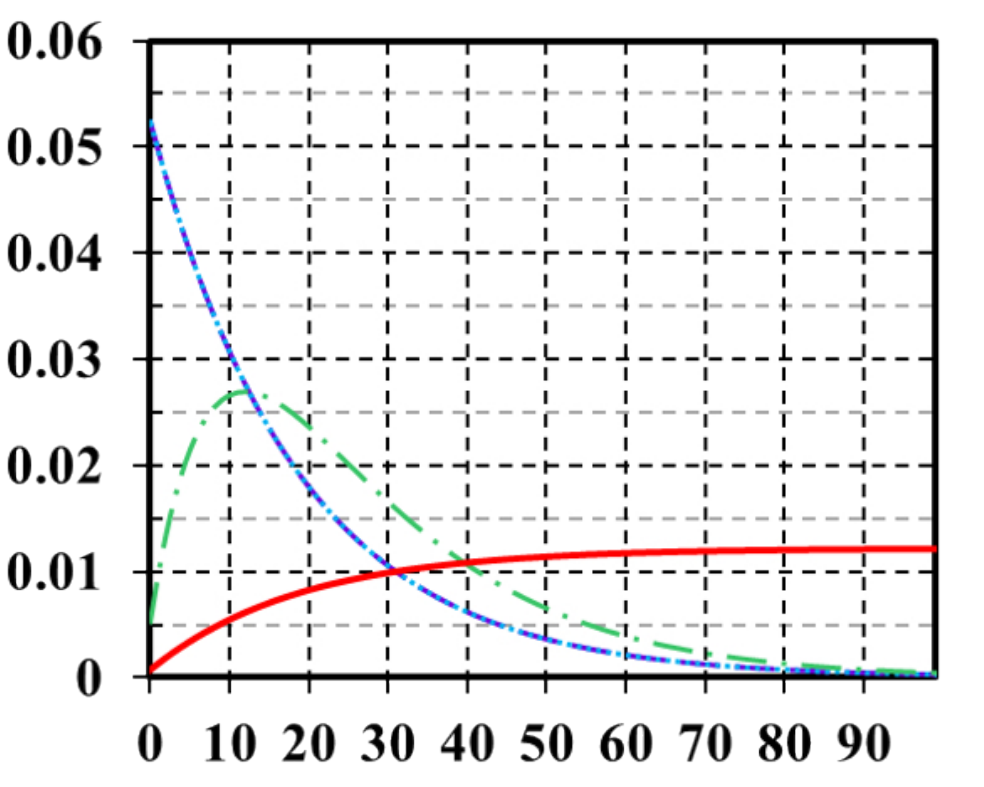}
    \end{minipage}
    }%
    \centering
    \caption{Additional visualized comparisons of \textit{$\xi$-Aug} samples distribution in Corollary \ref{Col.1},\ref{Col.2},\ref{Col.3}. $x$-axis: class indices. $y$-axis: probability of each class. \textit{mixup} (\textcolor{curvemixup}{blue}) exhibits the same LT distribution as origin (\textcolor{curveori}{purple}). UniMix Factor (\textcolor{curvefactor}{green}) alleviates such situation and the full pipeline ($\tau$=$-1$) constructs a more uniform distribution of \textit{$\xi$-Aug} (\textcolor{curveunimix}{red}), which contributes to a well-calibrated model. }
    \label{Fig.addcurve}
\end{figure}

We present additional visualized distribution of \textit{$\xi$-Aug} samples with different sample strategies for comprehensive comparisons, including $\rho \in \{10,100,200\}$ and $C \in \{10,100\}$. As illustrated in Fig.\ref{Fig.addcurve}, when the class number $C$ and imbalance factor $\rho$ get larger, the limitations of \textit{mixup} in LT scenarios gradually appear. The VRM dataset $\mathcal{D}_\nu$ generated by \textit{mixup} with na\"ive mixing factor and random sampler will make $\mathcal{D}_\nu$ following the same LT distribution. The newly mixed pseudo data by \textit{mixup} follows the head-majority distribution. It has limited contribution for the tail class' feature learning and regulation, which is the reason for its poor \textit{calibration}.

In contrast, the proposed UniMix Factor $\xi^*_{i,j}$ significantly promotes such situation and improves tail class's feature learning by the preference on the relatively fewer classes of the two samples in a pair. Because of the random sampler, the samples are still mainly from the head, and the newly mixed dataset will follows a middle-majority distribution. Most data concentrates on the middle of classes as green line shows. As $C$ and $\rho$ get larger, the middle distribution will get close to the head. Thanks to the UniMix Sampler that inversely draws data from $\mathcal{D}_{train}$, the $D_\nu$ is mainly composed of the head-tail 
pairs that contribute to the feature learning for the tail when integrated with UniMix Factor. As a result, $D_\nu$ follows the tail-majority distribution that improves the generalization on the tail. Specifically, when $C$ and $\rho$ get larger, the distribution of $D_\nu$ generated by UniMix still maintains satisfactory tail-majority distribution.

\setcounter{equation}{0}
\setcounter{subsection}{0}
\setcounter{table}{0}  
\setcounter{figure}{0}
\setcounter{corollary}{0}

\renewcommand{\thetable}{B\arabic{table}}
\renewcommand{\thefigure}{B\arabic{figure}}
\renewcommand{\theequation}{B.\arabic{equation}}

\section{Missing Proofs and derivations of Bayias} \label{Apdx:baybiasprove}

\subsection{Proof of Bayias} \label{Apdx:baybiasexist}
\begin{thm}
    For classification, let $\psi(x;\theta,W,b)$ be a hypothesis class of neural networks of input $X=x$, the classification with \textit{Softmax} should contain the influence of prior, i.e., the predicted label during training should be:
    \begin{equation}
\label{Eq.apdxthmbias}
    \begin{aligned}
    \hat{y}=\mathop{\arg\max}_{y_i \in \mathcal{Y}}\frac{{{e^{\psi {{(x;\theta ,W,b)}_{{y_i}}}+\bm{\log(\pi_{y_i})+\log (C)}}}}}{{\sum\nolimits_{{y_j} \in \mathcal{Y}} {{e^{\psi {{(x;\theta ,W,b)}_{{y_j}}}+\bm{\log(\pi_{y_j})+\log (C)}}}}}}
    \end{aligned}
\end{equation}
\end{thm}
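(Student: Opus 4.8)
The plan is to read the network's raw logits $\psi(x;\theta,W,b)_{y_i}$ (abbreviated $\psi_{y_i}$) as an estimate of the class-conditional \emph{likelihood} $\mathds{P}(X=x\mid Y=y_i)$ up to a factor depending only on $x$, and then to apply Bayes' rule separately under the training and the test label distributions, which differ only in the \emph{prior}. First I would fix the standing assumption that the likelihood $\mathds{P}(X=x\mid Y=y_i)$ is the same object in $\mathcal{D}_{train}$ and $\mathcal{D}_{test}$ — it is precisely the $x$--$y$ relationship that a shared feature extractor should capture — whereas the priors are $\mathds{P}_{train}(Y=y_i)=\pi_{y_i}$ and $\mathds{P}_{test}(Y=y_i)\equiv 1/C$.

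Second, I would invoke the standard fact that softmax cross-entropy is a strictly proper scoring rule, so that minimizing the empirical risk on samples drawn from $\mathds{P}_{train}$ drives the softmax over whatever logits are supplied to it toward the \emph{training} posterior $\mathds{P}_{train}(Y=y_i\mid X=x)$. By Bayes this posterior is $\propto \mathds{P}(X=x\mid Y=y_i)\,\pi_{y_i}$; requiring in addition that the bare logits carry only the transferable part, $e^{\psi_{y_i}}\propto \mathds{P}(X=x\mid Y=y_i)$ (equivalently $\propto\mathds{P}_{test}(Y=y_i\mid X=x)$, since the test prior is flat), the per-class offset $\mathscr{B}_{y_i}$ that must be inserted into the training-time softmax is pinned down by
\begin{equation*}
\frac{e^{\psi_{y_i}+\mathscr{B}_{y_i}}}{\sum_{y_j} e^{\psi_{y_j}+\mathscr{B}_{y_j}}}
= \mathds{P}_{train}(Y=y_i\mid X=x)
= \frac{\mathds{P}(X=x\mid Y=y_i)\,\pi_{y_i}}{\sum_{y_j}\mathds{P}(X=x\mid Y=y_j)\,\pi_{y_j}},
\end{equation*}
which forces $e^{\mathscr{B}_{y_i}}\propto \pi_{y_i}$, i.e. $\mathscr{B}_{y_i}=\log\pi_{y_i}+c$ for a class-independent constant $c$.

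Third, I would fix $c$ by the consistency property emphasized in the paper: $\mathscr{B}_{y_i}$ should vanish whenever the training set is already balanced. Taking $c=\log C$ gives $\mathscr{B}_{y_i}=\log\pi_{y_i}+\log C=\log\!\big(\pi_{y_i}\,/\,(1/C)\big)$, which is $0$ when $\pi_{y_i}\equiv 1/C$; since $c$ is class-independent it alters neither the softmax nor the $\arg\max$, so this choice is legitimate. Substituting $\mathscr{B}_{y_i}=\log\pi_{y_i}+\log C$ back into the training-time softmax yields exactly Eq.~\ref{Eq.apdxthmbias}: during training the predicted label is the $\arg\max$ over $y_i$ of $e^{\psi_{y_i}+\log\pi_{y_i}+\log C}$ normalized over $y_j$.

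The main obstacle is the conceptual identification in the middle step — justifying that the uncorrected network "is" the likelihood, i.e. that the prior should enter additively in the logits before the softmax rather than, say, as a multiplicative loss reweighting. I would defend this with the proper-scoring-rule argument (CE recovers the conditional posterior at the population optimum) together with the observation that $\psi$ appears linearly inside $\log\mathrm{softmax}$, so the unique per-class shift reconciling the training posterior with a prior-free core is $\log\pi_{y_i}$ up to an additive constant; the residual $\log C$ should be flagged explicitly as a normalization choice that changes no prediction but makes $\mathscr{B}_y$ degenerate to zero in the balanced case.
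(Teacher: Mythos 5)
Your proposal is correct and follows essentially the same route as the paper's Appendix B.1 proof: both treat the network as estimating the class-conditional likelihood shared across train and test, apply Bayes' rule under the two different priors ($\pi_{y_i}$ versus $1/C$), and conclude that the training-time softmax must carry the additive offset $\mathscr{B}_{y_i}=\log\pi_{y_i}+\log C$. The only cosmetic difference is that you make the proper-scoring-rule justification explicit and treat $\log C$ as a class-independent normalization fixed by the balanced-case consistency requirement, whereas the paper obtains the same constant directly as $-\log(1/C)$ from the uniform test prior.
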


\begin{proof}
Generally, a classifier can be modeled as:
\begin{equation}
\label{Eq.apdxOrigalOptim}
    \begin{aligned}
    \hat{y}=\mathop{\arg\max}_{y_i \in \mathcal{Y}}\frac{e^{\sum\nolimits_{{d_i} \in D} [{{{({W^T})_{{y_i}}^{({d_i})}\mathcal{F}{{(x;\theta )}^{(d_i)}}] + {b_{{y_i}}}}}} }}{{\sum\nolimits_{{y_j} \in \mathcal{Y}} e^{\sum\nolimits_{{d_i} \in D} [{{{({W^T})_{{y_j}}^{({d_i})}\mathcal{F}{{(x;\theta )}^{(d_i)}}] + {b_{{y_j}}}}}} } }} \triangleq \mathop{\arg\max}_{y_i \in \mathcal{Y}}\frac{{{e^{\psi {{(x;\theta ,W,b)}_{{y_i}}}}}}}{{\sum\nolimits_{{y_j} \in \mathcal{Y}} {{e^{\psi {{(x;\theta ,W,b)}_{{y_j}}}}}} }}
    \end{aligned}
\end{equation}
where $\hat{y}$ indicates the predicted label and $\mathcal{F}(x;\theta)\in \mathds{R}^{D\times1}$ is the $D$-dimension feature extracted by the backbone with parameter $\theta$. $W \in \mathds{R}^{D\times C}$ represents the parameter matrix of the classifier. The model attempts to get the maximum $\hat{y}$ given $x$, i.e., to maximize the \textit{posterior}, satisfying the following relationship between the \textit{prior} and \textit{likelihood} according to the Bayesian theorem:
\begin{equation}
\begin{aligned}
        \hat{y}&=\mathop{\arg\max}_{y_i \in \mathcal{Y}}{\mathds{P}}(Y = {y_i}|X = {x})\\
        &= \mathop{\arg\max}_{y_i \in \mathcal{Y}}\frac{{{\mathds{P}}(X = {x}|Y = {y_i}) \cdot {\mathds{P}}(Y = {y_i})}}{{\sum\nolimits_k {\mathds{P}(Y = {y_k})\prod\nolimits_j {\mathds{P}({X^{(j)}} = {x^{(j)}}|Y = {y_k})} } }} \\
        & \propto \mathop{\arg\max}_{y_i \in \mathcal{Y}}\mathds{P}(X=x|Y=y_i)\cdot \mathds{P}(Y=y_i)
\end{aligned}
\end{equation}
where $\sum\nolimits_k {\mathds{P}(Y = {y_k})\prod\nolimits_j {\mathds{P}({X^{(j)}} = {x^{(j)}}|Y = {y_k})} }$ is the \textit{normalized evidence factor}. $\mathds{P}(Y=y_i)$ is the \textit{prior} probability estimated by the instance proportion of each category in the dataset. To maximize \textit{posterior}, we need to get the $\mathds{P}(X|Y)$ in an ERM supervised training manner. However, in LT scenarios, the \textit{likelihood} is consistent in the train and test set, but the \textit{prior} is different. Hence, we derive the \textit{posterior} on train and test set separately:
\begin{equation}
        \left\{
\begin{aligned}
&\hat{y}\propto \mathop{\arg\max}_{y_i \in \mathcal{Y}}\mathds{P}(X=x|Y=y_i)\cdot \mathds{P}_{train}(Y=y_i)\\
&\hat{y}'\propto \mathop{\arg\max}_{y_i \in \mathcal{Y}}\mathds{P}(X=x'|Y'=y_i)\cdot \mathds{P}'_{test}(Y'=y_i)
\end{aligned}
\right.
\end{equation}
where $\hat{y},\hat{y}'$ represent the prediction results for the train and test set, respectively. The model $\psi(x;\theta,W,b)$ is just the \textit{likelihood} estimation $\mathds{P}(X=x|Y=y_i)$ of the train set. To obtain the posterior probability $y'=\mathop{\arg\max}_{y_i \in \mathcal{Y}}{\mathds{P}'(Y=y_i|X=x')}$ for inference, one should consider unifying the optimization direction on the train set and test set:
\begin{equation}
\begin{aligned}
    \mathds{P}_{test}(Y=y_i|X=x')&\propto \frac{\mathds{P}_{train}(Y=y_i|X=x) }{\mathds{P}_{train}(Y=y_i)} \cdot \mathds{P}_{test}(Y'=y_i) =\frac{\mathds{P}_{train}(Y=y_i|X=x)}{C\cdot \pi_{y_i}} \\
\end{aligned}
\end{equation}
For the difference of label \textit{prior}, the learned parameters of the model will also yield class-level bias. Hence, the actual optimization direction is not described as Eq.\ref{Eq.apdxOrigalOptim} because the bias incurred by \textit{prior} should be compensated at first.
\begin{equation}   
\begin{aligned}
    \hat{\theta},\hat{W},\hat{b} \triangleq \Theta &= \mathop{\arg \min}_{\Theta}\sum\nolimits_{\substack{{x_i\in \mathcal{X}'\land y_i \in \mathcal{Y}}}} \mathds{1}\left({y_i} \neq \mathop{\arg \max}_{y_j \in \mathcal{Y}}({\mathds{P}_{test}(Y = {y_j}|X = {x_i})}\right) \\
    &\Leftrightarrow \mathop{\arg \min}_{\Theta}\sum\nolimits_{\substack{{x_i\in \mathcal{X} \land y_i \in \mathcal{Y}}}} \mathds{1}\left ({y_i} \neq \mathop{\arg \max}_{y_j \in \mathcal{Y}}\frac{\mathds{P}_{train}(Y=y_j|X=x_i) }{C\cdot \pi_{y_j}}\right ) \\
    &= \mathop{\arg \min}_{\Theta}\sum\nolimits_{\substack{{x_i\in \mathcal{X} \land y_i \in \mathcal{Y}}}} \mathds{1}\left ({y_i} \neq \mathop{\arg \max}_{y_j \in \mathcal{Y}}\frac{e^{\psi(x_i;\theta,W,b)_{y_j}-\bm{\log C-\log (\pi_{y_j})}}}{{\sum\nolimits_{{y_k} \in \mathcal{Y}} {e^{\psi {{(x_i;\theta ,W,b)}_{{y_k}}}}} }}\right ) \\
    &\Leftrightarrow \mathop{\arg \min}_{\Theta}\sum\nolimits_{\substack{{x_i\in \mathcal{X} \land y_i \in \mathcal{Y}}}} \mathds{1}\left ({y_i} \neq \mathop{\arg \max}_{y_j \in \mathcal{Y}}\frac{e^{\psi(x_i;\theta,W,b)_{y_j}-\bm{\log (\pi_{y_j})-\log C}}}{{\sum\nolimits_{{y_k} \in \mathcal{Y}} {e^{\psi {{(x_i;\theta ,W,b)}_{{y_k}}}-\bm{\log (\pi_{y_k})-\log C}}} }}\right )
\end{aligned} 
\end{equation}
To correct the bias for inferring, the offset term that the model in LT datasets needs to compensate is:
\begin{equation}
\label{Eq.apdxbiascompensate2}
    \mathscr{B}_y = log(\pi_y) + log(C)
\end{equation}
\end{proof}
\subsection{Proof of classification \textit{calibration}} \label{apdx:baybiascaliprove}
\begin{thm}
\label{Thm.apdxCalibration1}
    $\mathscr{B}_y$-compensated cross-entropy loss in Eq.\ref{Eq.b-ce2} ensures classification \textit{calibration}. 
\begin{equation}
\label{Eq.b-ce2}
\begin{aligned}
\mathcal{L}_\mathscr{B}\left( {y_i,\psi (x;\Theta)} \right) =  - \log \frac{{{e^{\psi {{(x;\Theta)}_{{y_i}}} + \log \left( {{\pi _{{y_i}}}} \right) + \log \left( C \right)}}}}{{\sum\nolimits_{{y_k} \in \mathcal{Y}} {{e^{\psi {{(x;\Theta)}_{y_j}} + \log \left( {{\pi _{{y_k}}}} \right) + \log \left( C \right)}}} }}
\end{aligned}
\end{equation}
\end{thm}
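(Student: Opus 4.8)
The plan is to show that the \emph{population} minimizer of the $\mathscr{B}_y$-compensated risk reproduces the balanced (test-time) class posterior at every input, which is exactly the notion of classification \textit{calibration} used in the paper (predicted score $=$ actual correctness likelihood) and, in the sense of \cite{Aaai/logit_adjustment}, makes the induced classifier Bayes optimal. First I would pass from the sample average in Eq.\ref{Eq.b-ce2} to the conditional population risk at a fixed $x$ under the training distribution,
\[
\mathcal{R}(x) \;=\; -\sum_{y\in\mathcal{Y}} \mathds{P}_{train}(Y=y\mid X=x)\,\log \frac{\pi_{y}\,e^{\psi(x;\Theta)_{y}}}{\sum_{y_k\in\mathcal{Y}} \pi_{y_k}\,e^{\psi(x;\Theta)_{y_k}}},
\]
where the $\log C$ terms have dropped out because they shift every logit by the same constant and hence cancel inside the softmax; this also exhibits the equivalence with the pairwise form Eq.\ref{Eq.b-ce}, obtained by dividing numerator and denominator by $e^{\psi(x;\Theta)_{y_i}+\mathscr{B}_{y_i}}$.

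Next I would read $\mathcal{R}(x)$ as the cross-entropy between the fixed vector $q_y := \mathds{P}_{train}(Y=y\mid X=x)$ and the $\pi$-tilted softmax distribution $s_y(x) := \pi_y e^{\psi(x;\Theta)_y}/\sum_k \pi_k e^{\psi(x;\Theta)_k}$. As $\psi(x;\Theta)$ ranges over $\mathds{R}^{C}$ the vector $s(x)$ ranges over the whole probability simplex, so by Gibbs' inequality (equivalently, non-negativity of the KL divergence, since $\mathcal{R}(x)=H(q)+\mathrm{KL}(q\,\|\,s(x))$) the risk is minimized precisely when $s_y(x)=q_y$ for all $y$, i.e. $\pi_y e^{\psi(x;\Theta)_y}\propto \mathds{P}_{train}(Y=y\mid X=x)$.

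Then I would invert this relation using Bayes' rule. Since the class-conditional $\mathds{P}(X\mid Y)$ is shared between the long-tailed train set and the balanced test set (the standing assumption of Sec.\ref{sec:bayias}), we have $\mathds{P}_{train}(Y=y\mid X=x)\propto \pi_y\,\mathds{P}(X=x\mid Y=y)$ and $\mathds{P}_{test}(Y=y\mid X=x)\propto \tfrac1C\,\mathds{P}(X=x\mid Y=y)\propto \mathds{P}(X=x\mid Y=y)$. Dividing the minimizer relation by $\pi_y$ therefore gives $e^{\psi(x;\Theta)_y}\propto \mathds{P}(X=x\mid Y=y)\propto \mathds{P}_{test}(Y=y\mid X=x)$, so after normalization the bare softmax of $\psi$ equals the balanced posterior and, equivalently, the softmax of $\psi+\mathscr{B}$ equals the training posterior. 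Because the predicted probability vector coincides with the true test-time posterior, the confidence assigned to the predicted label equals its actual accuracy likelihood (so the expected calibration error vanishes at the optimum) and $\arg\max_y \psi(x;\Theta)_y$ recovers the Bayes-optimal label for the balanced risk; this is the claimed calibration.

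The main obstacle is conceptual rather than computational: being precise about the sense in which the "minimizer" is taken. The statement is a Fisher-consistency/Bayes-optimality property of the population risk minimized pointwise over unconstrained scorers, and one must note that (i) the softmax is invariant to a global additive shift of $\psi$, so $\psi$ is only identified up to such a shift and it is the induced \emph{distribution} that is pinned down; and (ii) the identification of the tilted-softmax minimizer with $\mathds{P}_{train}(\cdot\mid x)$ uses $\pi_y>0$ for all $y$, so that the $\pi$-tilt is a bijection of the simplex. The cancellation of $\log C$, the Gibbs step, and the Bayes inversion are then routine.
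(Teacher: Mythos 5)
Your proof is correct, but it takes a genuinely different route from the paper's. The paper's argument is purely algebraic plus a citation: it divides numerator and denominator of Eq.\ref{Eq.b-ce2} by $e^{\psi(x;\Theta)_{y_i}+\mathscr{B}_{y_i}}$ to obtain the pairwise form $\log\bigl[1+\sum_{y_k\neq y_i}e^{\mathscr{B}_{y_k}-\mathscr{B}_{y_i}}e^{\psi_{y_k}-\psi_{y_i}}\bigr]$, and then invokes a black-box lemma from \cite{Aaai/logit_adjustment} (Lemma B.1 in the appendix), checking that the choice $\delta_y=\pi_y$ gives $\alpha_{y_i}=1$ and $\Delta_{y_iy_k}=\log(\pi_{y_k}/\pi_{y_i})$, which satisfies the lemma's hypotheses. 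You instead prove the underlying Fisher-consistency statement from scratch: cancel the $\log C$ shift, pass to the pointwise population risk, identify it as $H(q)+\mathrm{KL}(q\,\|\,s)$ for the $\pi$-tilted softmax $s$, and use Gibbs' inequality plus Bayes inversion under shared class-conditionals to conclude that the bare softmax of $\psi$ recovers the balanced posterior at the optimum. In effect you have reproved the cited lemma for this special case rather than applied it. What your approach buys is self-containedness and a stronger, more transparent conclusion (the optimal scorer reproduces the exact test posterior, which yields both probabilistic calibration in the ECE sense and Bayes optimality for the balanced error, and makes explicit where the $\pi_y>0$ and identifiability-up-to-shift caveats enter); what the paper's approach buys is brevity and immediate placement of Bayias within the general family of calibrated pairwise losses parameterized by $\delta$. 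Your noted equivalence with the pairwise form Eq.\ref{Eq.b-ce} is exactly the paper's Eq.~B.6, so the two arguments meet at that point before diverging.
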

Classification \textit{calibration} \cite{Icml/Calibration-NN, Nips/On_Mixup_Training} represents the predicted winning \textit{Softmax} scores indicate the actual likelihood of a correct prediction. The miscalibrated models tend to be overconfident which results in that the minimiser of the expected loss (equally, the empirical risk in the infinite sample limit) can not lead to a minimal classification error. Previous work \cite{Aaai/logit_adjustment} has introduced a theory to measure the \textit{calibration} of a pair-wise loss:
\begin{lemma}
\label{Lem.Cali}
Pairwise loss $\mathcal{L}(y_i,\psi(x; \Theta)) = \alpha_{y_i}\cdot \log \left[1+\sum_{y_k \neq y_i} e^{\Delta_{y_i y_k}}\cdot e^{(\psi(x;\Theta)_{y_i} - \psi(x,\Theta)_{y_k})}\right]$ ensures classification \textit{calibration} if for any $\delta \in \mathds{R}_+^C$:
\begin{equation}
    \alpha_{y_i} = \delta_{y_i}/\pi_{y_i} \ \ \ \  \Delta{y_i y_k} = \log(\delta_{y_k}/\delta_{y_i})
\end{equation}
\end{lemma}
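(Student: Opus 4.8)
The plan is to show that the stated weights $\alpha_{y_i}$ and margins $\Delta_{y_iy_k}$ turn the pairwise loss into a reweighted softmax cross-entropy whose pointwise population minimiser has its $\arg\max$ equal to the Bayes-optimal predictor for the \emph{balanced} posterior; this is exactly what classification \emph{calibration} demands. First I would record the decisive algebraic identity. Substituting $\Delta_{y_iy_k}=\log(\delta_{y_k}/\delta_{y_i})$, so that $e^{\Delta_{y_iy_k}}=\delta_{y_k}/\delta_{y_i}$, collapses the bracket,
\begin{equation}
1+\sum_{y_k\neq y_i} e^{\Delta_{y_iy_k}} e^{\psi(x)_{y_k}-\psi(x)_{y_i}} = \frac{\sum_{y_k}\delta_{y_k}e^{\psi(x)_{y_k}}}{\delta_{y_i}e^{\psi(x)_{y_i}}}.
\end{equation}
Hence, with $\alpha_{y_i}=\delta_{y_i}/\pi_{y_i}$, the loss equals $-(\delta_{y_i}/\pi_{y_i})\log s_{y_i}(\psi+\log\delta)$, where $s(\cdot)$ is the ordinary softmax and $\log\delta$ is added coordinatewise. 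In other words it is a per-class-weighted cross-entropy on the shifted logits $\widetilde\psi:=\psi+\log\delta$.

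Next I would form the pointwise conditional risk at a fixed $x$. Writing $p_{y_i}:=\mathds{P}_{train}(Y=y_i\mid X=x)$, the conditional risk is $R(\psi\mid x)=-\sum_{y_i} w_{y_i}\log s_{y_i}(\widetilde\psi)$ with weights $w_{y_i}:=p_{y_i}\,\delta_{y_i}/\pi_{y_i}$. Since $s(\widetilde\psi)$ sweeps out the whole probability simplex as $\psi$ ranges over $\mathds{R}^C$, minimising $R(\psi\mid x)$ is the standard problem of minimising $-\sum_{y_i} w_{y_i}\log s_{y_i}$ over probability vectors $s$, whose solution, by Gibbs' inequality (equivalently a one-line Lagrange-multiplier argument), is $s_{y_i}^\star\propto w_{y_i}$. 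Reading this back through $s_{y_i}(\widetilde\psi)\propto \delta_{y_i}e^{\psi_{y_i}}$ yields the optimal scores $e^{\psi^\star(x)_{y_i}}\propto p_{y_i}/\pi_{y_i}$, the factors $\delta_{y_i}$ cancelling.

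Finally I would identify $p_{y_i}/\pi_{y_i}$ with the balanced posterior. By the Bayesian decomposition used in Appendix \ref{Apdx:baybiasexist}, $p_{y_i}\propto \pi_{y_i}\,\mathds{P}(X=x\mid Y=y_i)$, so $p_{y_i}/\pi_{y_i}\propto \mathds{P}(X=x\mid Y=y_i)\propto \mathds{P}_{test}(Y=y_i\mid X=x)$ because $\mathds{P}_{test}(Y=y_i)\equiv 1/C$ is constant. Thus $\arg\max_{y_i}\psi^\star(x)_{y_i}=\arg\max_{y_i}\mathds{P}_{test}(Y=y_i\mid X=x)$ for every $x$, which is precisely the Bayes rule under the balanced test law --- the definition of classification calibration --- and it holds for \emph{any} $\delta\in\mathds{R}_+^C$ since $\delta$ drops out of the final $\arg\max$. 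I expect the main obstacle to be conceptual rather than computational: the calibration target must be pinned to the balanced test distribution (not the training law), and one must note that the softmax loss determines $\psi^\star$ only up to an additive constant, which is harmless here because calibration concerns only the $\arg\max$. A secondary bookkeeping point is fixing the sign convention inside the bracket consistently before the collapse identity applies (the cross-entropy-reducing form, matching Eq.\ref{Eq.b-ce}, carries $\psi_{y_k}-\psi_{y_i}$).
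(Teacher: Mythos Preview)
The paper does not prove this lemma at all: it is quoted verbatim as a result from the Logit Adjustment paper (Menon \emph{et al.}) and then \emph{applied} by checking that the Bayias-compensated loss fits the template with $\delta_y=\pi_y$, giving $\alpha_{y_i}=1$ and $\Delta_{y_iy_k}=\log(\pi_{y_k}/\pi_{y_i})$. So there is no ``paper's own proof'' to compare against; you have supplied the argument the paper delegates to its reference.

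Your proof is correct and is essentially the argument in the cited source. The collapse identity, the recognition of a per-class-weighted cross-entropy on shifted logits $\widetilde\psi=\psi+\log\delta$, the Gibbs/KL pointwise minimisation giving $s^\star_{y_i}\propto p_{y_i}\delta_{y_i}/\pi_{y_i}$, and the cancellation of $\delta$ when passing back to $e^{\psi^\star_{y_i}}\propto p_{y_i}/\pi_{y_i}$ are all right. You also correctly pin the calibration target to the \emph{balanced} test posterior rather than the training law, which is the sense in which this paper (and the Logit Adjustment paper) use the term. Two minor remarks: (i) your sign-convention caveat is apt --- the lemma as printed here carries $\psi_{y_i}-\psi_{y_k}$, which is a typo relative to Eq.~\eqref{Eq.b-ce} and to the form needed for the collapse; your reading with $\psi_{y_k}-\psi_{y_i}$ is the intended one; (ii) strictly, the $w_{y_i}$ in your Gibbs step are unnormalised, but since $-\sum_i w_i\log s_i$ is minimised at $s_i\propto w_i$ regardless of the normalisation of $w$, this is harmless.
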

\begin{proof}
To begin, we rewritten the Bayias-compensated cross-entropy loss into the following equation:
\begin{equation}
    \begin{aligned}
    {\mathcal{L}_\mathscr{B}}\left( {y_i,\psi (x;\Theta)} \right) &=  - \log \frac{{{e^{\psi {{(x;\Theta)}_{{y_i}}} + \log \left( {{\pi _{{y_i}}}} \right) + \log \left( C \right)}}}}{{\sum\nolimits_{{y_k} \in \mathcal{Y}} {{e^{\psi {{(x;\Theta)}_{y_j}} + \log \left( {{\pi _{{y_k}}}} \right) + \log \left( C \right)}}} }} \\
    & \Leftrightarrow \log \frac{{\sum\nolimits_{{y_k} \in \mathcal{Y}} {{e^{\psi {{(x;\Theta)}_{y_k}} + \log \left( {{\pi _{{y_k}}}} \right) + \log \left( C \right)}}} }} {{{e^{\psi {{(x;\Theta)}_{{y_i}}} + \log \left( {{\pi _{{y_i}}}} \right) + \log \left( C \right)}}}} \\
    &=\log \left[ {1 + \frac{{\sum\nolimits_{{y_k} \ne {y_i}} {{e^{\psi {{(x;\Theta)}_{{y_k}}} + \log \left( {{\pi _{{y_k}}}} \right) + \log \left( C \right)}}} }}{{{e^{\psi {{({x};\Theta)}_{{y_i}}} + \log \left( {{\pi _{{y_i}}}} \right) + \log \left( C \right)}}}}} \right] \\
    &=\log \left[ {1 + \sum\nolimits_{{y_k} \ne {y_i}} {\frac{{{e^{\psi {{(x;\Theta)}_{{y_k}}} + \log \left( {{\pi _{{y_k}}}} \right) + \log \left( C \right)}}}}{{{e^{\psi {{({x};\Theta)}_{{y_i}}} + \log \left( {{\pi _{{y_i}}}} \right) + \log \left( C \right)}}}}} } \right] \\
    &=\log \left[ {1 + \sum\nolimits_{{y_k} \ne {y_i}} {\frac{{{e^{\log \left( {{\pi _{{y_k}}}} \right) + \log \left( C \right)}} \cdot {e^{\psi {{(x;\Theta)}_{{y_k}}}}}}}{{{e^{\log \left( {{\pi _{{y_i}}}} \right) + \log \left( C \right)}} \cdot {e^{\psi {{({x};\Theta)}_{{y_i}}}}}}}} } \right] \\
    &=\log \left[ {1 + \sum\nolimits_{{y_k} \ne {y_i}} {{e^{\mathscr{B}_{y_k} - \mathscr{B}_{y_i}}} \cdot {e^{\psi {{(x;\Theta)}_{{y_k}}} - \psi {{({x};\Theta)}_{{y_i}}}}}} } \right]
    \end{aligned}
    \label{Eq.apdxchangeformatloss}
\end{equation}
Compare Eq.\ref{Eq.apdxchangeformatloss} with Lemma\ref{Lem.Cali}, observed that when $\delta_y=\pi_y$.
\begin{equation}
\left\{
             \begin{array}{lr}
             \alpha_{y_i} =\delta_{y_i}/\pi_{y_i}= \pi_{y_i} / \pi_{y_i} = 1 \\
             \begin{aligned}
                \Delta_{y_i y_k} &= \mathscr{B}_{y_k}-\mathscr{B}_{y_i} \\
                &= [log(\pi_{y_k}) + log(C)] - [log(\pi_{y_i}) + log(C)] \\
                &= log(\pi_{y_k})-log(\pi_{y_i}) = log(\pi_{y_k}/\pi_{y_i})
                \end{aligned}
             \end{array}
\right.
\end{equation}
According to Lemma\ref{Lem.Cali}, we immediately deduce that Bayias-compensated cross-entropy loss ensures classification \textit{calibration}.
\end{proof}

\subsection{Comparisons with other losses}
Previous work \cite{Tnn/CSCE, Cvpr/CB, Cvpr/EQL, Corr/CDT, Nips/LDAM, Aaai/logit_adjustment} adjusts the logits \textit{weight} or \textit{margin} on standard \textit{Softmax} cross-entropy (CE) loss to tackle the long-tailed datasets. We summarize the loss modification methods reported in Tab.\ref{Tab.ResultCifar} and discuss the difference between theirs and ours here.
\paragraph{Weight-wise losses.} 
Focal loss \cite{Iccv/Focal} is proposed to balance the positive/negative samples during object detection and extends for classification by assigning low weight loss for easy samples. It re-wights the loss with a factor $(1-p_{y_i})^{\gamma}$ on standard cross-entropy loss, where $p_{y_i}$ is the prediction probability of class $y_i$:
\begin{equation}
\mathcal{L}_{Focal} = - (1-p_{y_i})^{\gamma} \log \frac{e^{\psi(x;\Theta)_{y_i}}} {\sum\nolimits_{{y_k} \in \mathcal{Y}} {{e^{\psi {(x;\Theta)}_{y_k}}}}}
\end{equation}
CB loss is proposed by Cui \textit{et al.} \cite{Cvpr/CB} with the \textit{effective number}. It adopts a coefficient $(1-\beta)/(1-\beta^{n_{y_i}})$ for standard cross-entropy loss:
\begin{equation}
\mathcal{L}_{CB} = - \frac{1-\beta}{1-\beta^{n_{y_i}}} \log \frac{e^{\psi(x;\Theta)_{y_i}}} {\sum\nolimits_{{y_k} \in \mathcal{Y}} {{e^{\psi {(x;\Theta)}_{y_k}}}}}
\end{equation}
CDT loss \cite{Corr/CDT} re-weights each \textit{Softmax} logit with a temperature scale factor $a_{y_i} = (\frac{n_{max}}{n_{y_i}})^\gamma$. It artificially reduces the decision values for head classes:
\begin{equation}
\mathcal{L}_{CDT} = -  \log \frac{ e^{\psi(x;\Theta)_{y_i}/ (\frac{n_{max}}{n_{y_i}})^\gamma } } {\sum\nolimits_{{y_k} \in \mathcal{Y}} {{e^{\psi {(x;\Theta)}_{y_k}/(\frac{n_{max}}{n_{y_k}})^\gamma }}}}
\end{equation}
All above re-weight methods are proven effective empirically, more or less. However, such approaches will confront the coverage dilemma when the train data gets highly imbalanced. The weights related to instances number may be large in this situation and result in unstable gradients that deteriorate head classes' performance gain. They are also sensitive to hyper-parameters, which makes it hard to adopt in varied datasets.
\paragraph{Margin-wise losses.} 
Previous work in Deep Metrics Learning \cite{Cvpr/SphereFace, Cvpr/ArcFace, Mm/norm_face, Cvpr/AdaptiveFace} attempts to obtain better inter-class and intra-class distance from the margin perspective. Cao \textit{et al.} \cite{Nips/LDAM} analyze the optimal margin between two different classes via generalization error bounds in the long tail visual recognition. They propose the LDAM loss to encourage the tail classes to enjoy larger margins. In detail, a label-aware margin $C/n_{y_i}^{1/4}$ is added to the groud truth logit where $C$ is an independent constant:
\begin{equation}
\mathcal{L}_{LDAM} = - \log \frac{e^{\psi(x;\Theta)_{y_i} - C/n_{y_i}^{1/4}}} {e^{\psi(x;\Theta)_{y_i} - C/n_{y_i}^{1/4}} + \sum_{{y_k} \neq {y_i}} {{e^{\psi {(x;\Theta)}_{y_k}}}}}
\end{equation}
Logit Adjustment \cite{Aaai/logit_adjustment} loss is proposed to overcome the long-tailed dataset motivated by the balanced error rate. They suppose that the model shows similar performance on each class in the validation dataset. The authors propose a margin $\tau \log(\pi_{y_i})$ on standard \textit{Softmax} cross-entropy loss. Different from the LDAM loss, such margin is added for all logits with label priors $\pi_{y}$:
\begin{equation}
\mathcal{L}_{LA} = - \log \frac{e^{\psi(x;\Theta)_{y_i} + \tau\log(\pi_{y_i}) }} {\sum\nolimits_{{y_k} \in \mathcal{Y}} {{e^{\psi {(x;\Theta)}_{y_k} + \tau\log(\pi_{y_k}) }}}}
\end{equation}
Our Bayias-compensated CE loss is motivated by erasing the difference of label \textit{prior} in the train set and test set based on the Bayesian theory. The network parameters are only the \textit{likelihood} estimation, while we need a reliable \textit{posterior} for unbiased inference. Hence, we compensate the bias incurred by various label \textit{prior} via adding a margin related to the train set label \textit{prior} and minus a margin related to the test set label \textit{prior}:
\begin{equation}
\mathcal{L}_{ours} = - \log \frac{e^{\psi(x;\Theta)_{y_i} + \log(\pi_{y_i}) - \log(1/C)}} {\sum\nolimits_{{y_k} \in \mathcal{Y}} {{e^{\psi {(x;\Theta)}_{y_k} + \log(\pi_{y_k}) - \log(1/C) }}}}
\end{equation}
One may notice that our loss is similar to LA loss with an extra constant margin $-\log(1/C)$ when the $\tau=1$ in LA loss. However, the difference lies in two aspects: 1) the margin $-\log(1/C)$ is a positive value and makes all logits get larger, which makes \textit{Softmax} operation more distinguishable. 2) Notice our two margins are related to the label \textit{prior} of the train set and test set. Hence, when the train set is balanced, i.e., $\pi_{y_i}=1/C$, the margin will be $\log(1/C) - \log(1/C) \equiv 0$, and our loss coverage to the standard cross-entropy loss. Furthermore, our loss can handle the situation that both train set and test set are imbalanced distribution via directly setting the margin as $\log(\pi_{y_i}) - \log(\pi'_{y_i})$, where $\pi_{y_i}$ is the label \textit{prior} in the train set and $\pi'_{y_i}$ is the label \textit{prior} in test set correspondingly.

\setcounter{equation}{0}
\setcounter{subsection}{0}
\setcounter{table}{0}   
\setcounter{figure}{0}
\setcounter{corollary}{0}
\renewcommand{\thetable}{C\arabic{table}}
\renewcommand{\thefigure}{C\arabic{figure}}
\renewcommand{\theequation}{C.\arabic{equation}}

\section{Implement detail} \label{Apdx:impdetail}
We conduct experiments on CIFAR-10-LT \cite{CIFAR}, CIFAR-100-LT \cite{CIFAR}, ImageNet-LT \cite{Ijcv/ImageNet}, and iNaturalist 2018 \cite{Cvpr/INaturalist}. We adopt long-tailed version CIFAR datasets which are build by suitably discarding training instances following the Exp files given in \cite{Cvpr/CB, Nips/LDAM}. The instance number of each class exponentially decays in train dataset and keeps balanced during validation process. Here, an imbalance factor $\rho$ is define as $n_{max}/n_{min}$ to measure how imbalanced the dataset is. ImageNet-LT is the LT version of ImageNet. It samples instances for each class following \textit{Pareto} distribution which is also long-tailed. iNaturalist 2018 is a real-world dataset with $8,142$ classes and $\rho=500$, which suffers extremely class-imbalanced and fine-grained problems.

\subsection{Implement details on CIFAR-LT} \label{Apdx:cifardetail}
ResNet-32 is the backbone on CIFAR-LT. For all CIFAR-10-LT and CIFAR-100-LT, the universal data augmentation strategies \cite{Cvpr/ResNet} are used for training. Specifically, we pad $4$ pixels on each side and crop a $32 \times 32$ region. The cropped regions are flipped horizontally with $0.5$ probability and normalized by the mean and standard deviation for each color channel. We train the model via stochastic gradient decent (SGD) with momentum $0.9$ and weight decay of $2 \times 10^{-4}$ for all experiments. All models are trained for $200$ epochs setting mini-batch as $128$ with same learning rate \cite{Nips/LDAM} for fair comparisons. The learning rate is set as \cite{Nips/LDAM}: the initial value is $0.1$ and linear warm up at first $5$ epochs. The learning rate is decayed at the $160^{th}$ and $180^{th}$ epoch by $0.01$. For all \textit{mixup} \cite{Iclr/mixup} and its extension methods \cite{Icml/Manifold_Mixup, Eccv/remix}, we fine-tune the model after $120^{th}$ epoch. The $\alpha$ of Beta distribution is $0.5$ for UniMix and $1.0$ for others.

\subsection{Implement details on large-scale datasets} \label{Apdx:largedetail}
We adopt ResNet-10 \& ResNet-50 for ImageNet-LT and ResNet-50 for iNaturalist 2018. For ImageNet-LT, ResNet-10 and ResNet-50 are trained from scratch for $90$ epochs, setting mini-batch as $512$ and $64$, respectively. For iNaturalist 2018, we adopt the vanilla ResNet-50 with mini-batch as $512$ and train for $90$ epochs. With proper data augment strategy \cite{Cvpr/BBN}, the SGD optimizer with momentum $0.9$ and weight decay $1 \times 10^{-4}$ optimizes the model with same learning rate for fair comparisons \cite{Nips/LDAM, Cvpr/BBN, Iclr/Decouple}. We set base learning rate as $0.2$ and decay it at the $30^{th}$ and $60^{th}$ epoch by $0.1$, respectively.

\setcounter{equation}{0}
\setcounter{subsection}{0}
\setcounter{table}{0}   
\setcounter{figure}{0}
\setcounter{corollary}{0}
\renewcommand{\thetable}{D\arabic{table}}
\renewcommand{\thefigure}{D\arabic{figure}}
\renewcommand{\theequation}{D.\arabic{equation}}

\section{Additional Experiment Results} \label{Apdx:addexp}
We present additional experiments for comprehensive comparisons with previous methods and make a detailed analysis, which can be summarized as follows:

\begin{itemize}[leftmargin=*]
\item Visualized top-1 validation error (\%) comparisons of previous methods.
\item Additional quantity and visualized comparisons of classification \textit{calibration}.
\item Additional visualized comparisons of confusion matrix and $\log$-confusion matrix.
\item Additional comparisons on test imbalance scenarios.
\item Additional comparisons on state-of-the-art two-stage method.
\end{itemize}

\subsection{Visualized comparisons on CIFAR-10-LT and CIFAR-100-LT} \label{Apdx:visualcifar}
Previous sections have shown the remarkable performance of the proposed UniMix and Bayias. Fig.\ref{Fig.apdxcifarcmp} shows the visualized top-1 validation error rate (\%) comparisons on CIFAR-10-LT and CIFAR-100-LT with $\rho \in \{10,50,100,200\}$ for clear and comprehensive comparisons. The histogram indicates the value of each method. The positive error term represents its distance towards the best method, while the negative term indicates the advance towards the worst one.

\begin{figure}[h!]
  \centering
  \includegraphics[width=1\linewidth]{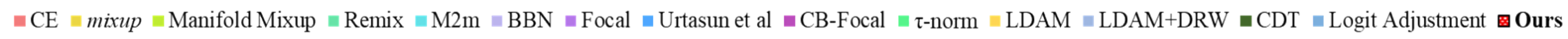}
  \subfigure[CIFAR-10-LT]{
  \begin{minipage}[t]{1\linewidth}
  \centering
  \includegraphics[width=1\linewidth]{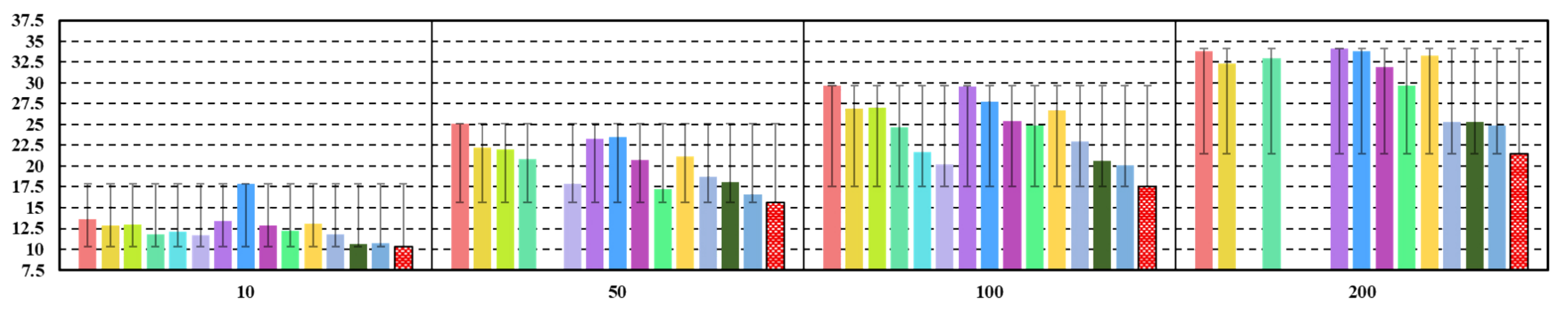}
  \end{minipage}%
  }%
  \vspace{-11pt}
  \subfigure[CIFAR-100-LT]{
  \begin{minipage}[t]{1\linewidth}
  \centering
  \includegraphics[width=1\linewidth]{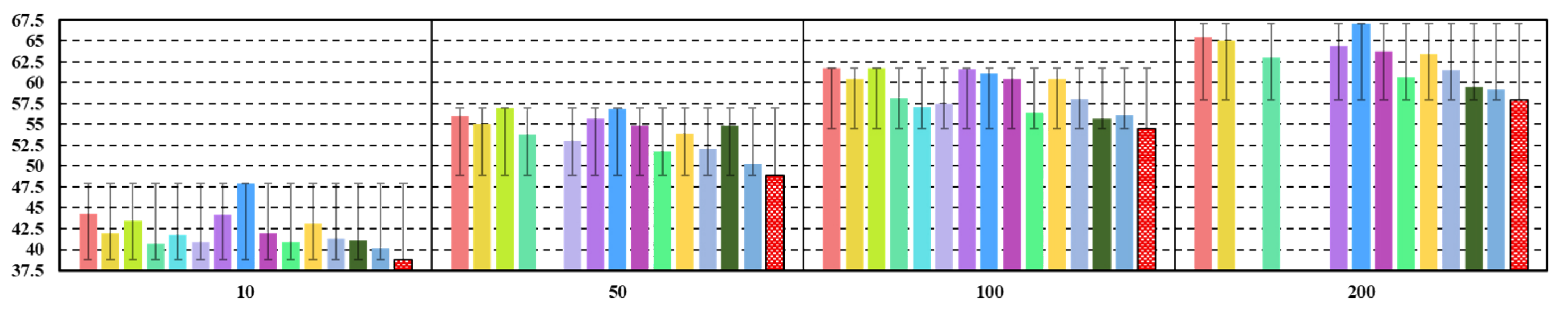}
  \end{minipage}%
  }%
  \centering
  \topcaption{Visualized top1 error (\%) ($y$-axis) comparisons of CIFAR-10-LT and CIFAR-100-LT on different $\rho$ ($x$-axis) in ResNet-32. The proposed method (\textcolor{red}{red}) achieves the lowest error rate compared with other methods. The leading advantage of our method increases consistently as imbalance factor gets larger.}
  \label{Fig.apdxcifarcmp}
  \vspace{-11pt}
\end{figure}

Results in Fig.\ref{Fig.apdxcifarcmp} show that the proposed method outperforms others with lower error rate over all imbalance factors settings. As the dataset gets more skewed and imbalanced, the advantage of our method gradually emerges. On the one hand, the proposed UniMix generates a tail-majority pseudo dataset favoring the tail feature learning, which makes the model achieve better \textit{calibration}. It is practical to improve the generalization of all classes and avoid potential over-fitting and under-fitting risks. On the other hand, the proposed Bayias overcomes the bias caused by existing \textit{prior} differences, which improves the model's performance on the balanced validation dataset. Even in extremely imbalanced scenarios (e.g., CIFAR-100-LT-200), the proposed method still achieves satisfactory performance.

\subsection{Additional quantitative and qualitative \textit{calibration} comparisons} \label{Apdx:caliexpadd}

\subsubsection{Definition of \textit{calibration}} \label{Apdx:calidef}

\textit{Calibration} means a model's predicting probability can estimate the representative of the actual correctness likelihood, which is vital in many real-world decision-making applications \cite{Nips/Faster-RCNN, Kdd/Medical_diagnosis, Corr/Auto_drive}. Suppose a dataset contains $N$ samples $\mathcal{D}:=\{(x_k, y_k)\}_{k=1}^N$, where $x_i$ and $y_i$ represent the $i^{th}$ sample and its corresponding label, respectively. Let $\hat{p}_{y_i}=\mathds{P}(Y=y_i|x=x_i)$ be the confidence of the predicted label, and divide dataset $\mathcal{D}$ into a mini-batch size $m$ with $M=N/m$ in total. The accuracy (acc) and the confidence (cfd) in a mini-batch $\mathcal{B}_m$ are:
\begin{equation}
    \begin{aligned}
    &acc(\mathcal{B}_m)=\frac{1}{m}\sum\nolimits_{i \in {\mathcal{B}_m}} {\mathds{1}({\hat{y}_i} = {y_i})} \\
    &cfd(\mathcal{B}_m)=\frac{1}{m}\sum\nolimits_{i \in {\mathcal{B}_m}} {\hat{p}_{y_i}}
    \end{aligned}
    \label{Eq.defaccconf}
\end{equation}
According to Eq.\ref{Eq.defaccconf}, a perfectly calibrated model will strictly have $acc(\mathcal{B}_m)\equiv cfd(\mathcal{B}_m)$ for all $m\in \{1,\cdots,M\}$. Hence, the Expected Calibration Error (ECE) is proposed as a scalar statistic of \textit{calibration} to quantitatively measure classifiers' mean distance to the ideal $acc(\mathcal{B}_m)\equiv cfd(\mathcal{B}_m)$, which is defined as:
\begin{equation}
    \begin{aligned}
    ECE = \sum\nolimits_{m = 1}^M {\frac{{\left| {{\mathcal{B}_m}} \right|}}{n}\left| {acc({\mathcal{B}_m}) - cfd({\mathcal{B}_m})} \right|}
    \end{aligned}
    \label{Eq.defECE}
\end{equation}
In addition, reliable confidence measures are essential in high-risk applications. The Maximum Calibration Error (MCE) describes the worst-case deviation between confidence and accuracy, which can be defined as:
\begin{equation}
    \begin{aligned}
    MCE = \mathop {\max }\limits_{m \in \{ 1, \cdots ,M\} } \left| {acc({\mathcal{B}_m}) - cfd({\mathcal{B}_m})} \right|
    \end{aligned}
    \label{Eq.defMCE}
\end{equation}
According to Eq.\ref{Eq.defECE},\ref{Eq.defMCE}, a well-calibrated model should coverage ECE and MCE to $0$, indicating that the prediction score reflects the actual accuracy \textit{likelihood}. Under this circumstances, the calibrated model will show excellent robustness and generalization, especially in LT scenarios. Although the authors in \cite{Icml/Calibration-NN} propose the temperature scaling as a post-hoc method to adjust the classifier, our motivation is to train a calibrated model end to end without loss of accuracy.

Previous work \cite{DBLP:conf/iclr/AshukhaLMV20} points out that ECE scores suffer from several shortcomings. Hence, additional metrics \cite{DBLP:conf/cvpr/NixonDZJT19,DBLP:conf/iclr/AshukhaLMV20} are proposed to show more robustness, e.g., Adaptivity \& Adaptive Calibration Error (ACE), Thresholding \& Thresholded Adaptive Calibration Error (TACE), Static Calibration Error (SCE), and Brier Score (BS). The definition of the above metrics are as follows:

TACE disregards all predicted probabilities that are less than a certain threshold $\epsilon$. It adaptively chooses the bin locations to ensure each bin has the same instance numbers and estimates the miscalibration of probabilities across all classes in the prediction while the ECE only chooses the top-1 predicted class. TACE is defined as Eq.\ref{Eq.defTACE}:
\begin{equation}
    \begin{aligned}
    TACE = \frac{1}{CR} \sum\nolimits_{c=1}^C \sum\nolimits_{r=1}^R|acc(\mathcal{B}_r,c)-cfd(\mathcal{B}_r,c)|
    \end{aligned}
    \label{Eq.defTACE}
\end{equation}
where $acc(\mathcal{B}_r, c)$ and $cfd(\mathcal{B}_r, c)$ are the accuracy and confidence of adaptive calibration range $r$ for class label $c$, respectively. Calibration range $r$ defined by the $\lfloor N/R \rfloor_{th}$ index of the sorted and thresholded predictions, exclude the prediction less than $\epsilon$. If set $\epsilon = 0$, TACE converts to ACE.

SCE is a simple extension of ECE to every probability in the multi-class setting. SCE bins predictions separately for each class probability, computes the calibration error within the bin, and averages across bins. SCE is defined as Eq.\ref{Eq.defSCE}:
\begin{equation}
    \begin{aligned}
    SCE = \frac{1}{C} \sum\nolimits_{c=1}^C \sum\nolimits_{b=1}^B \frac{n_{bc}}{N}|acc(\mathcal{B}_b,c)-cfd(\mathcal{B}_b,c)|
    \end{aligned}
    \label{Eq.defSCE}
\end{equation}
where $acc(\mathcal{B}_b, c)$ and $cfd(\mathcal{B}_b, c)$ are the accuracy and confidence of bin $\mathcal{B}_b$ for class label $c$, respectively. $n_{bc}$ is the  prediction number in bin $\mathcal{B}_b$ for class $c$. $N$ is the total number of test set.

BS \cite{Brier-Score} has also been known as a metric for the verification of predicted probabilities. Similarly to the log-likelihood, BS penalizes low probabilities assigned to correct predictions and high probabilities assigned to wrong ones, which is defined as Eq.\ref{Eq.defBS}:
\begin{equation}
    \begin{aligned}
    BS = \frac{1}{NC} \sum\nolimits_{i=1}^N \sum\nolimits_{c=1}^C \left(\mathds{1}(y_i^*=c)-\mathds{P}(Y=y_c|X=x_i)\right)^2
    \end{aligned}
    \label{Eq.defBS}
\end{equation}
where $N$ represents the total number of test set. $y_i^*$, $\mathds{P}(Y=y_c|X=x_i)$ represent the ground truth and predicted label, respectively.

\subsubsection{Quantitative results of \textit{calibration} on CIFAR-LT}

Besides ECE and MCE results, we additionally provide quantitative metrics results, i.e., ACE, TACE setting threshold $\epsilon = 1e-3$, SCE, and BS. The comparisons are illustrated in Tab.\ref{Tab::ACE}.

\begin{table}[h!]
\setlength{\tabcolsep}{10pt}
\centering
\topcaption{Quantitative calibration metric of ResNet-32 on CIFAR-10/100-LT-100 test set. Smaller ACE, TACE, SCE, and BS indicate better calibration results. Either of the proposed methods achieves a well-calibrated model compared with others. The combination of UniMix and Bayias still achieves the best performance.}
\resizebox{1\textwidth}{!}{%
\begin{tabular}{l|cccc|cccc}
\toprule
Dataset                                                      & \multicolumn{4}{c|}{CIFAR-10-LT-100} & \multicolumn{4}{c}{CIFAR-100-LT-100} \\ \midrule
Calibration Metric ($\times$ 100) & ACE      & TACE    & SCE     & BS   & ACE      & TACE     & SCE     & BS   \\ \midrule
ERM                                                                   & 5.42     & 4.66    & 5.46    & 53.61          & 0.831    & 0.709    & 0.952   & 97.05         \\
\textit{mixup} \cite{Iclr/mixup}                                                                & 4.87     & 4.20    & 4.92    & 49.09          & 0.751    & 0.683    & 0.849   & 89.26         \\
Remix \cite{Eccv/remix}                                                               & 3.49     & 3.32    & 3.77    & 44.84          & 0.740    & 0.675    & 0.846   & 89.33         \\
LDAM+DRW \cite{Nips/LDAM}                                                             & 4.14     & 2.84    & 4.43    & 44.76          & 0.801    & 0.671    & 1.093   & 107.4         \\ \midrule
UniMix (ours)                                                         & 3.48     & 3.27    & 3.57    & 38.98          & 0.505    & 0.555    & 0.687   & 80.70         \\
Bayias (ours)                                                         & 2.45     & 2.40    & 2.69    & 34.30          & 0.460    & 0.518    & 0.631   & 78.78         \\
UniMix+Bayias (ours)                                                  & \textbf{2.31}     & \textbf{2.17}    & \textbf{2.43}    & \textbf{32.84}          & \textbf{0.450}    & \textbf{0.517}    & \textbf{0.623}   & \textbf{78.24}         \\ \bottomrule
\end{tabular}}
\label{Tab::ACE}
\end{table}

\subsubsection{Additional visualized \textit{calibration} comparison on CIFAR-LT} \label{Apdx.calibrationvis}

To make intuitive comparisons, we visualize additional confidence-accuracy joint density plots and reliability diagrams in CIFAR-LT test set setting $\rho=100$. The confidence data is obtained by the average \textit{Softmax} winning score in a test mini-batch \cite{Nips/On_Mixup_Training}. The reliability diagrams data is obtained follow \cite{Icml/Calibration-NN}, which groups all prediction score into $15$ interval bins and calculate the accuracy of each bin. The results are available in Fig.\ref{Fig.addcifarcali}.

\begin{figure}[t!]
	\centering
		\subfigure[CIFAR-10-LT-100]{
		\begin{minipage}[b]{0.48\textwidth}
			\includegraphics[width=1\textwidth]{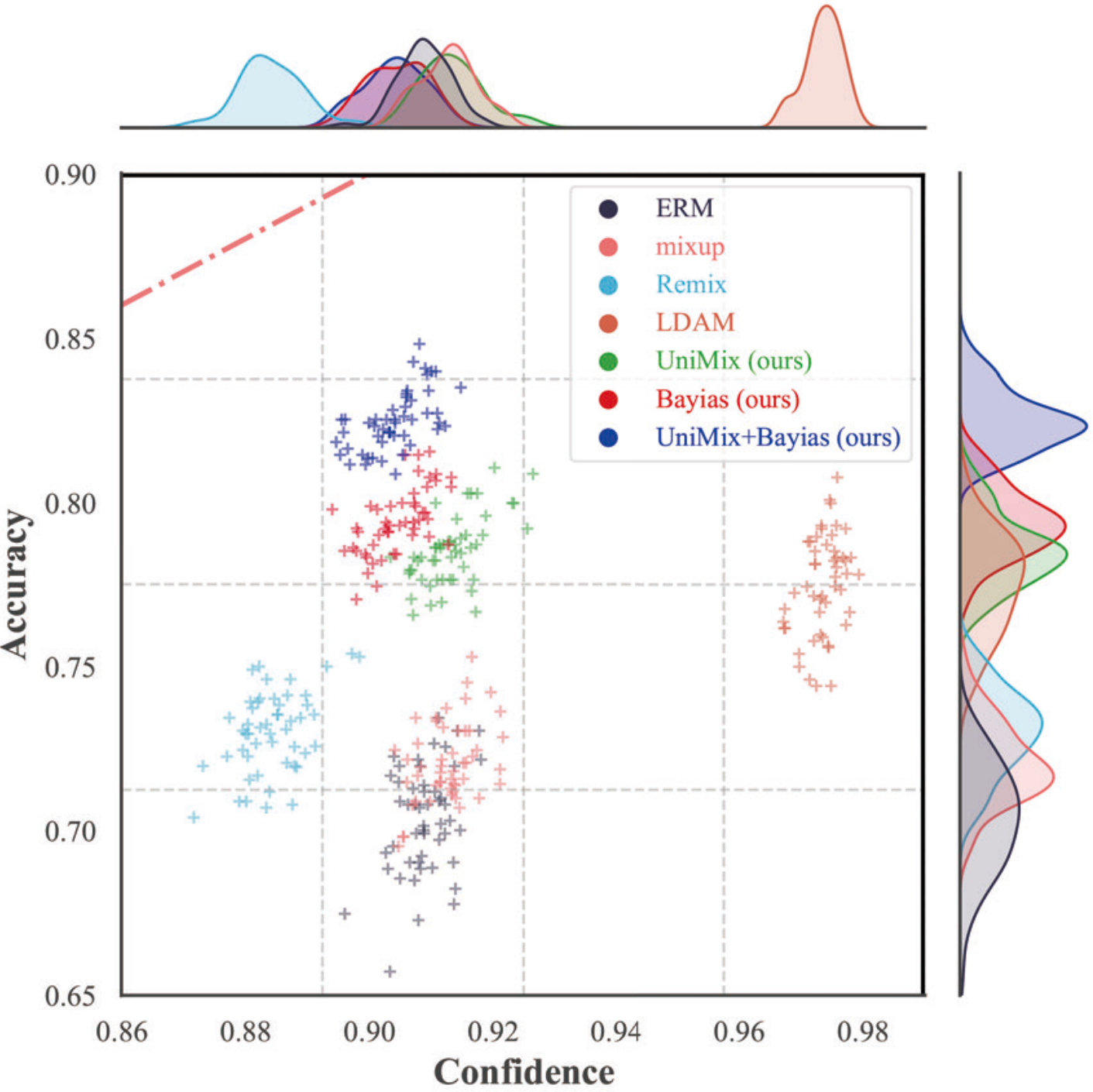} 
		\end{minipage}
		\label{Fig.conf-acc(B)}
	}
        	\subfigure[CIFAR-100-LT-100]{
    		\begin{minipage}[b]{0.48\textwidth}
		 	\includegraphics[width=1\textwidth]{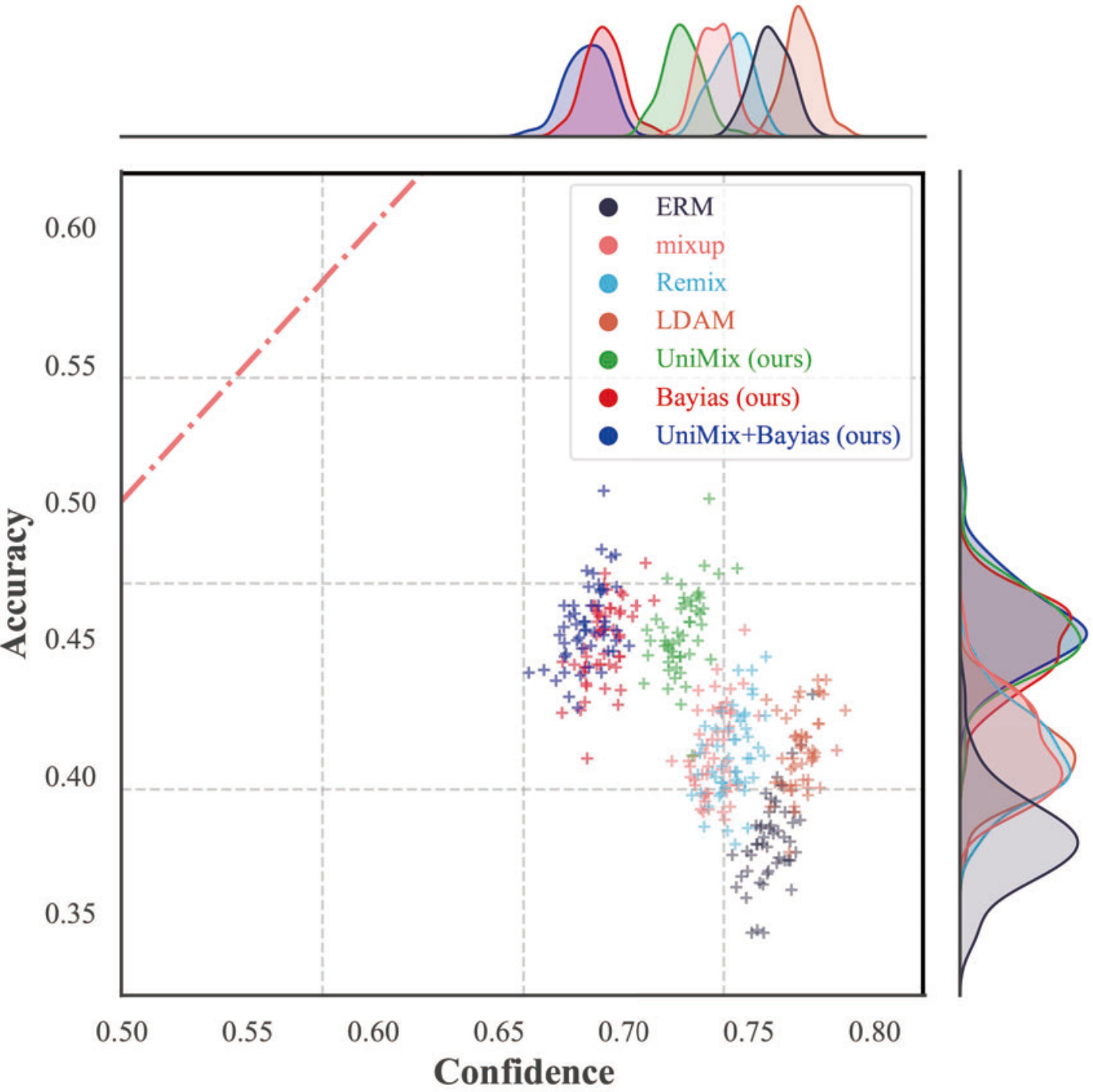}
    		\end{minipage}
		\label{Fig.conf-acc(D)}
    }
\topcaption{Additional comparisons of joint density plots of accuracy vs. confidence on CIFAR-10-LT and CIFAR-100-LT. A well-calibrated classifier’s density will lay around the $y = x$ (\textcolor{red}{red} dot line). The combination of UniMix and Bayias achieves remarkable results especially in the severely imbalance scenarios (i.e., $\rho=100$).}
\label{Fig.addcifarcali}
\end{figure}

\begin{figure}[h!]
	\flushleft
	\subfigure[ERM]{
		\begin{minipage}[b]{0.23\textwidth}
		\flushleft
			\includegraphics[width=1.1\textwidth]{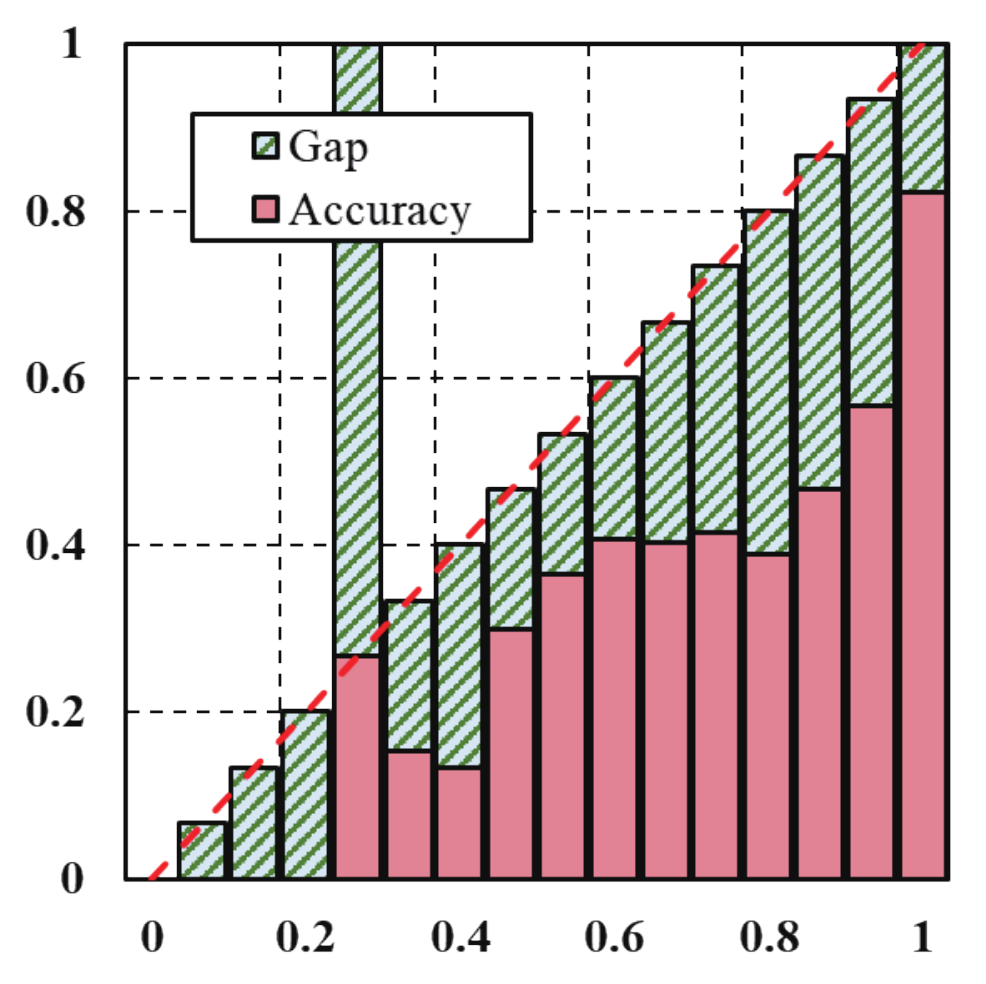}
		\end{minipage}
		\label{Fig.cifar-10-100-bar(A)}
	}
	 \subfigure[\textit{mixup} \cite{Iclr/mixup}]{
	 \centering
     \begin{minipage}[b]{0.23\textwidth}
	    \includegraphics[width=1.1\textwidth]{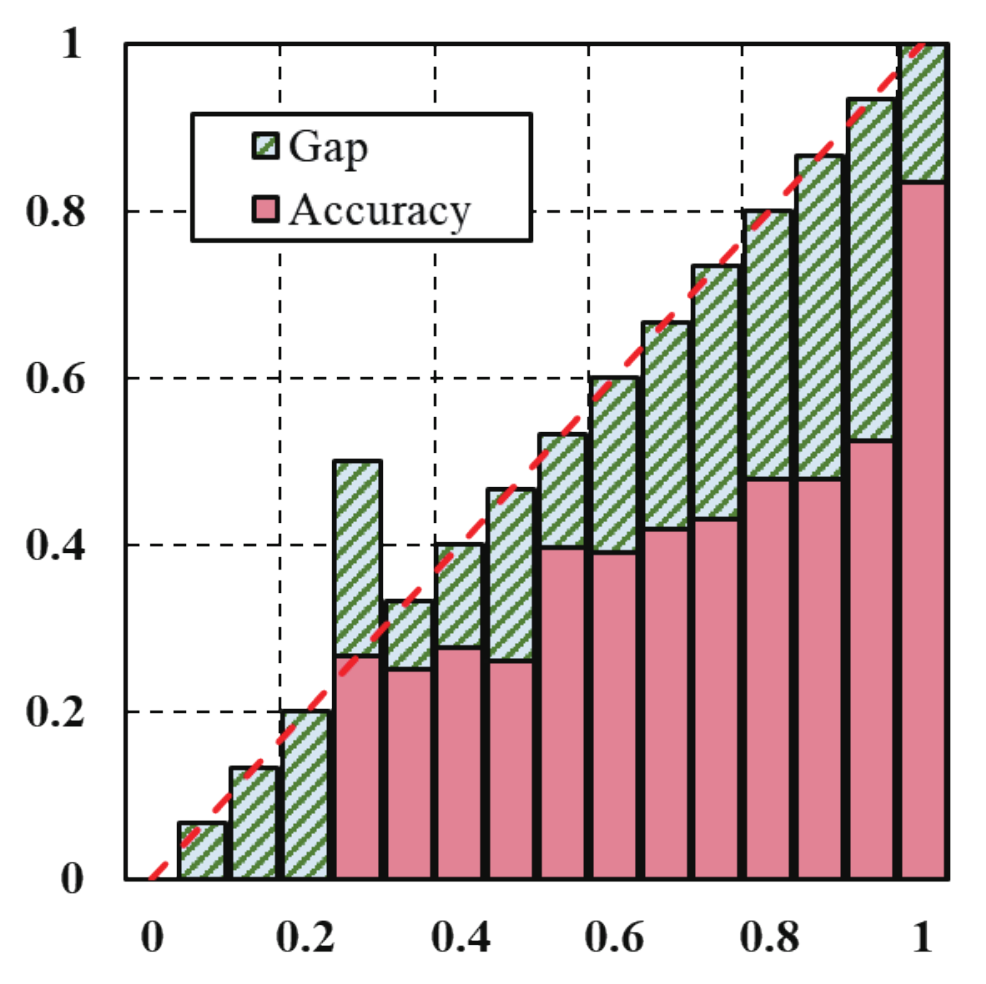}
     \end{minipage}
     \label{Fig.cifar-10-100-bar(B)}
    }
		\subfigure[LDAM+DRW \cite{Nips/LDAM}]{
		\centering
		\begin{minipage}[b]{0.23\textwidth}
			\includegraphics[width=1.1\textwidth]{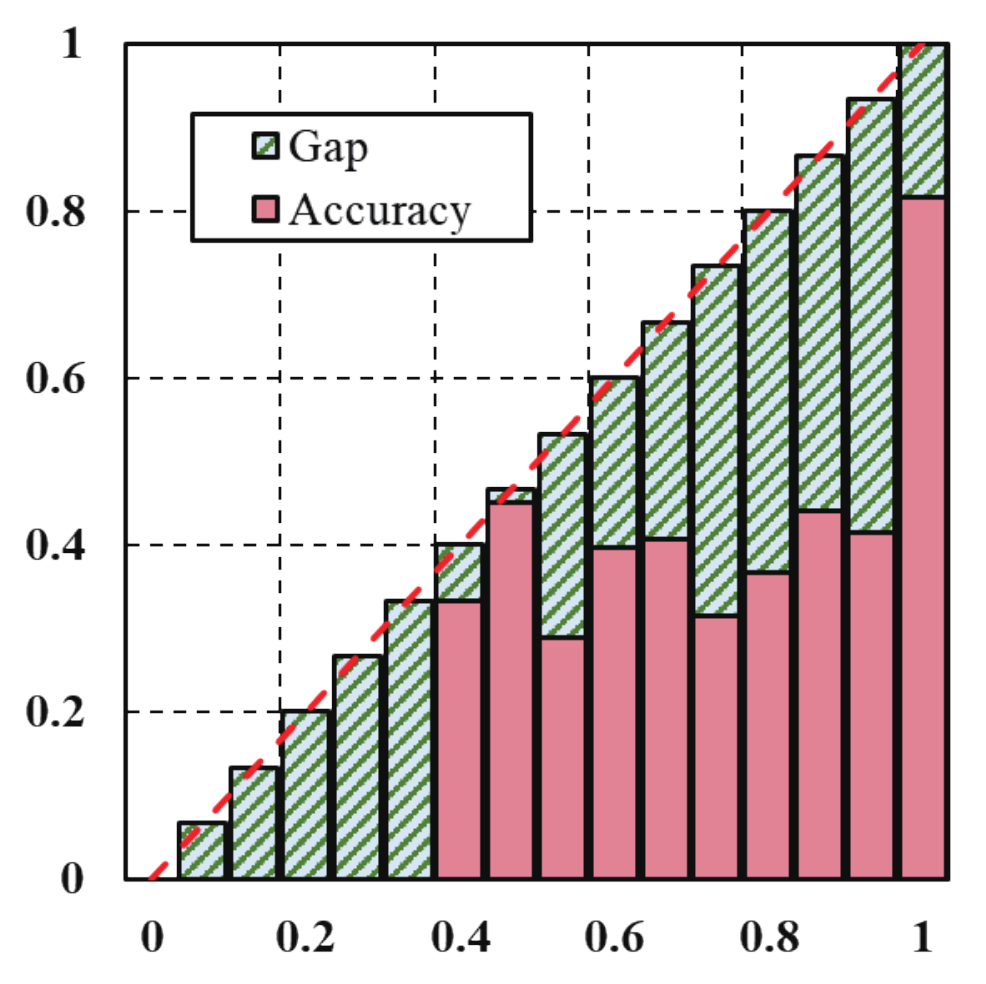} 
		\end{minipage}
		\label{Fig.cifar-10-100-bar(C)}
	}
    	\subfigure[Ours]{
    	
		\begin{minipage}[b]{0.23\textwidth}
		\flushright
  	 	    \includegraphics[width=1.1\textwidth]{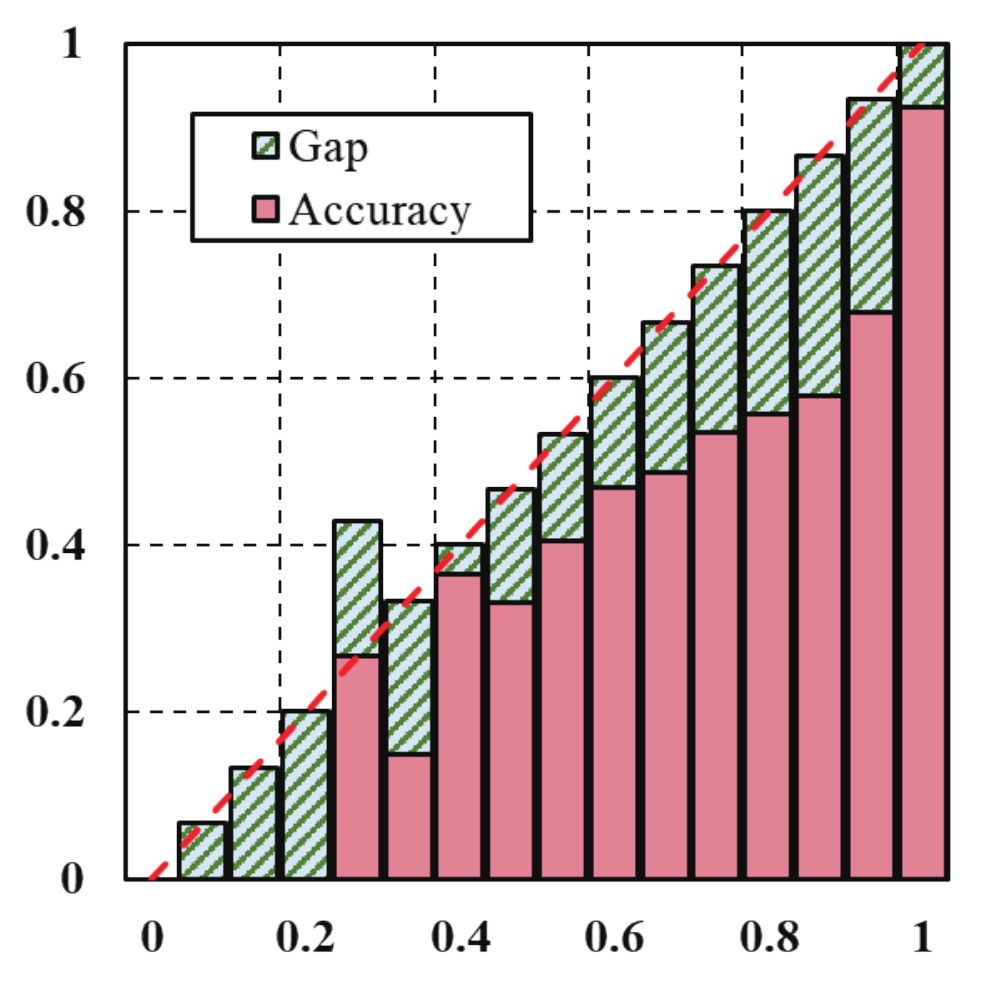}
		\end{minipage}
		\label{Fig.cifar-10-100-bar(D)}
    }
    
    	\subfigure[ERM]{
		\begin{minipage}[b]{0.23\textwidth}
		\flushleft 
			\includegraphics[width=1.1\textwidth]{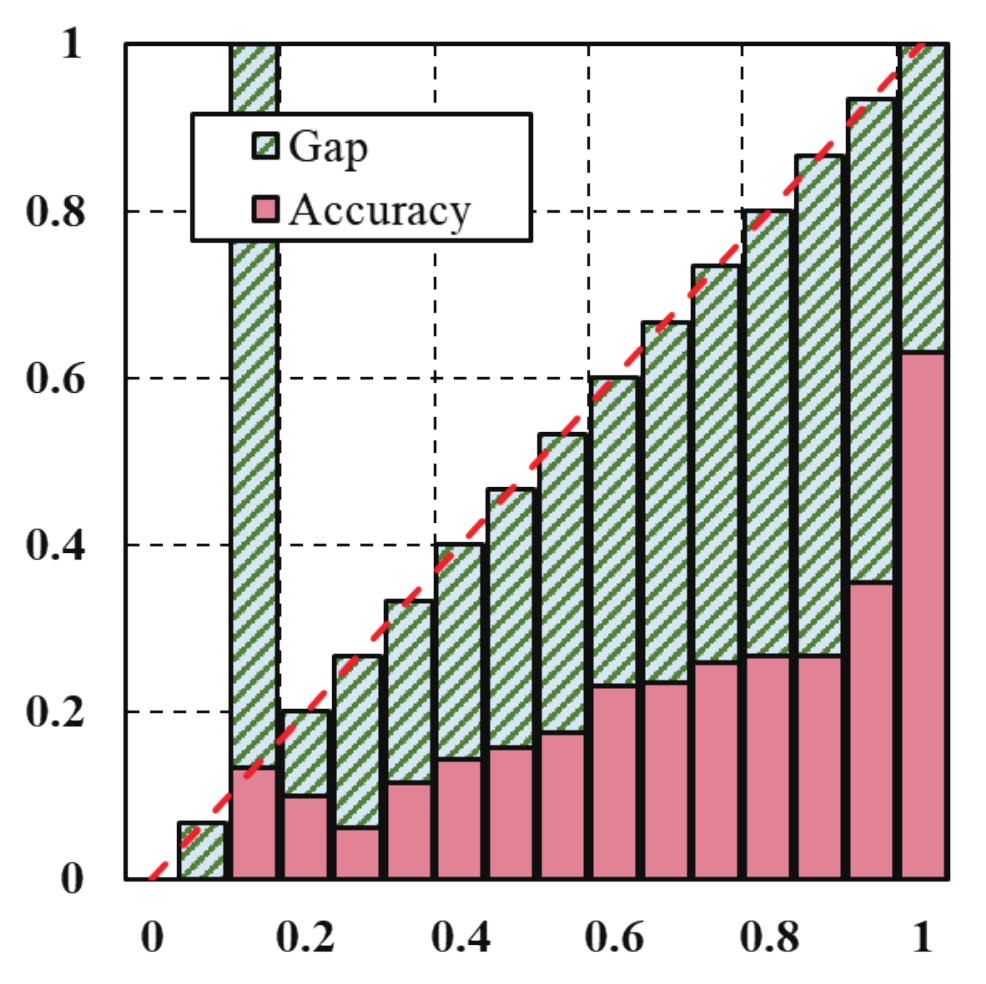}
		\end{minipage}
		\label{Fig.cifar-100-100-bar(A)}
	}
	        	\subfigure[\textit{mixup} \cite{Iclr/mixup}]{
	        	\centering
    		\begin{minipage}[b]{0.23\textwidth}
		 	\includegraphics[width=1.1\textwidth]{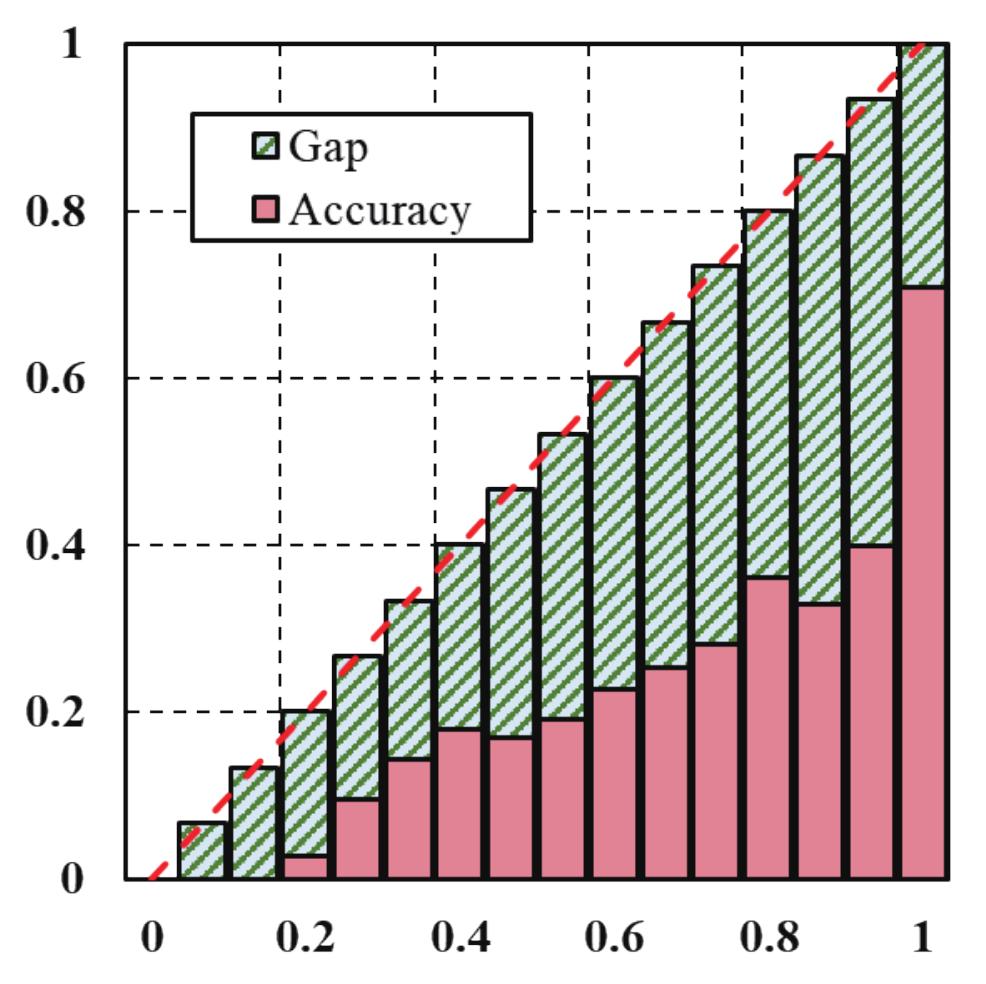}
    		\end{minipage}
		\label{Fig.cifar-100-100-bar(B)}
    }
		\subfigure[LDAM+DRW \cite{Nips/LDAM}]{
		\centering
		\begin{minipage}[b]{0.23\textwidth}
			\includegraphics[width=1.1\textwidth]{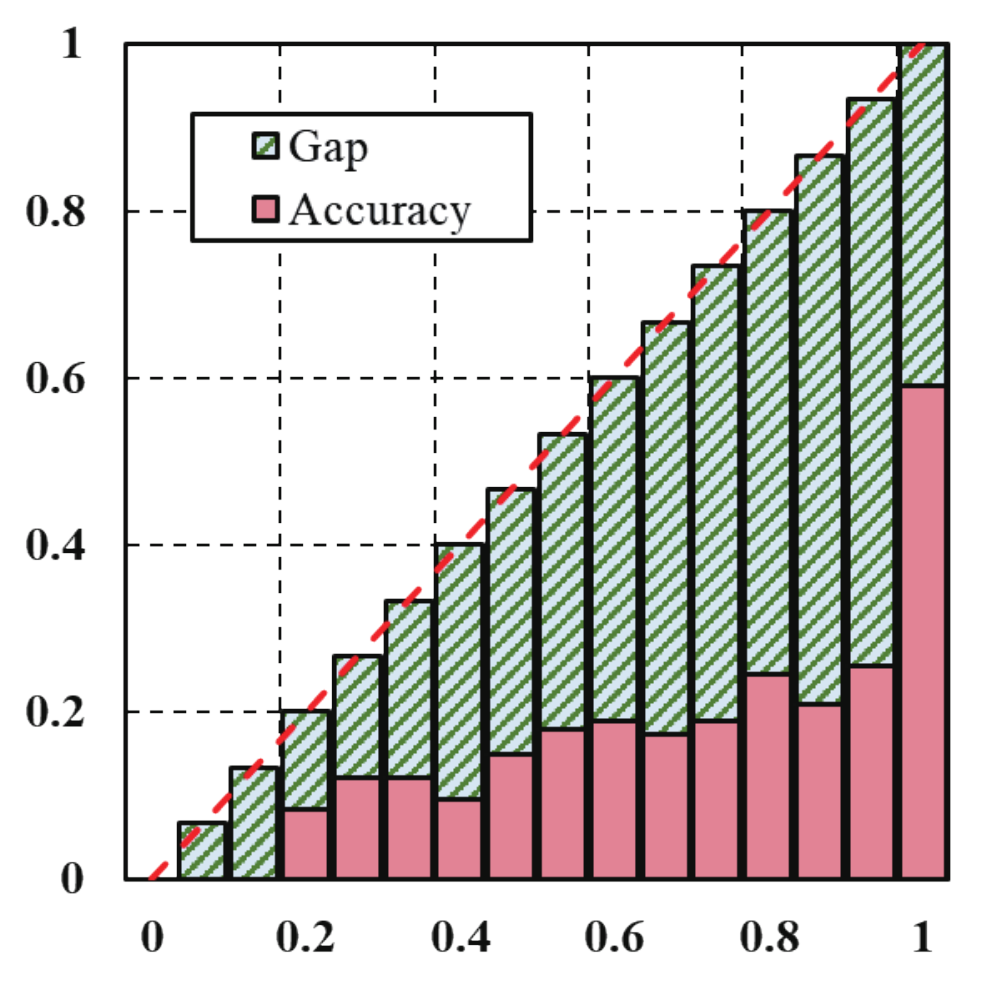} 
		\end{minipage}
		\label{Fig.cifar-100-100-bar(C)}
	}
    	\subfigure[Ours]{
    	
    		\begin{minipage}[b]{0.23\textwidth}
    		\flushright
  		 	\includegraphics[width=1.1\textwidth]{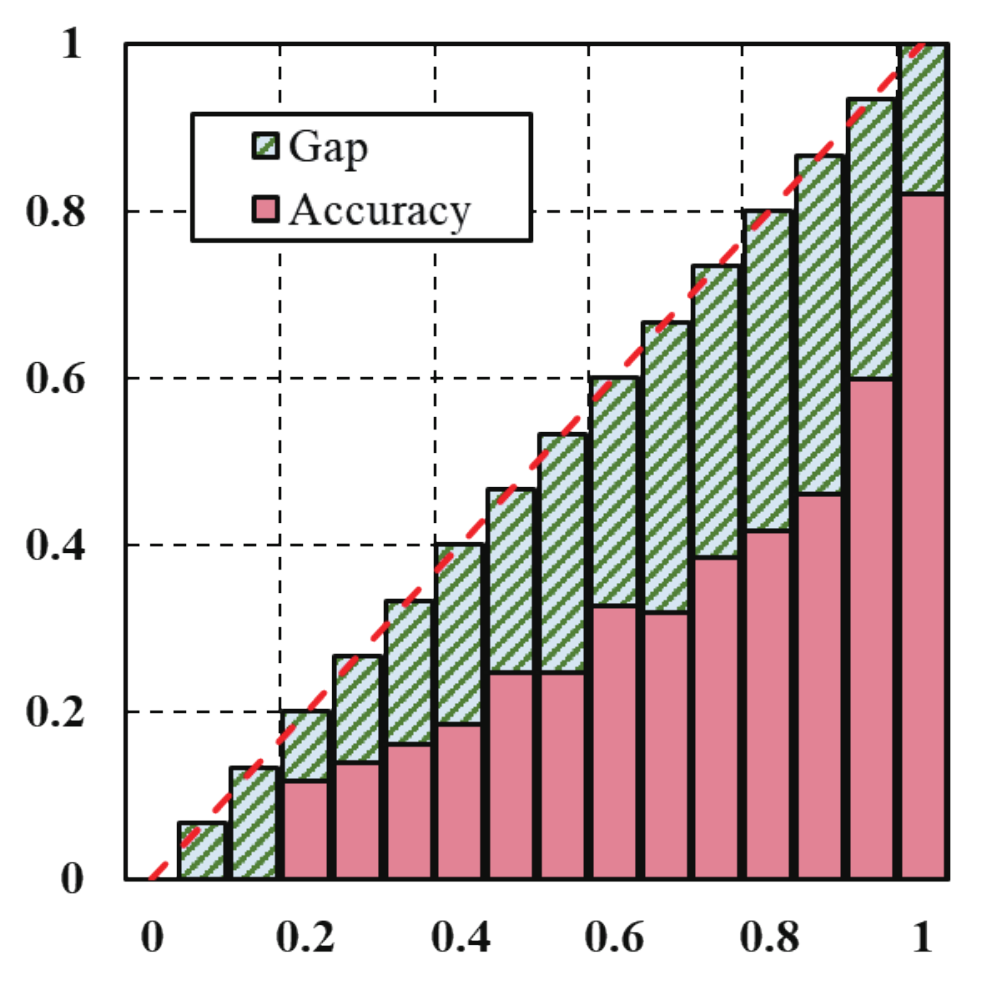}
    		\end{minipage}
		\label{Fig.cifar-100-100-bar(D)}
    }
\topcaption{Reliability diagrams of ResNet-32 on CIFAR-10-LT-100 (top) and CIFAR-100-LT-100 (bottom). $x$-axis and $y$-axis represent the confidence and accuracy, respectively. A well-calibrated model shows smaller gap in each bin, i.e., the accuracy bar is closed to $y=x$. Our proposed method shows best results compared with others without any further fine-tune.}
\label{Fig.addcifar-10-100-bar}
\end{figure}

Fig.\ref{Fig.addcifarcali} clearly shows that the proposed method achieves better \textit{calibration} in the CIFAR-LT test set. Each scatters data is the corresponding result of confidence and accuracy in a mini-batch $m$. In LT scenarios, the validation accuracy is usually lower than train accuracy, especially for the tail classes. Such overconfidence and miscalibration obstruct the network from having better generalization performance. In Fig.\ref{Fig.addcifarcali}, the distribution of scattered points reflects a model’s generalization and \textit{calibration}. Our proposed method is the closest towards ideal $y=x$. It means each class gets enough regulation and hence contributes to a well-calibrated model. In contrast, \textit{mixup} and its extensions are similar to ERM, which show limited improvement on classification \textit{calibration} in LT scenarios. LDAM ameliorates the LT situation to some extent but results in more severe overconfidence cases, which may explain why it is counterproductive to other methods.

The reliability diagram is another visual representation of model \textit{calibration}. As Fig.\ref{Fig.addcifar-10-100-bar} shows, the baseline (ERM) is overconfident in its predictions and the accuracy is generally below ideal $y=x$ for each bin. Our method produces much better confidence estimates than others, which means our success in regulating all classes. Although some post-hoc adjustment methods \cite{Icml/Calibration-NN, DBLP:conf/cvpr/MiSLAS} can achieve better classification \textit{calibration}, our approach trains a calibrated model end-to-end, which avoids the potential adverse effects on the original task. Considering the similarity of purposes with MiSLAS \cite{DBLP:conf/cvpr/MiSLAS}, we make detailed discussion in Appendix \ref{Apdx.cmpMiSLAS}.

\subsection{Additional visualized comparisons of confusion matrix on CIFAR-LT}
\subsubsection{Visualized confusion matrix on CIFAR-10-LT}
We further give additional visualized results on CIFAR-10-LT setting $\rho=10$ and $\rho=100$, which are available in Fig.\ref{Fig.addcifar-10-10-confusematrix},\ref{Fig.addcifar-10-100-confusematrix}. The results show that all methods perform well compared with ERM in the simple dataset (e.g., CIFAR-10-LT-10, CIFAR-10-LT-100). In general, the proposed method significantly improves the accuracy of the tail for the larger diagonal values. The improvement on the tail is superior than other methods, which leads to state-of-the-art performance.

\begin{figure}[h!]
	\centering
	\subfigure[ERM]{
		\begin{minipage}[b]{0.23\textwidth}
			\includegraphics[width=1\textwidth]{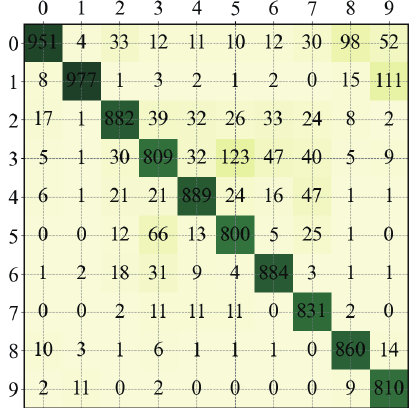}
		\end{minipage}
		\label{Fig.cifar-10-10-confusematrix(A)}
	}
	        	\subfigure[\textit{mixup} \cite{Iclr/mixup}]{
    		\begin{minipage}[b]{0.23\textwidth}
		 	\includegraphics[width=1\textwidth]{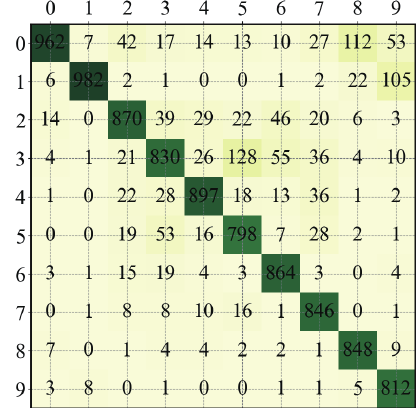}
    		\end{minipage}
		\label{Fig.cifar-10-10-confusematrix(B)}
    }
		\subfigure[LDAM+DRW \cite{Nips/LDAM}]{
		\begin{minipage}[b]{0.23\textwidth}
			\includegraphics[width=1\textwidth]{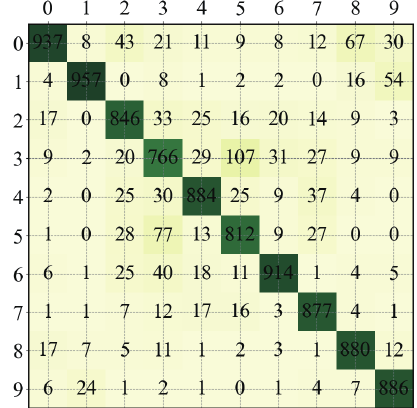} 
		\end{minipage}
		\label{Fig.cifar-10-10-confusematrix(C)}
	}
    	\subfigure[CDT \cite{Corr/CDT}]{
    		\begin{minipage}[b]{0.23\textwidth}
  		 	\includegraphics[width=1\textwidth]{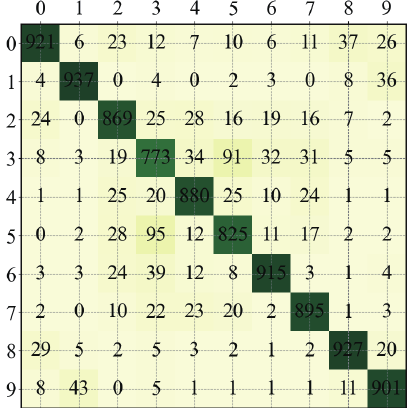}
    		\end{minipage}
		\label{Fig.cifar-10-10-confusematrix(D)}
    }
    
        	\subfigure[Remix \cite{Eccv/remix}]{
    		\begin{minipage}[b]{0.23\textwidth}
		 	\includegraphics[width=1\textwidth]{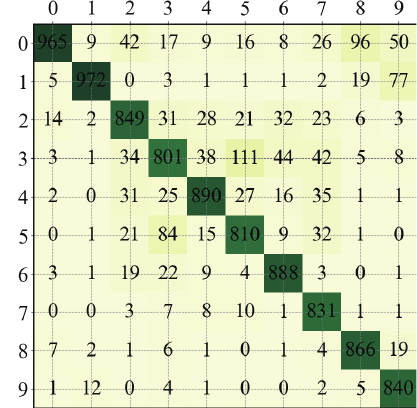}
    		\end{minipage}
		\label{Fig.cifar-10-10-confusematrix(E)}
    }
            	\subfigure[CB+DRW \cite{Cvpr/CB}]{
    		\begin{minipage}[b]{0.23\textwidth}
		 	\includegraphics[width=1\textwidth]{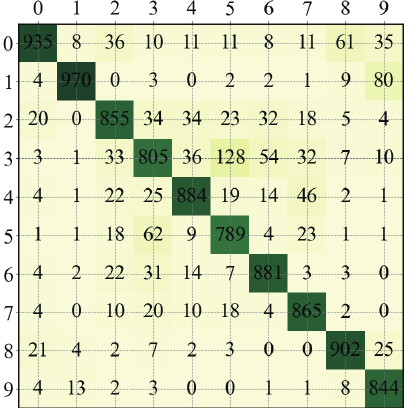}
    		\end{minipage}
		\label{Fig.cifar-10-10-confusematrix(F)}
    }
            	\subfigure[Logit Adjustment \cite{Aaai/logit_adjustment}]{
    		\begin{minipage}[b]{0.23\textwidth}
		 	\includegraphics[width=1\textwidth]{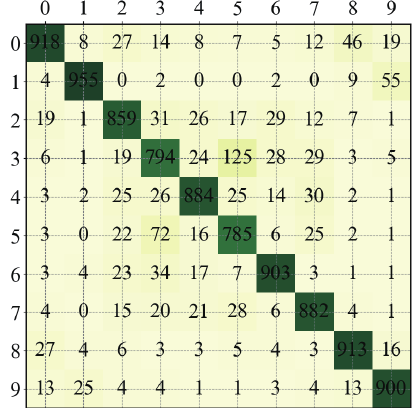}
    		\end{minipage}
		\label{Fig.cifar-10-10-confusematrix(G)}
    }
            	\subfigure[Ours]{
    		\begin{minipage}[b]{0.23\textwidth}
		 	\includegraphics[width=1\textwidth]{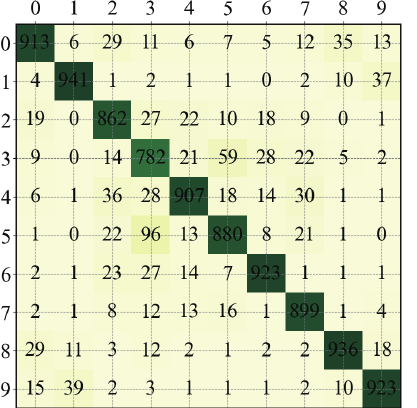}
    		\end{minipage}
		\label{Fig.cifar-10-10-confusematrix(H)}
    }
\topcaption{Additional confusion matrix comparisons on CIFAR-10-LT-10. The $x$-axis and $y$-axis indicate the ground truth and predicted labels, respectively. Deeper color indicates larger values. All methods achieve satisfactory results while ours further improves the tail feature learning and make misclassification cases more balanced distributed.}
\label{Fig.addcifar-10-10-confusematrix}
\end{figure}

As for misclassification results, the non-diagonal elements concentrate on the top-right triangle. Hence, most of previous methods have limited improvement on minority classes, which tend to simply predict the instances in tail as majority ones. In contrast, the proposed method improves the performance for the tail and makes the misclassification case more balance distributed.

\begin{figure}[h!]
	\centering
	\subfigure[ERM]{
		\begin{minipage}[b]{0.23\textwidth}
			\includegraphics[width=1\textwidth]{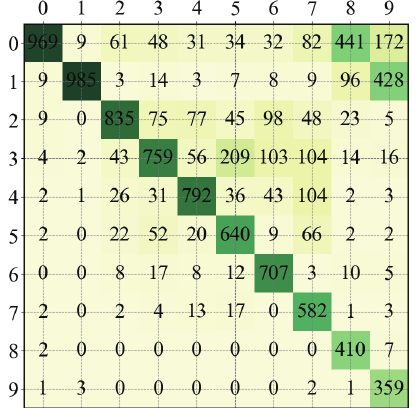}
		\end{minipage}
		\label{Fig.cifar-10-100-confusematrix(A)}
	}
	        	\subfigure[\textit{mixup} \cite{Iclr/mixup}]{
    		\begin{minipage}[b]{0.23\textwidth}
		 	\includegraphics[width=1\textwidth]{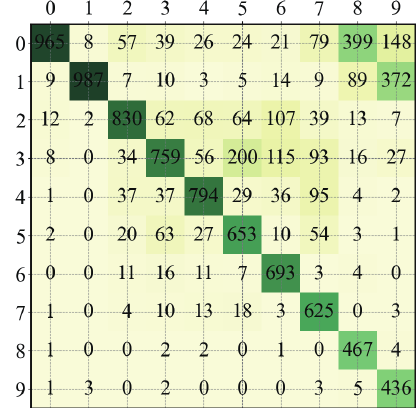}
    		\end{minipage}
		\label{Fig.cifar-10-100-confusematrix(B)}
    }
		\subfigure[LDAM+DRW \cite{Nips/LDAM}]{
		\begin{minipage}[b]{0.23\textwidth}
			\includegraphics[width=1\textwidth]{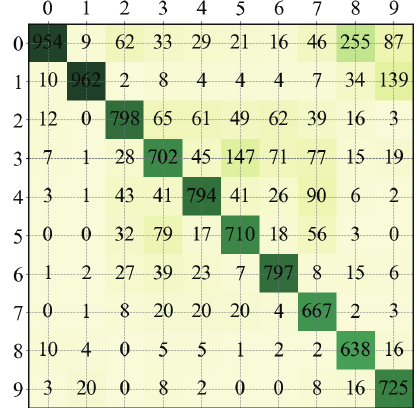} 
		\end{minipage}
		\label{Fig.cifar-10-100-confusematrix(C)}
	}
    	\subfigure[CDT \cite{Corr/CDT}]{
    		\begin{minipage}[b]{0.23\textwidth}
  		 	\includegraphics[width=1\textwidth]{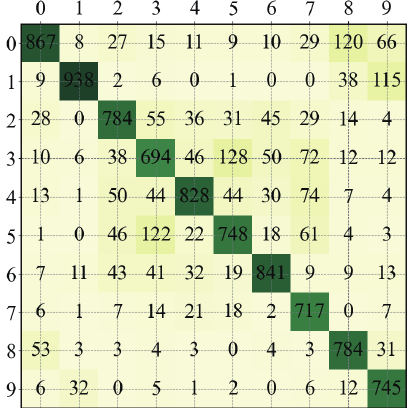}
    		\end{minipage}
		\label{Fig.cifar-10-100-confusematrix(D)}
    }
    
        	\subfigure[Remix \cite{Eccv/remix}]{
    		\begin{minipage}[b]{0.23\textwidth}
		 	\includegraphics[width=1\textwidth]{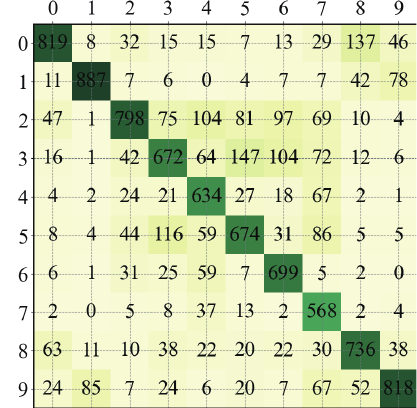}
    		\end{minipage}
		\label{Fig.cifar-10-100-confusematrix(E)}
    }
            	\subfigure[CB+DRW \cite{Cvpr/CB}]{
    		\begin{minipage}[b]{0.23\textwidth}
		 	\includegraphics[width=1\textwidth]{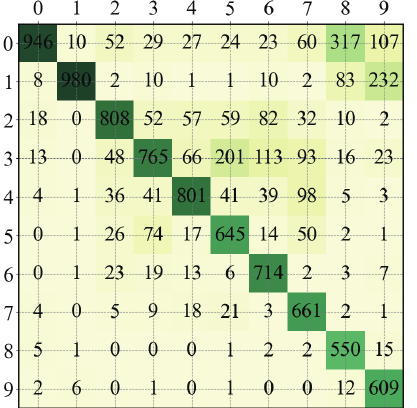}
    		\end{minipage}
		\label{Fig.cifar-10-100-confusematrix(F)}
    }
            	\subfigure[Logit Adjustment \cite{Aaai/logit_adjustment}]{
    		\begin{minipage}[b]{0.23\textwidth}
		 	\includegraphics[width=1\textwidth]{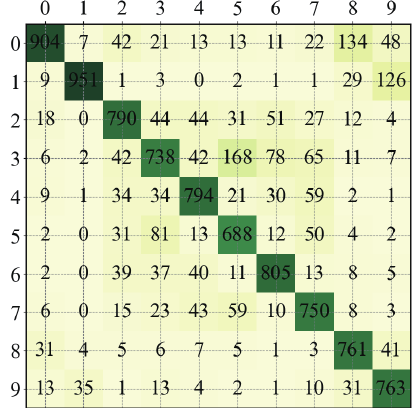}
    		\end{minipage}
		\label{Fig.cifar-10-100-confusematrix(G)}
    }
            	\subfigure[Ours]{
    		\begin{minipage}[b]{0.23\textwidth}
		 	\includegraphics[width=1\textwidth]{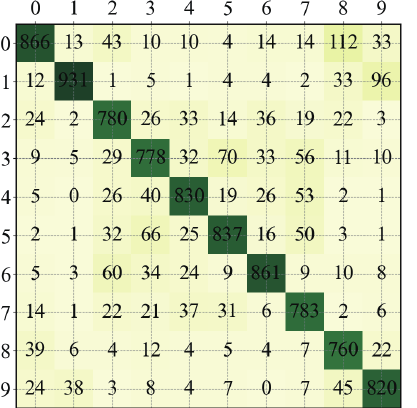}
    		\end{minipage}
		\label{Fig.cifar-10-100-confusematrix(H)}
    }
\topcaption{Additional confusion matrix comparisons on CIFAR-10-LT-100. The $x$-axis and $y$-axis indicate the ground truth and predicted labels, respectively. Deeper color indicates larger values. The disparity of these methods gradually gets appeared. Our method achieves the best accuracy as well as better non-diagonal distribution. Other methods exhibit an obvious bias towards the head or tail.}
\label{Fig.addcifar-10-100-confusematrix}
\end{figure}

\subsubsection{Visualized $\log$-confusion matrix on CIFAR-100-LT}

Furthermore, we visualize the distinguishing results of our methods on challenging CIFAR-100-LT for comprehensive comparisons. Specifically, we plot additional comparisons on CIFAR-100-LT-10 (Fig.\ref{Fig.addcifar-100-10-confusematrix}) and CIFAR-100-LT-100 (Fig.\ref{Fig.addcifar-100-100-confusematrix}). 

Fig.\ref{Fig.addcifar-100-10-confusematrix} \ref{Fig.addcifar-100-100-confusematrix} show that in the challenging LT scenarios, the diagonal values get decreased towards the tail. A proper confusion matrix should exhibit a balanced distribution of misclassification case and large enough diagonal values. The proposed method outperforms others both in the correct cases (the diagonal elements) and the misclassification case distribution. In general, the proposed method shows the best performance, especially in the tail classes (deeper color represents larger value).  The misclassification cases in other methods apparently concentrate on the upper triangular, which means the classifiers simply predict the tail samples as the head and vise versa.

\begin{figure}[h!]
	\centering
	\subfigure[ERM]{
		\begin{minipage}[b]{0.23\textwidth}
			\includegraphics[width=1\textwidth]{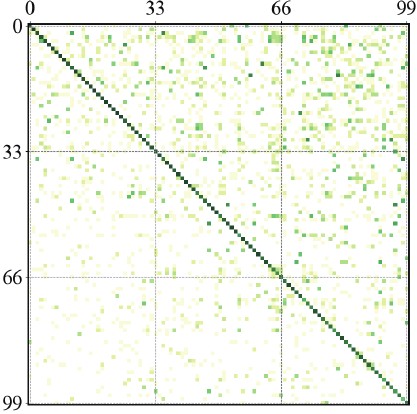}
		\end{minipage}
		\label{Fig.cifar-100-10-confusematrix(A)}
	}
	        	\subfigure[\textit{mixup} \cite{Iclr/mixup}]{
    		\begin{minipage}[b]{0.23\textwidth}
		 	\includegraphics[width=1\textwidth]{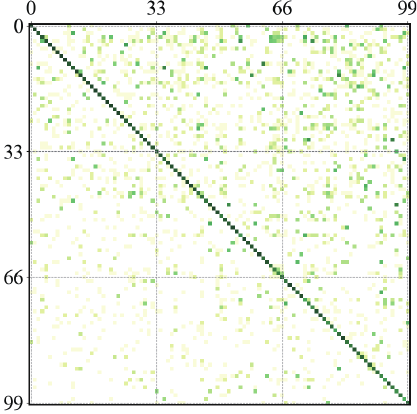}
    		\end{minipage}
		\label{Fig.cifar-100-10-confusematrix(B)}
    }
		\subfigure[LDAM+DRW \cite{Nips/LDAM}]{
		\begin{minipage}[b]{0.23\textwidth}
			\includegraphics[width=1\textwidth]{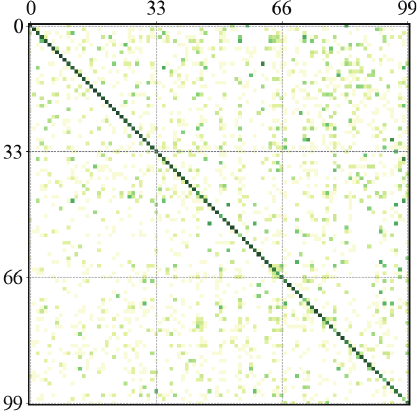} 
		\end{minipage}
		\label{Fig.cifar-100-10-confusematrix(C)}
	}
    	\subfigure[CDT \cite{Corr/CDT}]{
    		\begin{minipage}[b]{0.23\textwidth}
  		 	\includegraphics[width=1\textwidth]{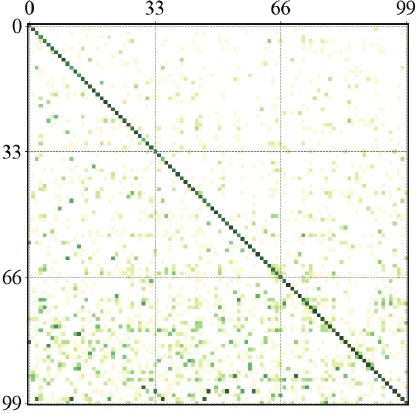}
    		\end{minipage}
		\label{Fig.cifar-100-10-confusematrix(D)}
    }
    
        	\subfigure[Remix \cite{Eccv/remix}]{
    		\begin{minipage}[b]{0.23\textwidth}
		 	\includegraphics[width=1\textwidth]{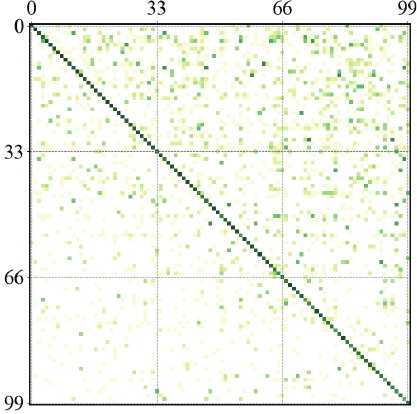}
    		\end{minipage}
		\label{Fig.cifar-100-10-confusematrix(E)}
    }
            	\subfigure[CB+DRW \cite{Cvpr/CB}]{
    		\begin{minipage}[b]{0.23\textwidth}
		 	\includegraphics[width=1\textwidth]{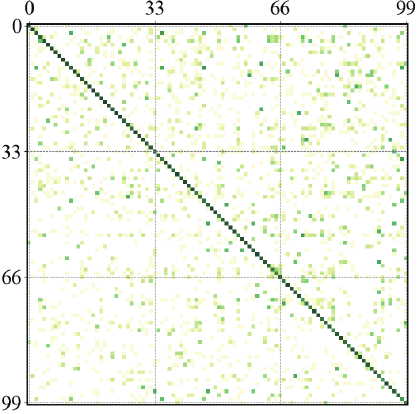}
    		\end{minipage}
		\label{Fig.cifar-100-10-confusematrix(F)}
    }
            	\subfigure[Logit Adjustment \cite{Aaai/logit_adjustment}]{
    		\begin{minipage}[b]{0.23\textwidth}
		 	\includegraphics[width=1\textwidth]{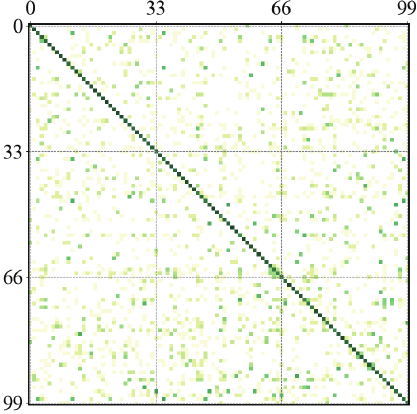}
    		\end{minipage}
		\label{Fig.cifar-100-10-confusematrix(G)}
    }
            	\subfigure[Ours]{
    		\begin{minipage}[b]{0.23\textwidth}
		 	\includegraphics[width=1\textwidth]{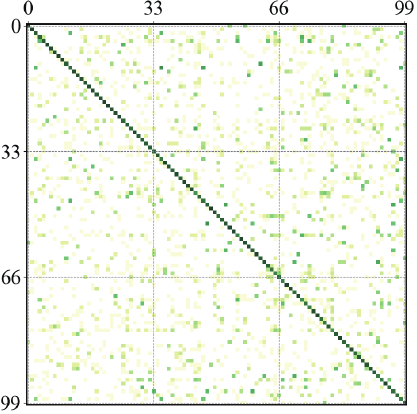}
    		\end{minipage}
		\label{Fig.cifar-100-10-confusematrix(H)}
    }
\topcaption{Additional $\log$-confusion matrix comparisons on CIFAR-100-LT-10. The $x$-axis and $y$-axis indicate the ground truth and predicted labels, respectively. $\log$ operation is adopted for clearer visualisation. Deeper color indicates larger values. The increased class number makes the misclassification distribution more clearly.}
\label{Fig.addcifar-100-10-confusematrix}
\end{figure}

\begin{figure}[h!]
	\centering
	\subfigure[ERM]{
		\begin{minipage}[b]{0.23\textwidth}
			\includegraphics[width=1\textwidth]{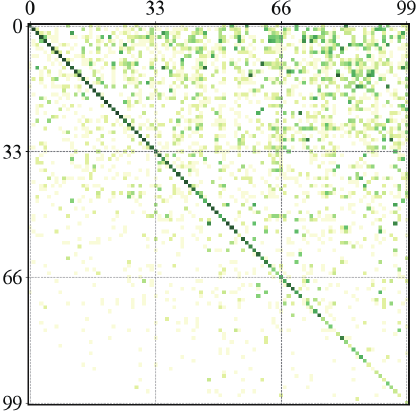}
		\end{minipage}
		\label{Fig.cifar-100-100-confusematrix(A)}
	}
	        	\subfigure[\textit{mixup} \cite{Iclr/mixup}]{
    		\begin{minipage}[b]{0.23\textwidth}
		 	\includegraphics[width=1\textwidth]{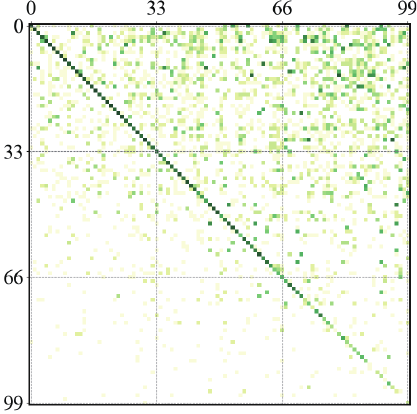}
    		\end{minipage}
		\label{Fig.cifar-100-100-confusematrix(B)}
    }
		\subfigure[LDAM+DRW \cite{Nips/LDAM}]{
		\begin{minipage}[b]{0.23\textwidth}
			\includegraphics[width=1\textwidth]{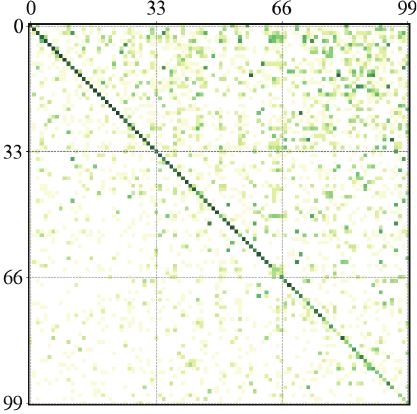} 
		\end{minipage}
		\label{Fig.cifar-100-100-confusematrix(C)}
	}
    	\subfigure[CDT \cite{Corr/CDT}]{
    		\begin{minipage}[b]{0.23\textwidth}
  		 	\includegraphics[width=1\textwidth]{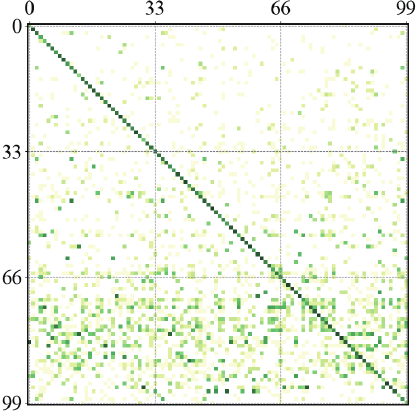}
    		\end{minipage}
		\label{Fig.cifar-100-100-confusematrix(D)}
    }
    
        	\subfigure[Remix \cite{Eccv/remix}]{
    		\begin{minipage}[b]{0.23\textwidth}
		 	\includegraphics[width=1\textwidth]{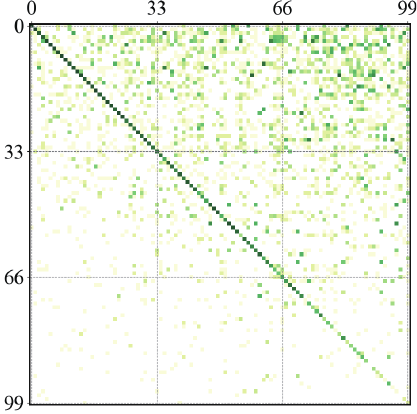}
    		\end{minipage}
		\label{Fig.cifar-100-100-confusematrix(E)}
    }
            	\subfigure[CB+DRW \cite{Cvpr/CB}]{
    		\begin{minipage}[b]{0.23\textwidth}
		 	\includegraphics[width=1\textwidth]{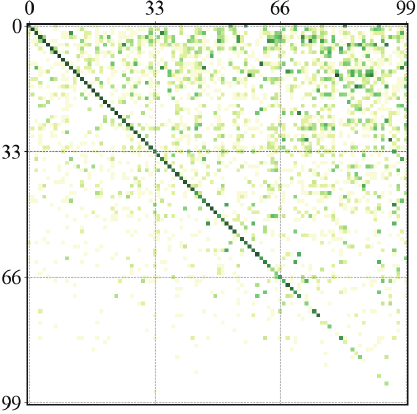}
    		\end{minipage}
		\label{Fig.cifar-100-100-confusematrix(F)}
    }
            	\subfigure[Logit Adjustment \cite{Aaai/logit_adjustment}]{
    		\begin{minipage}[b]{0.23\textwidth}
		 	\includegraphics[width=1\textwidth]{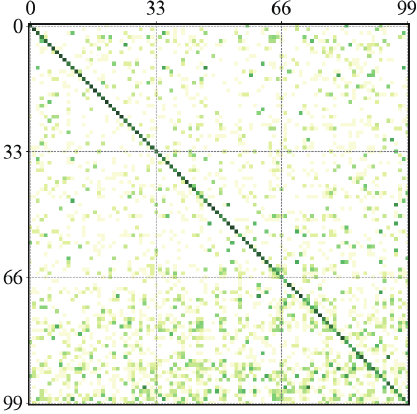}
    		\end{minipage}
		\label{Fig.cifar-100-100-confusematrix(G)}
    }
            	\subfigure[Ours]{
    		\begin{minipage}[b]{0.23\textwidth}
		 	\includegraphics[width=1\textwidth]{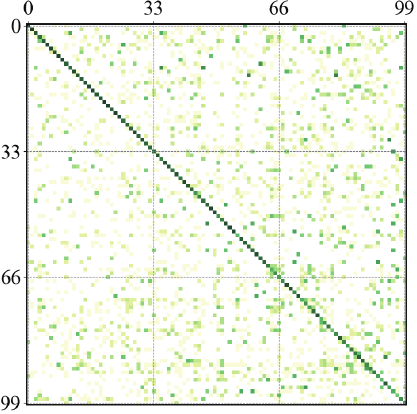}
    		\end{minipage}
		\label{Fig.cifar-100-100-confusematrix(H)}
    }
\topcaption{Additional $\log$-confusion matrix comparisons on CIFAR-100-LT-100. The $x$-axis and $y$-axis indicate the ground truth and predicted labels, respectively. $\log$ operation is adopted for clearer comparisons. Deeper color indicates larger values. Other methods show obviously poor generalization on the tail in the challenging CIFAR-100-LT-100, while the proposed method achieves significantly superior correct classification results and balanced misclassification case distribution.}
\label{Fig.addcifar-100-100-confusematrix}
\end{figure}

\subsection{Results on test imbalance scenarios}
Zhang et al. \cite{zhang2021test} have discussed the existing challenging and severe bias issue in imbalanced distributed test set. When the test set distribution is ideally known or well estimated, Bayias enables to overcome the bias issues in such scenarios by considering the priors. In contrast, CE loss implicitly requires train set and test set same distributed, while LA loss requires test set balanced distributed. LA and CE losses are powerless in other scenarios, while Bayias is flexible and capable with imbalanced distributed test sets, i.e., the constant term $\log C$ can change to be the test set distribution $\log \pi '$ to deal with such scenarios.

We sample test sets under different distribution with test imbalance factor $\rho'$ (i.e., long-tail ($\rho'>1$), balanced ($\rho'=1$), and reversed long-tail ($\rho'<1$)), corresponding to different imbalance factor $100,10,1,0.1,0.01$. The following comparisons in Tab.\ref{Tab.testimb} show the necessity of considering test distribution's influence, which reveals our motivation to deal with bias issues. When train and test sets follow the same distribution, Bayias is equal to CE and outperforms LA. When the distribution of train and test set is different severely (e.g., $\rho'=100$ and $0.01$), Bayias outperforms both CE and LA. Similar experiments and conclusions can be obtained from \cite{LADE} as well.

\begin{table*}[h!]
\setlength{\tabcolsep}{10pt}
\topcaption{Comparisons of CE, LA and Bayias loss on CIFAR-100-LT class-imbalanced test set. Note that the imbalanced test set is sampled by discarding some data. Therefore, the instance numbers of test samples vary in different imbalance factors, and longitudinal comparisons can reflect the effectiveness of Bayias. }
\resizebox{\textwidth}{!}{%
\begin{tabular}{l|ccccc|ccccc}
\toprule
Train set             & \multicolumn{5}{c|}{\textbf{CIFAR-100-LT-10}} & \multicolumn{5}{c}{\textbf{CIFAR-100-LT-100}} \\ \midrule
$\rho'$ & 100     & 10      & 1       & 0.1    & 0.01   & 100     & 10      & 1       & 0.1    & 0.01   \\ \midrule
CE                    & 69.73   & 63.67   & 55.70   & 48.48  & 43.13  & 63.54   & 52.74   & 38.32   & 25.00  & 14.03  \\
LA                    & 61.94   & 60.14   & 59.87   & 56.12  & 54.06  & 59.13   & 51.06   & 43.89   & 31.14  & 26.42  \\
Bayias                & \textbf{71.89} & \textbf{63.36} & \textbf{60.02} & \textbf{58.20} & \textbf{62.32} & \textbf{64.90} & \textbf{54.15} & \textbf{43.92} & \textbf{36.20} & \textbf{35.42} \\ \bottomrule
\end{tabular}%
}
\label{Tab.testimb}
\end{table*}

\subsection{{Comparisons with the two-stage method}} \label{Apdx.cmpMiSLAS}

We propose UniMix and Bayias to improve model \textit{calibration} by tackling the bias issues caused by imbalanced distributed train and test set. We improve the model calibration and accuracy simultaneously in an end-to-end manner. However, there are some methods to ameliorate calibration in post-hoc (e.g., temperature scaling \cite{Icml/Calibration-NN}) or two-stage (e.g., label aware smoothing \cite{DBLP:conf/cvpr/MiSLAS}) way. As we discussed above, such pipelines will be effective and we integrate our proposed methods with one of them for further comparison. The compared results are illustrated in Tab.\ref{Tab.cmpmislas}.

\begin{table*}[h!]
\setlength{\tabcolsep}{10pt}
\topcaption{Comparisons with state-of-the-art two-stage method on CIFAR-100-LT.}
\resizebox{\textwidth}{!}{%
\begin{tabular}{l|l|cc|cc|cc}
\toprule
\multicolumn{2}{l|}{Dataset}        & \multicolumn{2}{c|}{CIFAR-100-LT-10} & \multicolumn{2}{c|}{CIFAR-100-LT-50} & \multicolumn{2}{c}{CIFAR-100-LT-100} \\ \midrule
\multicolumn{2}{l|}{Metric (\%)}    & Accuracy          & ECE              & Accuracy          & ECE              & Accuracy          & ECE              \\ \midrule
\multirow{2}{*}{MiSLAS} & 1st-stage & 58.7              & 3.91             & 45.6              & 6.00             & 40.3              & 10.77            \\
                        & 2nd-stage & \textbf{63.2}     & \textbf{1.73}    & 52.3              & \textbf{2.25}    & 47.0              & 4.83             \\ \midrule
\multirow{2}{*}{Ours}   & 1st-stage & 61.1              & 7.79             & 51.8              & 4.91             & 47.6              & 3.24             \\
                        & 2nd-stage & 63.0              & 1.98             & \textbf{52.6}     & 3.45             & \textbf{48.3}     & \textbf{1.44}    \\ \bottomrule
\end{tabular}%
}
\label{Tab.cmpmislas}
\end{table*}

MiSLAS adopts \textit{mixup} for the 1st-stage and label aware smoothing (LAS) for 2nd-stage. We add the LAS in additional $10$ epochs following our vanilla pipeline to build a two-stage manner. Our method generally outperforms MiSLAS in the 1st-stage and can achieve on par with accuracy and lower ECE. In specific, MiSLAS mainly improves accuracy and ECE in the 2nd-stage, i.e., classifier learning, by a large margin, which indicates LAS's effectiveness. We adopt the LAS to replace Bayias compensated CE loss in the 2nd-stage and can further improve accuracy and ECE, especially in the more challenging CIFAR-100-LT-100. However, the improvement of the 2nd-stage is not significant compared to MiSLAS. Since the LAS and Bayias compensated CE loss are both modifications on standard CE loss, it is hard to implement both of them simultaneously. We suggest that the LAS plays similar roles as Bayias compensated CE loss does, which explains the limited performance improvement.

\setcounter{equation}{0}
\setcounter{table}{0} 
\setcounter{figure}{0}
\setcounter{corollary}{0}
\renewcommand{\thetable}{E\arabic{table}}
\renewcommand{\thefigure}{E\arabic{figure}}
\renewcommand{\theequation}{E.\arabic{equation}}

\section{Additional discussion}
\textbf{What is the difference between UniMix and traditional over-sample approaches?} We take representative over-sample approach SMOTE \cite{Jair/SMOTE} for comparisons. SMOTE is a classic over-sample method for imbalanced data learning, which constructs synthetic data via interpolation among a sample's neighbors in the same class. However, the authors in \cite{Nips/On_Mixup_Training} illustrate that such interpolation without labels is negative for classification calibration. In contrast, \textit{mixup} manners take elaborate-designed label mix strategies. The success of label smoothing trick \cite{Cvpr/Bag-tricks-cls} and \textit{mixup}s imply that the fusion of label may play an important role in classification accuracy and calibration. Different from other methods, our UniMix pays more attention to the tail when mixing the label and thus achieves remarkable performance gain.

\textbf{Why choose $\bm{\xi^*_{i,j} \sim \mathscr{U}(\pi_{y_i},\pi_{y_j},\alpha, \alpha)}$?} As we have discussed the motivation of UniMix Factor before, it is intuitive to set $\xi_{i,j} = \pi_{y_j} / (\pi_{y_i} + \pi_{y_j})$. Although satisfactory performance gain is obtained on CIFAR-LT in this manner, we fail on large-scaled dataset like ImageNet-LT. We suspect that the $\xi^*_{i,j}$ will be close to $0$ or $1$ when the datasets become extremely imbalanced and hence the effect of our mixing manner disappears. To improve the robustness and generalization, we transform the origin $Beta(\alpha, \alpha)$ distribution ($\alpha \leq 1$) to maximize the probability of $\xi_{i,j} = \pi_{y_j} / (\pi_{y_i} + \pi_{y_j})$ and its vicinity. We set $\alpha=0.5$ on CIFAR-LT and keep the same $\alpha$ as \textit{mixup} and Remix on ImageNet-LT and iNaturalist 2018.

\textbf{Why does Bayias work?} According to Bayesian theory, the estimated \textit{likelihood} is positive correlation to \textit{posterior} both in balance-distributed train and test set, with the same constant coefficient of \textit{prior} and evidence factor. However, though evidence factor is regraded as a constant in LT scenarios, the LT dataset suffers seriously skewed \textit{prior} of each category in the train set, which is extremely different from the balanced test set. Hence, the model based on maximizing \textit{posterior} probability needs the to compensate the different priors firstly. From the perspective of optimization, Deep Neural Network (DNN) is a non-convex model, minuscule bias will eventually cause disturbances in the optimization direction, resulting in parameters converging to another local optimal. In addition, $\log(\pi_i) + \log(C)$ is consistent with traditional balanced datasets (i.e., $\pi_i \equiv 1/C$), which turns to be a special case (i.e., $\log(1/C)+\log(C) \equiv 0$) of Bayias. Furthermore, our Bayias can compensate any discrepancy between train set and test set via directly setting the Bayias as $\log(\pi_{y_i}) - \log(\pi'_{y_i})$, where $\pi_{y_i}$ is the label prior in train set and $\pi'_{y_i}$ is the label prior in test set.

\end{document}